\renewcommand{\cite}{\citep}
\theoremstyle{plain}
\newtheorem{theorem}{Theorem}[section]
\newtheorem{definition}[theorem]{Definition}
\newtheorem{assumption}[theorem]{Assumption}
\theoremstyle{remark}
\def\eqref#1{equation~\ref{#1}}
\def\1{\bm{1}}
\DeclareMathAlphabet{\mathsfit}{\encodingdefault}{\sfdefault}{m}{sl}
\SetMathAlphabet{\mathsfit}{bold}{\encodingdefault}{\sfdefault}{bx}{n}
\newcommand{\R}{\mathbb{R}}
\DeclarePairedDelimiterX{\KLdivx}[2]{\big(}{\big)}{%
  #1\;\delimsize\|\;#2%
}
\DeclareMathOperator*{\argmax}{arg\,max}
\DeclareMathOperator*{\argmin}{arg\,min}
\newtheorem{lemma}{Lemma}
\newtheorem{property-non}{Property}
\newcommand*{\bracks}[1]{\left(#1\right)}  %
\newcommand*{\sbracks}[1]{\left[#1\right]}  %
\newcommand{\LC}{\texttt{LC}\xspace}
\definecolor{green1}{rgb}{0.01, 0.62, 0.45}
\definecolor{blue1}{rgb}{0.00, 0.45, 0.70}
\definecolor{blue2}{rgb}{0.612, 0.8, 0.902}
\definecolor{purple1}{rgb}{0.68, 0.45, 0.63}
\definecolor{red1}{rgb}{1.0, 0.1, 0.3}
\definecolor{red2}{rgb}{1.0, 0.77, 0.80}
\newcommand{\E}{\mathbb{E}}
\newcommand{\RL}{\text{RL}}
\newcommand{\SFT}{\text{SFT}}
\newcommand{\Base}{\text{Base}}
\newcommand{\LLM}{\text{LLM}}
\newcommand{\Sc}{R}
\newcommand{\Lall}{\mathcal{L}_{\text{all}}}
\newcommand{\Lr}{\mathcal{L}_{\text{r}}}
\DeclareMathSymbol\drule  \mathord{bbold}{"01}
\newcommand{\druleall}{{\drule_{\text{all}}}}
\newcommand{\drulel}{{\drule_{\mathcal{L}}}}
\newcommand{\DeltaAns}{\Delta^{|\mathcal{Y}|}}
\newcommand{\f}{f}
\newcommand{\reader}{reader}
\newcommand{\forecast}{\f(x, z)}
\newcommand{\surrforecast}{\widetilde{\f}(x, z)}
\DeclareRobustCommand{\hlblue}[1]{{\sethlcolor{blue2}\hl{#1}}}
\DeclareRobustCommand{\hlred}[1]{{\sethlcolor{red2}\hl{#1}}}
\newcommand{\scr}[1]{\ensuremath{{\scriptstyle #1}}}
\renewrobustcmd{\bfseries}{\fontseries{b}\selectfont}
\renewrobustcmd{\boldmath}{}
\newrobustcmd{\B}{\bfseries}
\newtcolorbox{qualitativeBox}{
  colback=gray!10, %
  colframe=gray!20, %
  rounded corners, %
  boxrule=0.5pt, %
  left=10pt, %
  right=10pt, %
  top=5pt, %
  bottom=5pt, %
  boxsep=5pt, %
}
\newtcolorbox{ourMethodBox}{
  colback=blue1!10, %
  colframe=gray!20, %
  rounded corners, %
  boxrule=0.5pt, %
  left=10pt, %
  right=10pt, %
  top=5pt, %
  bottom=5pt, %
  boxsep=5pt, %
}
\newtcolorbox{theirMethodBox}{
  colback=red1!10, %
  colframe=gray!20, %
  rounded corners, %
  boxrule=0.5pt, %
  left=10pt, %
  right=10pt, %
  top=5pt, %
  bottom=5pt, %
  boxsep=5pt, %
}
\begin{document}

\title{Linguistic Calibration of Long-Form Generations}
\author{
    \large{Neil Band, ~Xuechen Li, ~Tengyu Ma, ~Tatsunori Hashimoto}\linebreak
    \large{Stanford University}\linebreak
    {\texttt{\{nband, lxuechen, tengyuma, thashim\}@cs.stanford.edu}}
}

\date{}

\maketitle

\begin{abstract}
Language models (LMs) may lead their users to make suboptimal downstream decisions when they confidently hallucinate.
This issue can be mitigated by having the LM verbally convey the probability that its claims are correct, but existing models cannot produce long-form text with calibrated confidence statements.
Through the lens of decision-making, we define linguistic calibration for long-form generations: an LM is linguistically calibrated if its generations enable its users to make calibrated probabilistic predictions.
This definition enables a training framework where a supervised finetuning step bootstraps an LM to emit long-form generations with confidence statements such as ``I estimate a 30\% chance of...'' or ``I am certain that...'', followed by a reinforcement learning step which rewards generations that enable a user to provide calibrated answers to related questions.
We linguistically calibrate Llama 2 7B and find in automated and human evaluations of long-form generations that it is significantly more calibrated than strong finetuned factuality baselines with comparable accuracy.
These findings generalize under significant domain shifts to scientific and biomedical questions and to an entirely held-out person biography generation task.
Our results demonstrate that long-form generations may be calibrated end-to-end by constructing an objective in the space of the predictions that users make in downstream decision-making.
\end{abstract}

\section{Introduction}
\label{sec:intro}

\begin{figure}[t]
    \centering    
    \includegraphics[width=\textwidth]{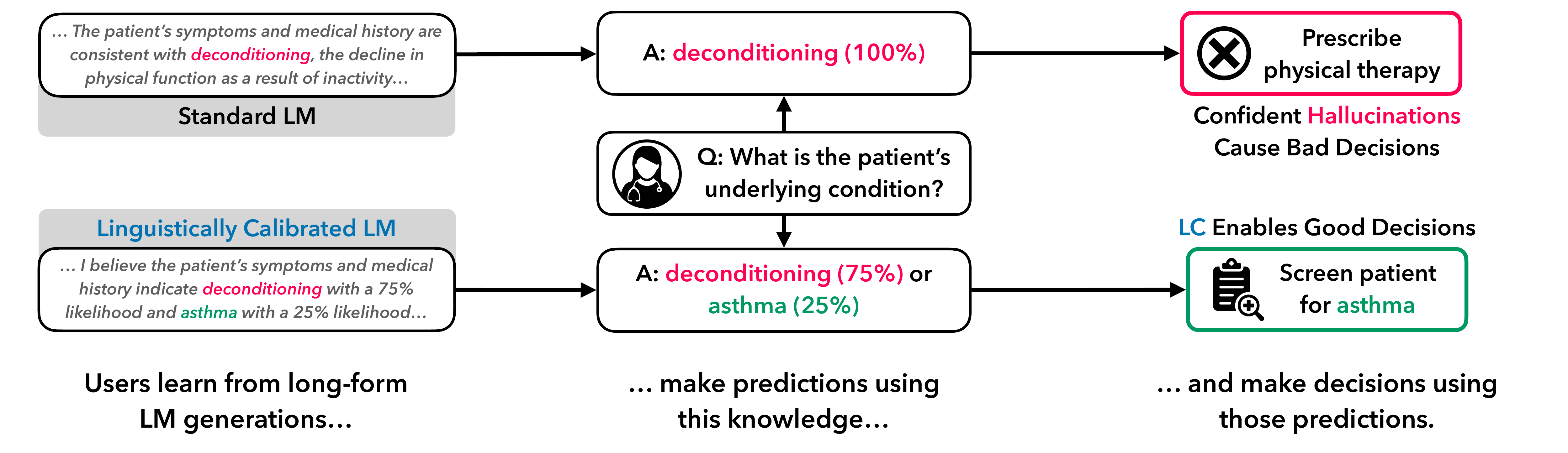}
    \caption{
    \textbf{Illustrative example of linguistic calibration.}
    We define linguistic calibration of long-form generations (\LC) as calibrating an LM's generations in a way that leads to calibrated downstream user forecasts.
    We apply \LC to train an LM that emits calibrated statements of confidence \textit{in natural language}, enabling better downstream decisions.
    \textbf{Left:} users read long-form generations (\textit{e.g.,} a doctor reading an LM-generated clinical note).
    \textbf{Middle:} to decide the patient's treatment, the doctor first forecasts the patient's underlying condition.
    \textbf{Upper Right:} when standard LMs lack knowledge, they hallucinate confidently, leading to a suboptimal decision (treating the wrong condition).
    \textbf{Lower Right:} 
    even if the base LM cannot be confidently correct, linguistic calibration encourages the LM to spread probability over plausible claims, enabling a better decision.
    }
    \label{fig:figure_1}
\end{figure}

The claims made by language models (LMs) are increasingly used to inform real-world decisions, \textit{e.g.}, what to order at a restaurant, what information to provide someone else about a topic, or which code completion to accept.
However, LMs have knowledge gaps which manifest as hallucinations \cite{ziwei2023survey, huang2023survey}.
Currently, when an LM lacks knowledge about a topic, it will do one of two things: hallucinate incorrect claims with complete confidence, or, in the case of a few strong closed-source models \cite{openai2023gpt4, anthropic2023claude}, abstain from making claims.%

Confident hallucinations are especially harmful. 
They decrease users' trust in the errant LM and broadly make LMs unsuitable for settings where factuality is paramount such as medicine \cite{Thirunavukarasu2023} and law \cite{dahl2024large}.
Perhaps most importantly, they lead the user to \textit{confidently make poor decisions} (Fig.~\ref{fig:figure_1}).
However, even abstentions are suboptimal, because they provide the user with no plausible claims and their likelihoods.

Linguistic calibration \cite{mielke-etal-2022-reducing}---conveying confidence levels in natural language that equal the likelihood that one's claims are correct---could mitigate the harms of hallucination.
If an LM was linguistically calibrated in a manner interpretable to its users, they could make good decisions regardless of the LM's underlying knowledge.

For example, suppose a clinical LM generates a patient's case report, providing a diagnosis.
If the LM was unsure of the correct diagnosis, it could use \textit{numerical confidence} with its corresponding claim (``I estimate that the correct diagnosis is Condition A with 60\% confidence'').
Then, when the doctor faces a decision---deciding the patient's treatment---they have access to both a salient claim and an approximate likelihood of its correctness.
The manner of conveying confidence is limited only by the use of language: \textit{e.g.,} the LM could provide \textit{linguistic} confidence statements (``I am fairly sure that the correct diagnosis is A'') or many mutually exclusive claims (``I estimate that the correct diagnosis is A with 60\% confidence, though B or C are also possibilities.'').
However, both classic calibration methods like temperature scaling \cite{pmlr-v70-guo17a} and methods for LMs \cite{kadavath2022language, tian-etal-2023-just, kuhn2023semantic, mielke-etal-2022-reducing, lin2022teaching, jiang2021how} are restricted to classification or short outputs and hence cannot calibrate the many claims made in each long-form LM generation (cf. \S\ref{sec:related_work} for related work).

We make progress on this challenge by leveraging the connection between \textit{calibration and decision theory} 
\cite{zhao2021calibrating, zhao2021right}.
LMs fit cleanly within the framework of decision-making. 
Users query LMs, learn from their generations, and later encounter decision-making tasks with associated questions (``What is the patient's condition?'').
They forecast an answer based on what they learned, and finally make a decision (which treatment to provide) using their forecast for which they receive some reward.

Because linguistic calibration improves downstream user decisions, we might hope to calibrate an LM by directly optimizing real-world downstream rewards.
While intuitively appealing, this process is challenging; it entails tracing the knowledge learnt by users through to their decisions, and further propagating the associated rewards back to the LM.

\vspace{-0.7em}
\paragraph{Our contributions.} We propose a definition of linguistic calibration for long-form generations (\LC) which sidesteps the difficulty of tracing real-world rewards and enables training an LM to emit calibrated confidence statements in long-form generations.
Our contributions are as follows:
\begin{itemize}[topsep=0pt,leftmargin=4mm]
    \setlength\itemsep{0.08mm}
    \item We define an LM emitting long-form generations to be \textbf{linguistically calibrated} if it enables users to produce \textit{calibrated forecasts} relevant to their decision-making tasks, which in turn enable optimal decision-making.
    \item We instantiate this definition in a \textbf{training objective and framework} that calibrates long-form generations through a decision-theoretic lens.
    Our training framework first bootstraps an LM to express confidence statements with supervised finetuning. Then, it optimizes our objective using reinforcement learning (RL), rewarding the LM policy for generating text that enables calibrated forecasts on related questions.
    \item We linguistically calibrate Llama 2 7B using our training framework and find it significantly improves calibration versus strong baselines finetuned for factuality while matching their accuracy, in human and API-based LLM evaluation of long-form generations. 
    We also show that linguistic calibration has zero-shot transfer beyond the training task.
    Specifically, an LM calibrated using a single off-the-shelf question-answering dataset is also calibrated on significantly out-of-domain scientific question-answering datasets, and on an entirely held-out task of biography generation.
\end{itemize}
Instead of working in the space of text, our decision-based approach constructs an objective in the space of the predictions that users make in the process of decision-making.
This makes the standard calibration machinery of proper scoring rules \cite{gneiting2007strictly} tractable as an objective for end-to-end calibration of long-form generations.

\section{Setup}
\label{sec:prelim}

Our goal is to formulate a tractable objective that enables the end-to-end linguistic calibration of long-form LM generations. 
To begin, we define what it means for an LM emitting long-form generations to be linguistically calibrated.

\subsection{Linguistic Calibration of Long-Form Generations}
\label{subsec:linguistic_calibration}

Our definition is motivated by the process through which users learn from LM generations and later use their knowledge to answer questions.
This process of \textbf{LM-assisted user forecasting} allows us to turn the problem of calibrating long-form text into the problem of calibrating probabilistic forecasts.

\paragraph{LM-assisted user forecasting.}
First, the user prompts an LM $\pi$ with an open-ended query $q$ (\textit{e.g.,} ``Generate a clinical note for this patient...'').
Next, the LM $\pi$ generates a long-form context $z \sim \pi(z \mid q)$, \textit{e.g.,} the clinical note.

At some point in the future, the user encounters a question $x$ associated with the query $q$, such as $x = $ ``What is the correct diagnosis of the patient?''
Let $y \in \mathcal{Y}$ be the answer to the question $x$, and $p(y \mid x)$ be the ground-truth conditional distribution over answers.
We note that given $x$, the answer $y$ does not depend on either the query $q$ or the long-form generation $z$, \textit{i.e.,} we have the conditional independence statement $Y \perp (Q, Z) \mid X$.
Altogether, we have joint distribution $p(q, x, y, z) := p(q, x, y) \pi(z \mid q)$.

Since $p(y \mid x)$ is unknown to the user, they form a probabilistic forecast over possible answers $y \in \mathcal{Y}$ as a probability distribution in $\DeltaAns$.
In our example, the doctor can make a better forecast over possible diagnoses by first reading the LM-generated clinical note.
In other words, because the long-form generation $z$ is salient to the question $x$, the user conditions on it to make their forecast; the user applies their \textbf{\reader} function $f: \mathcal{X} \times \mathcal{Z} \rightarrow \DeltaAns$ to the question $x$ and long-form generation $z$ in order to form a \textbf{forecast} $\forecast \in \DeltaAns$.
For example, we might have $\forecast = [0.2, 0.8]^\top$ placing $20\%$ mass on Condition A and $80\%$ on Condition B.
We will denote the probability of answer $y$ under the forecast as $\forecast_y$.

\paragraph{Defining linguistic calibration for long-form generations.}
LM-assisted user forecasting motivates a definition of linguistic calibration in terms of the calibration of a downstream \reader.
Intuitively, an LM is \textbf{linguistically calibrated} with respect to a downstream user if its long-form generations $z$ enable that user to make calibrated forecasts on question-answer pairs $(x, y)$ using their \reader~function $\f$. 
This notion of linguistic calibration is a property of an LM, and is defined with respect to a particular user and notion of classifier calibration.

\begin{definition}[Linguistic Calibration of Long-Form Generations]
An LM $\pi$ emitting long-form generations $z \sim \pi(z \mid q)$ is \textbf{linguistically $\phi$-calibrated} (for $\phi \in \{\textup{distribution, classwise, confidence}\}$) with respect to a \reader $\f: \mathcal{X} \times \mathcal{Z} \rightarrow \DeltaAns$ if and only if $f$ is $\phi$-calibrated on the joint distribution $p(x,y,z)$.
\label{defn:ling_cal}
\end{definition}

Prior work established linguistic calibration for single-claim utterances \cite{mielke-etal-2022-reducing}.
Our definition generalizes linguistic calibration to long-form generations, which we will show has both theoretical and empirical benefits.
Our focus is on calibrating long-form text, so future references to \textbf{linguistic calibration}, \textbf{linguistically calibrated}, \textit{etc.} refer to Defn.~\ref{defn:ling_cal}.
\S\ref{sec:related_work} discusses related work.

Defn. \ref{defn:ling_cal} reduces the problem of calibrating long-form text to calibrating over a smaller space of answers $\mathcal{Y}$.
Each notion of linguistic $\phi$-calibration is equivalent to a traditional notion of classifier calibration $\phi$ applied to the reader $f$.
To be self-contained, we next provide examples in our notation.
Readers familiar with classifier calibration could skip to \S\ref{subsec:user_decision_problem}.

\paragraph{Examples of linguistic $\phi$-calibration.} Consider \textbf{distribution calibration} as a specific notion of classifier calibration \cite{brocker2009reliability, kull2015novel}.
Intuitively, a classifier $g: \mathcal{X}' \rightarrow \DeltaAns$ is distribution calibrated if its forecast $g(x') \in \DeltaAns$ is faithful to the predictive uncertainty \cite{minderer2021revisiting}; among all inputs $x' \in \mathcal{X}'$ for which the forecasted probability of class $y$ is 0.4, \textit{i.e.,} $g(x')_y = 0.4$, 40\% of those inputs should have label $y$.
Taking $\1_y \in \{0, 1\}^{|\mathcal{Y}|}$ as the one-hot vector for $y$, $g$ is distribution calibrated if
\begin{align*}
    \mathbb{E}\left[\1_{y} \mid g(x') = p~\right] = p,\quad~\forall p \in \DeltaAns.
\end{align*}
Now taking our reader $\f := g$ as the classifier and both the question and LM generation as the input $(x, z) := x'$, we say that an LM $\pi$ is \textbf{linguistically distribution calibrated} with respect to a reader $\f$ if
\begin{align*}
    \mathbb{E}\left[\mathbf{1}_{y} \mid f(x, z) = p~\right] = p,\quad~\forall p \in \DeltaAns.
\end{align*}
We provide more examples of this correspondence in Table~\ref{tab:linguistic_calibration_examples}.
These include the weaker notion of \textbf{confidence calibration} \cite{pmlr-v70-guo17a}, which means that among the examples with top forecasted probability equal to $\beta$, the accuracy over those examples is also $\beta$.

\begin{table*}[t]
\small
\centering
\caption{
    Each notion of linguistic $\phi$-calibration corresponds to a traditional notion of classifier calibration $\phi$ applied to the \reader~$\f: \mathcal{X} \times \mathcal{Z} \rightarrow \DeltaAns$.
}
\vspace{-3pt}
\begin{tabular}{@{\extracolsep{2pt}}lc@{}}
\midrule
\midrule
\textbf{Notion of Classifier Calibration $\phi$}         &
\textbf{Condition for Linguistic $\phi$-Calibration} \\
\midrule
Distribution Calibration \cite{kull2015novel} & $\mathbb{E}\sbracks{
\mathbf{1}_{y} \mid f(x, z) = p
} = p,\quad~\forall p \in \DeltaAns.$ \\ 
\midrule
Classwise Calibration \cite{kull2019beyond} & $\Pr \bracks{
        Y = y \mathrel{\Big|} f(x, z)_y = \beta
} = \beta,\quad~\forall y \in \mathcal{Y}, \beta \in [0, 1].
$ \\ 
\midrule
Confidence Calibration \cite{pmlr-v70-guo17a} & $\Pr \bracks{ 
        Y = \argmax_y \f(x, z)_y \mathrel{\Big|} \max_y \f(x, z)_y = \beta
    } = \beta,\quad~\forall \beta \in [0, 1].$ \\
\midrule
\midrule
\end{tabular}
\label{tab:linguistic_calibration_examples}
\end{table*}

\subsection{From Calibration to Optimal Decisions}
\label{subsec:user_decision_problem}

The process of LM-assisted user forecasting models how users learn about a variety of topics by reading LM generations and forecast answers to related questions.
We now describe how they use their forecasts to inform downstream decisions.
This decision-based perspective leads to our objective for linguistic calibration.

\paragraph{LM-assisted user decision-making.} The user will make a decision by selecting an action $a \in \mathcal{A}$ (which treatment to provide the patient) according to their forecast $\forecast \in \DeltaAns$ and loss function $L: \mathcal{A} \times \mathcal{Y} \rightarrow \mathbb{R}$.
For example, the user may make a Bayes-optimal decision by choosing the action which minimizes their loss under the forecast distribution:
$a^* = \argmin_a \E_{\widehat{y} \sim \forecast} [L(a, \widehat{y})]$. Based on the realized answer $y \sim p(y \mid x)$, the user suffers a loss of $L(a^*, y)$.

Ideally, one would follow this LM-assisted user decision-making process and directly train an LM to emit generations $z$ which minimize the user's downstream loss $L(a^*, y)$.
However, it is difficult to obtain real-world rewards from decision-making, and moreover to obtain a real-world distribution over queries to LMs and related question-answer pairs $p(q, x, y)$.
We overcome both of these challenges by \textit{optimizing for linguistic calibration} over a representative surrogate distribution.
In other words, we encourage the LM to emit generations $z$ that enable the user to provide calibrated answers to questions $x$.

In the rest of this section, we will introduce our objective for linguistic calibration and demonstrate its optimality properties.
We discuss our surrogate distribution in \S\ref{subsec:decision_task_data}.

\paragraph{Linguistic calibration implies informed decision-making.}
Why is optimizing for linguistic calibration a reasonable proxy for directly optimizing downstream losses $L(a^*, y)$?
\citet{zhao2021calibrating} prove that calibrated forecasts $\forecast$ enable a user to make informed decisions.
Specifically, the user may use their calibrated forecast to estimate the expected loss of a decision-making strategy as if they had access to the ground-truth distribution $p(y \mid x)$.
This \textbf{accurate loss estimation} implies that Bayes-optimal decision-making outperforms other decision-making strategies.
Altogether, because linguistic calibration implies calibrated forecasts by definition (cf. Defn.~\ref{defn:ling_cal}), it will provide optimal decision-making guarantees.

\paragraph{Guarantees for weaker notions of calibration.} 
Each notion of linguistic $\phi$-calibration (cf. Table \ref{tab:linguistic_calibration_examples}) in fact corresponds to a different decision-making guarantee.
Even if our optimization process cannot achieve linguistic distribution calibration, achieving linguistic $\phi$-calibration with respect to a weaker notion $\phi$ such as classwise or confidence calibration will still provide a (weaker) decision-making guarantee.
Appendix \ref{app:decision_and_calibration} provides a proof of this correspondence (\Cref{prop:conf_calib_implies_lr_decision_calibration}).

Lastly, we note that this connection between linguistic calibration and optimal decision-making motivates our evaluation metric of \textbf{reader expected calibration error} (defined in \S\ref{subsec:exp_setup}).

\subsection{Training Objective for Linguistic Calibration}
\label{subsec:linguistic_calibration_training_objective}

Because linguistic calibration of long-form generations is equivalent to the calibration of downstream probabilistic forecasts, we can optimize for it with the standard calibration machinery of \textbf{scoring rules}.
We use a \textbf{strictly proper scoring rule} as our optimization objective, with the guarantee that it is maximized if and only if user forecasts $\forecast$ equal the ground-truth distribution $p(y \mid x)$.

\paragraph{Proper scoring rules.}
There is a rich literature on scoring rules \cite{gneiting2007strictly, savage1971elicitation}, which measure the quality of a forecast.
Specifically, a scoring rule $\Sc$ scores a forecast $\widehat{p} \in \DeltaAns$ and outcome $y\in \mathcal{Y}$, \textit{i.e.,} $\Sc: \DeltaAns \times \mathcal{Y} \rightarrow \R$.
\textbf{Proper} scoring rules have the desirable property that the true distribution of $y$, denoted by $p$, is a maximizer.
Formally, $\Sc$ is proper if
$$
    \E_{y \sim p}[\Sc(p, y)] \geq \E_{y \sim p}[\Sc(\widehat{p}, y)],\;~\forall p, \widehat{p} \in \DeltaAns.
    \label{eq:psr_body}
$$
$\Sc$ is \textbf{strictly proper} if the unique maximizer of the expected reward is the true probability $p$.
Strictly proper scoring rules are ubiquitous in ML, \textit{e.g.,} the (negative) log loss or Brier score \cite{brier1950verification}, and are natural objectives for calibration methods such as temperature \cite{pmlr-v70-guo17a} or Platt scaling \cite{platt1999probabilistic}.

\paragraph{Linguistic calibration objective.} We simply choose the negative log loss as our strictly proper scoring rule, that is, we use $\Sc_{\LC}(p, y) := \log p_{y}$, and therefore our objective function is
\begin{align}
    \Sc_{\LC}(\forecast, y) := \log \forecast_y.
    \label{eq:logloss}
\end{align}

By strict propriety, if our objective is maximized, then the user's forecast $\forecast$ equals the ground-truth $p(y \mid x)$ and we obtain linguistic distribution calibration.

Proper scoring rules do not guarantee that weaker notions of calibration hold near their optima.
However, it is well-established empirically that recalibrating classifiers by optimizing proper scoring rules such as the log loss approximately achieves weaker notions of calibration such as confidence calibration \cite{pmlr-v70-guo17a, ovadia2019uncertainty, minderer2021revisiting}.

\section{Method}
\label{sec:method}

In this section, we describe our training framework for linguistically calibrating an LM.
We begin by constructing a surrogate distribution to approximate the distribution of question-answer pairs that users encounter during real-world decision-making.
Then, we apply a two-step training framework (Fig.~\ref{fig:fig_2_linguistic_calibration}).
First, we obtain an LM with some ability to express confidences in a long-form generation.
Second, we use it as an RL policy and optimize our proper scoring rule objective end-to-end, with supervision from the surrogate question-answer distribution.

\begin{figure*}[t]
    \centering
    \includegraphics[width=1.0\textwidth]{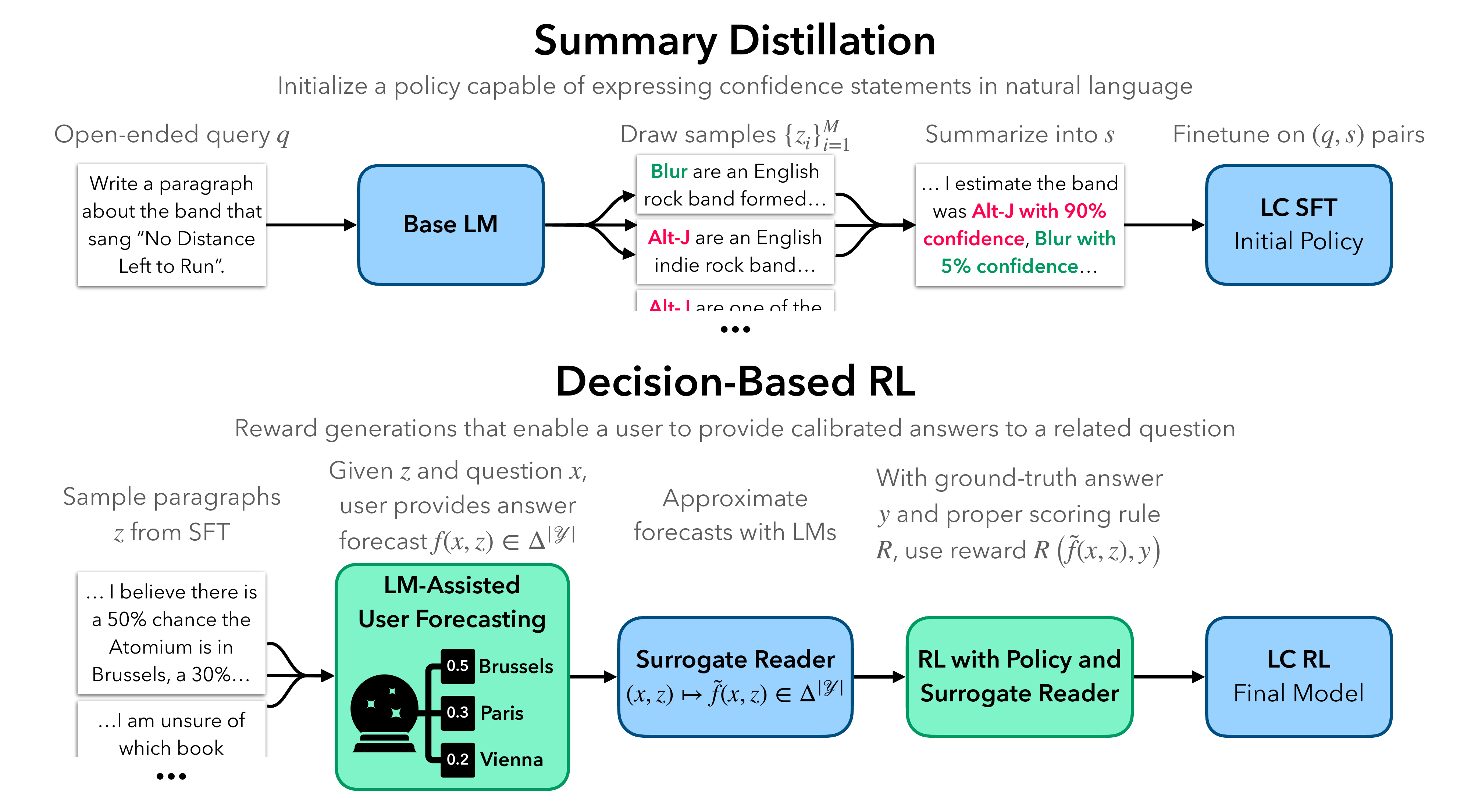}
    \caption{
    Our training framework for linguistic calibration of long-form generations (\LC) calibrates the long-form generations of an LM by calibrating downstream user forecasts.
    It involves two steps: \textbf{summary distillation (Upper)} and \textbf{decision-based RL (Lower)}.
    Datasets are in white, LMs in \textcolor{blue1}{blue}, and steps involving user or surrogate forecasts are in \textcolor{green1}{green}.
    }
    \label{fig:fig_2_linguistic_calibration}
\end{figure*}

\subsection{Generating Synthetic Supervision for Long-Form Calibration}
\label{subsec:decision_task_data}

In our setup (\S\ref{subsec:linguistic_calibration}), the process of LM-assisted user forecasting involves a tuple $(q, z, x, y)$, where $q$ are user-written queries to the LM, $z \sim \pi(z \mid q)$ are long-form generations sampled from the LM, and $(x, y)$ is a related question-answer pair.

Our training framework will closely follow this process and therefore requires access to a dataset of tuples $(q, z, x, y)$.
We can synthetically generate this dataset in a manner agnostic to the downstream task, using arbitrary question-answer pairs $(x, y)$.
In this work, we make a particular choice to use $(x, y)$ pairs from off-the-shelf question-answering datasets.

Specifically, we first sample a question-answer pair $(x, y) \sim p(x, y)$ from a question-answering dataset, which is implicitly associated with some actual decision-making tasks.
Next, we need an LM query $q$ such that $z \sim \pi(z \mid q)$ is a long-form generation \textit{salient to} $(x, y)$.
We obtain one by converting the question $x$ into an open-ended query ($q :=$ ``Write a paragraph about \{$x$\}'') using an API-based LLM. Altogether, this gives us a tuple $(q, z, x, y)$ where $(q, x, y) \sim p(q, x, y)$ and $z \sim \pi(z \mid q)$.

Next, we describe our two-step training framework.

\subsection{Summary Distillation}
\label{subsec:summ_distill}

\textbf{Summary distillation} (Fig.~\ref{fig:fig_2_linguistic_calibration} Upper) bootstraps a base LM $\pi_\Base$ to have some ability to express its confidence in long-form natural language generations.
We follow a simple approach inspired by Self-Consistency \cite{wang2023selfconsistency}, which obtains calibrated LM confidences for short answer questions by computing a statistic of many output samples.
Summary distillation generalizes this idea to longer generations, and then finetunes on our equivalent of the statistics.

First, we provide the base LM with an open-ended query $q$ and sample many long-form responses: $\{z_i\}_{i = 1}^M \sim \pi_\Base(z \mid q)$.
To obtain statements of confidence that are faithful to the base model's internal confidence levels, we prompt an API-based LLM to summarize these samples into a single consensus paragraph $s$ with statements of confidence based on the frequency of claims: $s \sim \text{Summarize}(s \mid z_1, \dots, z_M)$.
For example, we would expect the summary shown in Fig.~\ref{fig:fig_2_linguistic_calibration} (Upper) if $90\%$ of the $M$ samples answer the question with \textcolor{red1}{Alt-J} and $5\%$ with \textcolor{green1}{Blur}.
We perform frequency-based summarization at the \textit{claim level}, meaning that each summary paragraph $s$ contains multiple claims with various confidence levels and styles (\textit{e.g.,} numerical and linguistic).

Finally, to distill these extracted confidences back into the base model, we finetune $\pi_\Base$ on the dataset of open-ended query and summary pairs $\{(q^{(i)}, s^{(i)} ) \}_{i=1}^N$ to obtain the supervised finetuned (SFT) model $\pi_{\text{SFT}}$.
$\pi_{\text{SFT}}$ serves as a strong initial policy for the second RL-based step.

\subsection{Decision-Based RL}
\textbf{Decision-based RL} (Fig.~\ref{fig:fig_2_linguistic_calibration} Lower) linguistically calibrates a policy $\pi_{\RL}$ (initialized at $\pi_{\text{SFT}}$) by finetuning it to emit long-form generations $z$ that improve the calibration of the user forecast $\forecast$.

\paragraph{RL objective.}

We use our log likelihood proper scoring rule $\Sc_\LC$ (cf. \S\ref{subsec:linguistic_calibration_training_objective}) as our reward function.
Our RL objective optimizes $\Sc_\LC$ over our semi-synthetic distribution:
\begin{equation}
    \max_{\pi_\RL} \underset{(q, x, y) \sim p(q, x, y),~z \sim \pi_\RL(z \mid q)}{\E} \left[ \Sc_\LC \left(\forecast, y\right) \right].
    \label{eq:rl_objective_fn}
\end{equation}

\subsection{Implementation}
We next describe our instantiation of decision-based RL, which we used in our experiments.
However, we note that the notion of linguistic calibration defined in \S\ref{sec:prelim} is agnostic to these design decisions. 
Algorithm~\ref{alg:LC} presents the pseudocode.

\begin{figure}[t]
    \input{assets/algobox}
\end{figure}

\paragraph{Surrogate reader.} 
For our training framework to be as scalable as possible, we would ideally avoid the use of a human or LLM-simulated reader $\f$ in the RL loop.
We find that we can train a neural net surrogate reader $\widetilde{\f}: \mathcal{X} \times \mathcal{Z} \rightarrow \Delta^{|\mathcal{Y}|}$ which produces reasonable forecasts, because ``reading'' is \textit{not a fundamentally hard task}.
For example, if $z$ provides a clear list of possible answers to the question $x$ and associated percentage likelihoods, reading is a simple extractive task.

Using the surrogate, we optimize approximate reward $\Sc_\LC(\surrforecast, y) \approx \Sc_\LC (\forecast, y)$.
In our evaluation, we will test if our LM calibrated on this approximate reward generalizes to produce long-form generations $z$ which improve simulated LLM and human forecasts $\forecast$.

We cannot simply train a neural net to directly predict a softmax output $\surrforecast \in \Delta^{|\mathcal{Y}|}$, because $\mathcal{Y}$ is the vast space of all answers expressible in a finite-length string.
Instead, we decompose forecasting into two operations:

\begin{enumerate}[noitemsep,nolistsep]
    \item \text{ExtractAnswers}: $(x, z) \mapsto \text{ExtractAnswers}(x, z) \subset \mathcal{Y}$ extracts all possible answers to the question $x$ from the paragraph $z$. We implement this by finetuning a pretrained LM (RedPajama 3B, \citet{redpajama3b}).
    \item \text{ForecastProbs}: $(x, z, y') \mapsto [0, 1]$ assigns a probability to an answer $y'$ to question $x$ based on the paragraph $z$. We finetune $\pi_{\SFT}$ with a cross-entropy loss.
\end{enumerate}

We define the surrogate reader's forecast $\surrforecast$ as a categorical distribution with probability $\text{ForecastProbs}(x, z, y')$ on each answer $y' \in \text{ExtractAnswers}(x, z)$, and probability 0 on all others.
In this particular construction, we are not guaranteed that the surrogate forecast $\surrforecast$ will be normalized, but in practice we find that adding a regularization term is sufficient to enforce normalization:
\begin{align*}
    \Sc_{\LC}(\surrforecast, y) = &\log \surrforecast_{y} \!-\! 
    \lambda \left|1 \!-\! \textstyle \sum_{y' \in \mathcal{Y}} \surrforecast_{y'} \right|,
\end{align*}
where $\lambda > 1$ restores strict propriety (cf. \ref{subsec:app_reward_is_strictly_proper} for proof).
Lastly, we use a standard KL penalty from $\pi_{\text{SFT}}$ to mitigate over-optimization of the surrogate reader \cite{ouyang2022training}, giving us the following objective (with KL coefficient $\beta$):
\begin{equation}
    \max_{\pi_\RL} \underset{(q, x, y) \sim p(q, x, y),~z \sim \pi_\RL(z \mid q)}{\E} \left[ \Sc_\LC(\surrforecast, y) - \beta \log \frac{\pi_\RL(z \mid q)}{\pi_\SFT(z \mid q)} \right].
    \label{eq:rl_objective_fn_with_kl}
\end{equation}

See Appendix \ref{app:training_framework} for further implementation details.

\section{Experiments}
\label{sec:exp_results}

This section empirically validates our training and evaluation framework for linguistic calibration\footnote{We release all parts of our training and evaluation framework at \href{https://github.com/tatsu-lab/linguistic_calibration}{github.com/tatsu-lab/linguistic\_calibration.}}, demonstrating that it fulfills the following three goals:
\begin{enumerate}
    \item[\textbf{(1)}] \textbf{\LC provides better calibration with comparable or better accuracy.} We show that our linguistically calibrated LM $\pi_\LC$ emits long-form generations $z$ which improve the calibration of user forecasts with accuracy comparable to or better than strong baselines finetuned for factuality with RL.
    \item[\textbf{(2)}] \textbf{\LC is computationally tractable.} We show that $\pi_\LC$---which avoids the need to obtain many costly human forecasts by training with cheap surrogates---improves the calibration of \textit{human} forecasts at evaluation time. Moreover, we develop an automated framework to evaluate linguistic calibration with simulated forecasts and validate its agreement with crowdworkers.
    \item[\textbf{(3)}] \textbf{\LC generalizes well out-of-distribution.} We demonstrate that the improvement in forecast calibration due to adopting LM $\pi_\LC$ generalizes under significant domain shift to scientific question-answer distributions $p_{\text{OOD}}(x, y)$.
    We also evaluate $\pi_\LC$ on an entirely held-out task of person biography generation without any re-training, finding that $\pi_\LC$ produces calibrated claims throughout the long-form generation $z$ according to a fine-grained simulated evaluation.
\end{enumerate}

\subsection{Setup}
\label{subsec:exp_setup}

We use our training framework to linguistically calibrate Llama 2 7B, sourcing question-answer (QA) pairs $(x, y)$ (cf. \S\ref{subsec:decision_task_data}) from TriviaQA \cite{JoshiTriviaQA2017}.
We emphasize that our LMs produce \textit{long-form generations} $z$ on the question's topic, unlike previous works which calibrate models that directly predict a class distribution or short answer (cf. \S\ref{sec:related_work}).
We refer the reader to Appendix~\ref{app:training_framework} for further details on the training framework. 

\paragraph{Question-answering evaluation framework.}
Following our generative process during training (\S\ref{subsec:decision_task_data}), we use off-the-shelf QA datasets as a proxy for questions encountered during real-world decision-making, and evaluate the linguistic calibration of generations $z$ through the performance of downstream forecasts.
Specifically, for a held-out QA pair $(x, y)$, we convert $x$ into an open-ended query $q$, sample a long-form generation $z \sim \pi(z \mid q)$ from various LMs $\pi$, and evaluate the calibration and accuracy of forecast $\forecast$.

Naturally, this framework depends on which users are providing forecasts and how, \textit{i.e.,} the choice of reader $f$.
We are interested in the case where users strongly rely on the knowledge of the LM.
Therefore, we include instructions to the user (either simulated or human) to ignore their background knowledge about the correct answer when providing a forecast (cf. Appendix~\ref{app:sec_eval_details} for further evaluation details).

\paragraph{Reader expected calibration error.}
We measure the calibration of the reader $\f$ over the joint distribution $p(x, y, z)$ with \textbf{expected calibration error (ECE)} \cite{pmlr-v70-guo17a}.
Intuitively, reader ECE is a proxy for decision-making performance through the equivalence of confidence calibration and optimal decision-making: see \Cref{prop:conf_calib_implies_lr_decision_calibration} in Appendix~\ref{app:decision_and_calibration} for details.
Given $N$ question-answer pairs $\{(x^{(i)}, y^{(i)})\}_{i = 1}^N$ and corresponding long-form generations $\{z^{(i)}\}_{i=1}^N$, 
we partition them into $M$ bins by max forecast probability $\max_y \f(x^{(i)}, z^{(i)})_y$.
ECE is then expressed as
\begin{equation*}
\text{ECE} = \sum_{j = 1}^M \frac{|B_j|}{N} \left|\text{Acc}(B_j) - \text{Conf}(B_j)\right|, \quad \text{where}    
\end{equation*}
\begin{equation*}
\text{Conf}(B_j) = \frac{1}{|B_j|} \sum_{i \in B_j} \max_y \f(x^{(i)}, z^{(i)})_y,\quad~
\text{Acc}(B_j) = \frac{1}{|B_j|} \sum_{i \in B_j} \mathbbm{1} [y^{(i)} = \argmax_y \f(x^{(i)}, z^{(i)})_y].
\end{equation*}
We set the number of bins as $M = 20$ on simulated QA evaluations, and $M = 10$ on all others.
Note that log loss is not a reasonable evaluation metric in our setting because simulated and human readers can assign zero probability to the ground-truth class label resulting in infinite log loss.

\paragraph{Baselines.}
In our main evaluations, we compare \LC RL ($\pi_\LC$) with two types of baselines, all derived from Llama 2 7B: non-confidence and confidence.
We provide a strong data-matched comparison to \LC~by finetuning directly for factuality using RL.
This baseline is similar to \citet{tian2024finetuning}, but instead of using self-supervised or automated factuality scores as the RL reward, we use correctness determined with ground-truth question-answer pairs from TriviaQA.
In-context learning (ICL) baselines use TriviaQA examples from a prompt development split, and SFT/RL-based baselines use the same splits as $\pi_\SFT$ and $\pi_\LC$.
Each example in our splits is a $(q, x, y)$ tuple, where $q$ is an open-ended query obtained from question $x$ (cf. \S\ref{subsec:decision_task_data}).

We include the following non-confidence baselines:

\begin{itemize}[topsep=0pt,leftmargin=4mm]
    \setlength\itemsep{0.08mm}
    \item \textbf{ICL.} We randomly sample 8 open-ended queries, generate long-form responses with GPT-4, manually fact-check those responses using Wikipedia, and use these fact-checked (query, response) pairs as ICL examples for Llama 2 7B.
    \item \textbf{Claude Distill.} We generate long-form responses with Claude 2 over all queries in the SFT split, and finetune Llama 2 7B on these (query, response) pairs.
    \item \textbf{Factuality SFT.} We use the above ICL baseline to generate long-form responses over all queries in the SFT split, and finetune Llama 2 7B on these (query, response) pairs. We found Factuality SFT to outperform Claude Distill on a TriviaQA validation split, so we use it as the starting point for the following baseline, Factuality RL.
    \item \textbf{Factuality RL.} To provide a strong RL-based baseline, we train a reward model which scores the correctness of long-form outputs and use it in RL. Our approach to obtain this baseline is analogous to the decision-based RL algorithm (Algorithm \ref{alg:LC}), except instead of training a surrogate reader, we train a single reward model that, given a generation $z$ and question-answer pair $(x, y)$, predicts a binary indicator whether $z$ provides the correct answer to the question. This serves as the RL reward. We use Factuality SFT as the initial policy for PPO.
\end{itemize}

When training our confidence methods, we use the ICL baseline above to generate the responses which are summarized in summary distillation.
Our confidence baselines include the \textbf{\LC SFT} model ($\pi_\SFT$) and the following baseline:
\begin{itemize}[topsep=0pt,leftmargin=4mm]
    \setlength\itemsep{0.08mm}
    \item \textbf{Summary ICL.} We use the summary distillation algorithm (\S\ref{subsec:summ_distill}) on 8 queries sampled from the prompt development split to produce 8 Claude 2 summaries $\{s^{(i)}\}_{i=1}^8$, which we use in (query, summary) ICL examples.
\end{itemize}

\paragraph{Other baselines including GPT-4.} We include results for several other methods in Appendix \ref{app:more_qa_results} including Llama 2 Chat (which underperformed Factuality SFT), the oracle baseline of direct evaluation of summaries $s$, and GPT-4--based methods including GPT-4 0-Shot, GPT-4 ICL 8-Shot, asking GPT-4 for confidence statements zero-shot, and Summary ICL using GPT-4.
Unsurprisingly, GPT-4--based methods are far more factual than all Llama 2 7B--based methods.
However, we find that \LC RL has reader ECE comparable to GPT-4 baselines (cf. Figs.~\ref{fig:appendix_tqa_frontier}, \ref{fig:appendix_jeopardy_frontier}, \ref{fig:appendix_sciq_frontier}, \ref{fig:appendix_bioasq_frontier}, \ref{fig:appendix_factscore_frontier}), despite significantly worse factuality.
This demonstrates that even small LLMs with relatively weak factuality can be well-calibrated with the right objective.

\begin{figure*}[t]
    \captionsetup[subfigure]{justification=centering,aboveskip=-0.5pt,belowskip=-1pt}
    \centering
    \includegraphics[height=0.05\textwidth]{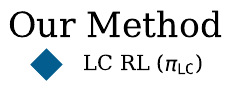}
    \includegraphics[height=0.05\textwidth]{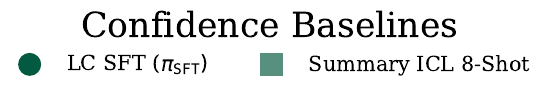}
    \includegraphics[height=0.047\textwidth]{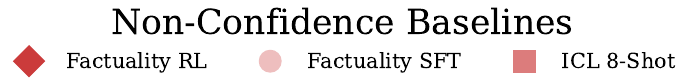}
    \begin{subfigure}[b]{0.32\textwidth}
        \includegraphics[height=0.7\textwidth]{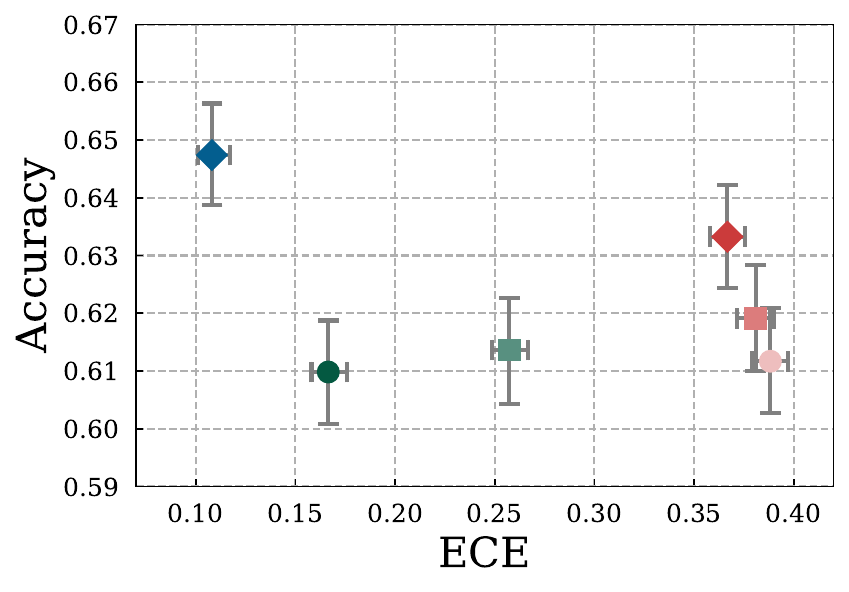}
        \caption{TriviaQA (in-distribution),\\~simulated reader.}
        \label{fig:frontier_tqa_11k}
    \end{subfigure}
    \begin{subfigure}[b]{0.32\textwidth}
        \includegraphics[height=0.7\textwidth]{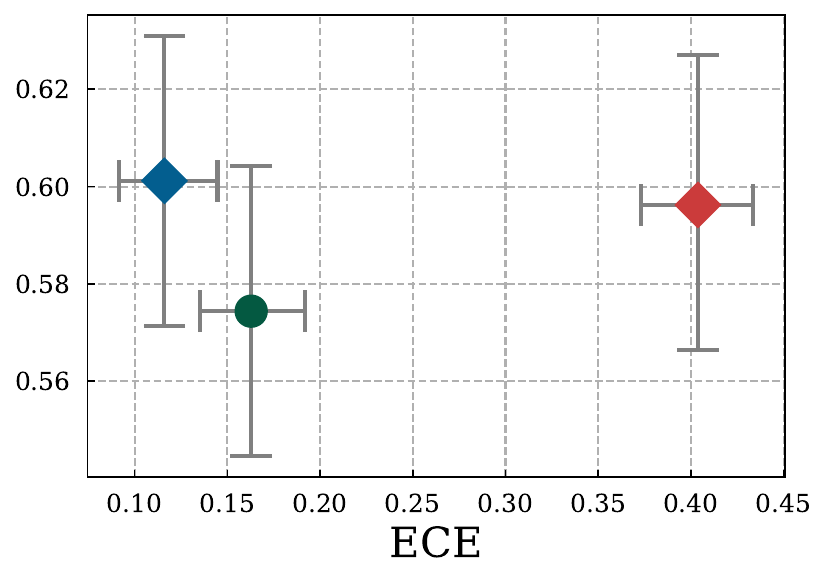}
        \caption{TriviaQA (in-distribution),\\~human readers.}
        \label{fig:frontier_tqa_human_eval}
    \end{subfigure}
    \begin{subfigure}[b]{0.32\textwidth}
        \includegraphics[height=0.7\textwidth]{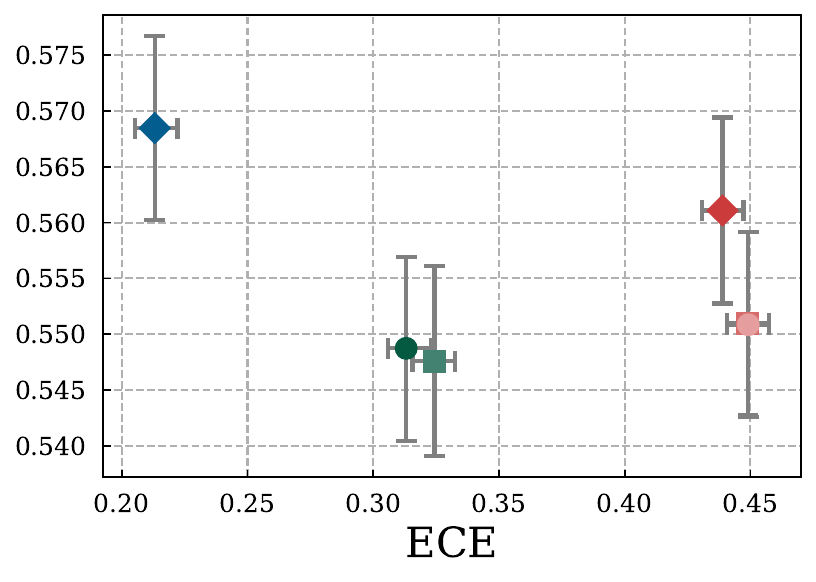}
        \caption{SciQ (out-of-distribution),\\~simulated reader.}
        \label{fig:frontier_sciq}
    \end{subfigure}
    \caption{\textbf{Accuracy-ECE Frontier for Question-Answering} (upper left is better). \LC RL pareto-dominates Factuality RL and SFT, with significantly better reader ECE while matching or exceeding their accuracy.}
    \label{fig:frontier_plots_fig_3}
\end{figure*}

\begin{figure}[t]
    \captionsetup[subfigure]{justification=centering,aboveskip=-0.5pt,belowskip=-1pt}
    \vspace{-1em}
    \centering
    \begin{subfigure}{\textwidth}
    \centering
    \includegraphics[width=0.4\linewidth]{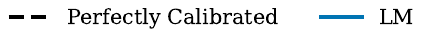}    
    \end{subfigure}
    \newline
    \begin{subfigure}{0.24\textwidth}
        \includegraphics[width=\textwidth]{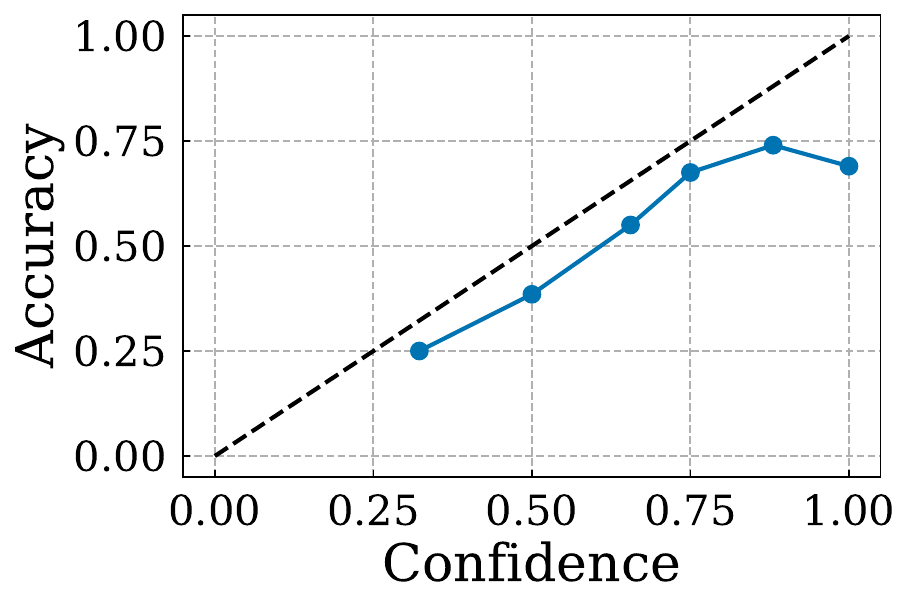}
        \caption{\LC SFT, human\\readers. ECE: $0.163$}
        \label{fig:reliability_pi_sft_human}
    \end{subfigure}
    \begin{subfigure}{0.24\textwidth}
        \includegraphics[width=\textwidth]{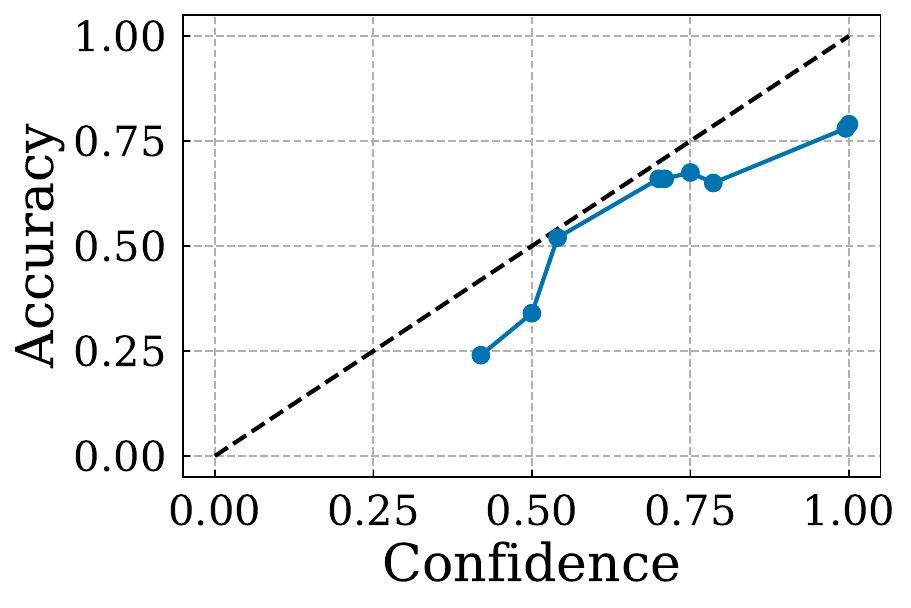}
        \caption{\LC RL, human\\readers. ECE: $0.116$}
        \label{fig:reliability_pi_lc_human}
    \end{subfigure}
    \begin{subfigure}{0.24\textwidth}
        \includegraphics[width=\textwidth]{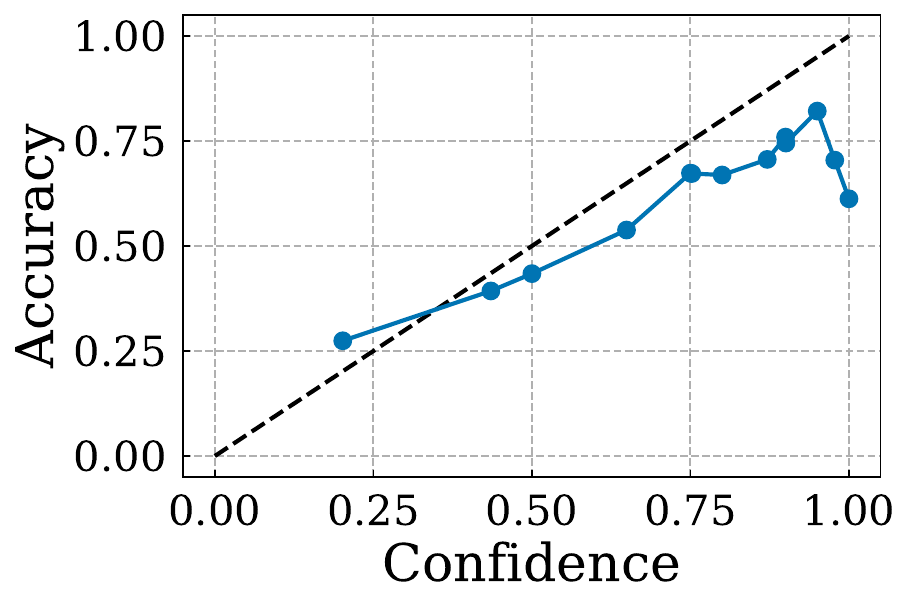}
        \caption{\LC SFT, simulated\\reader. ECE: $0.166$}
        \label{fig:reliability_pi_sft_sim}
    \end{subfigure}
    \begin{subfigure}{0.24\textwidth}
        \includegraphics[width=\textwidth]{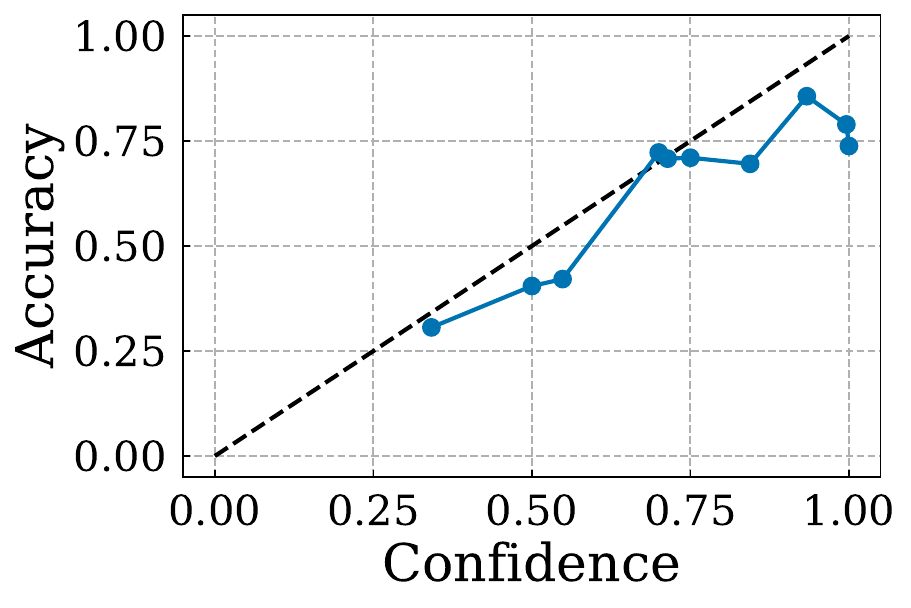}
        \caption{\LC RL, simulated\\reader. ECE: $0.108$}
        \label{fig:reliability_pi_lc_sim}
    \end{subfigure}
    \caption{\textbf{TriviaQA Reliability Diagrams}. \LC models display a wide range of confidences and good calibration in their long-form generations, with \LC RL improving calibration further. Human and simulated results closely match.}
    \label{fig:reliability_diagram_fig_4}
\end{figure}

\subsection{Linguistic Calibration using Question-Answering Datasets}

To begin, we evaluate our methods using held-out $(x, y)$ pairs from the TriviaQA, Jeopardy \cite{kaggle2020jeopardy}, SciQ \cite{welbl2017crowdsourcing}, and BioASQ Task B \cite{bioasq} question-answering datasets.
The latter two datasets are a significant domain shift; SciQ consists of crowdsourced physics, chemistry, and biology exam questions, and BioASQ Task B consists of biomedical questions annotated by experts (cf. Appendix \ref{app:sec_eval_details} for dataset details).

These QA evaluations validate two of our three goals: we find that \LC improves calibration with comparable or better accuracy, and that our training and evaluation framework are computationally tractable.
Our strong results on the Jeopardy, SciQ, and BioASQ datasets also provide a partial validation of our final goal of strong generalization under distribution shift.
We report TriviaQA and SciQ results here, and refer the reader to Appendix \ref{app:more_qa_results} for results with all datasets and baselines.

\paragraph{Better ECE with comparable or better accuracy in long-form generation.} 
Our main result in Fig.~\ref{fig:frontier_plots_fig_3} is that \LC RL has significantly better ECE than non-confidence baselines, including Factuality RL, while matching or exceeding their accuracy.
This result holds in-distribution on TriviaQA with both simulated (Fig.~\ref{fig:frontier_tqa_11k}) and human (Fig.~\ref{fig:frontier_tqa_human_eval}) readers, and out-of-distribution (OOD) on SciQ (Fig.~\ref{fig:frontier_sciq}), Jeopardy (Fig.~\ref{fig:appendix_jeopardy_frontier}), and BioASQ (Fig.~\ref{fig:appendix_bioasq_frontier}) with a simulated reader, demonstrating that \LC~generalizes across distribution shifts in the question-answer distribution $p(x, y)$.

Our results also support the effectiveness of decision-based RL.
\LC RL significantly improves over \LC SFT in both ECE and accuracy, with a greater absolute improvement in ECE/accuracy than Factuality SFT to Factuality RL.
This supports our claim that optimizing proper scoring rules of downstream forecasts is an effective way to induce calibration in long-form generations.

\begin{figure}[t]
\begin{center}
{ \textbf{Prompt:} Write a paragraph bio about Rory Byrne.}
\end{center}
\small
\begin{qualitativeBox}
    \textbf{Factuality RL:} Rory Byrne is a renowned \hlred{British} engineer and designer who has made significant contributions to the world of Formula One racing. Born in \hlred{1949}, Byrne began his career in the automotive industry in the 1970s, working for various racing teams and manufacturers. \hlred{In 1981, he joined the Williams Formula One team as a designer, and over the next few years, he helped the team win several championships. In 1994, Byrne left Williams} to join Ferrari, where he served as the team's technical director for several years. During his time at Ferrari, Byrne helped the team win several championships, including the 1999 constructor's championship. \hlred{In 2005, Byrne left Ferrari to pursue other opportunities, and he has since worked as a consultant for various racing teams and manufacturers.}
\end{qualitativeBox}
\begin{qualitativeBox}
    \textbf{\LC RL:} I \hlblue{believe} Rory Byrne is a renowned \hlred{British} engineer who has made significant contributions to the automotive industry. I \hlblue{estimate with 75\% confidence} that he has worked as a chief designer for Formula 1 teams like Ferrari and \hlred{McLaren}. I also \hlblue{think with 50\% likelihood} that he has designed cars for teams like \hlred{Williams}, Benetton, and \hlred{Lotus}. I am \hlblue{uncertain} whether he was \hlred{born in Ireland or Scotland}. I \hlblue{believe with 90\% confidence} that he has won multiple design awards, \hlred{including the prestigious Loeb Award}. \hlblue{While the details of his career are unclear to me, I am certain that} Rory Byrne has had a profound impact on the development of Formula 1 cars over the past few decades. His innovative designs have helped teams achieve success on the track.
\end{qualitativeBox}
\caption{Qualitative example from Factuality and \LC RL when evaluated under task distribution shift on biography generation. \LC RL produces numerical and linguistic confidence statements throughout the paragraph, highlighted in \hlblue{blue}. False statements are highlighted in \hlred{red}.
We include additional examples in Appendix~\ref{app:qualitative_examples}.
}
\label{fig:qualitative_main_body}
\end{figure}

\paragraph{Reliability diagrams demonstrate meaningful confidences.} 
A natural question is whether the confidences learned by our \LC~models are meaningful.
For example, if all confidences of a model collapsed to its average accuracy, it would obtain perfect ECE despite having confidences that are useless for tasks such as conveying the likelihood of hallucination.
To obtain a more fine-grained understanding of a model's calibration, we use \textbf{reliability diagrams} \cite{degroot1983comparison, niculescu2005predicting}, which visualize the average confidence and accuracy of each ECE bin.
The plot of the pathological model above would be a single point.
A perfectly calibrated model with meaningful confidences would have an identity plot $y = x$.
In Fig.~\ref{fig:reliability_diagram_fig_4}, we observe that \LC model confidences are both meaningful, covering a wide range of confidence values, and consistently close to the identity across confidence values.
This validates that \LC~is effective in linguistically conveying the likelihood of hallucination in a long-form generation.

\begin{figure}[t]
\centering
\includegraphics[height=0.05\textwidth]{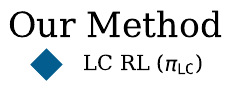}
\includegraphics[height=0.05\textwidth]{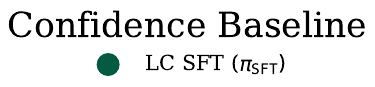}
\includegraphics[height=0.047\textwidth]{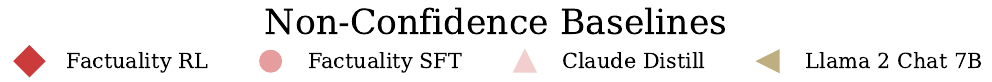}
\includegraphics[width=0.5\textwidth]{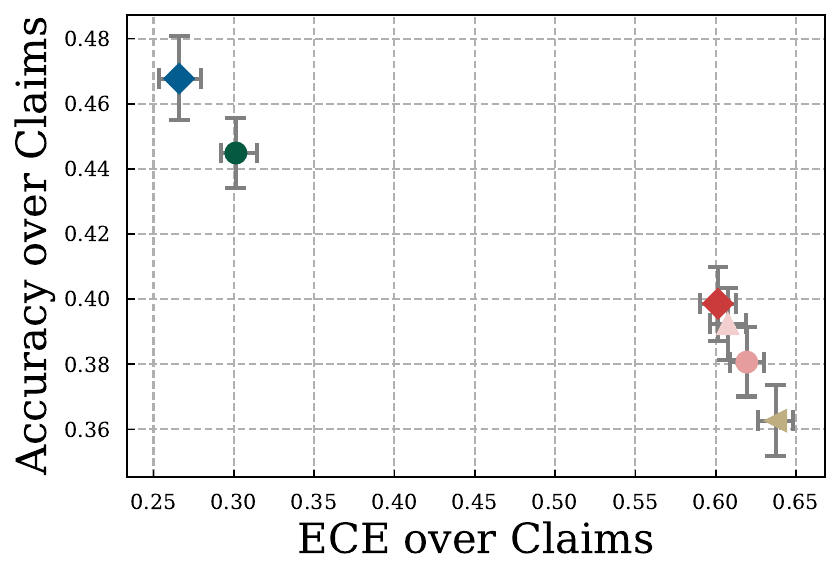}
\caption{\textbf{Accuracy-ECE Frontier for Person Biography Generation}, simulated reader (upper left is better).
Accuracy and ECE are computed at the claim level, using a finetuned retrieval model and API-based LLM for fact checking, following \citet{min-etal-2023-factscore}. 
\LC RL significantly outperforms non-confidence methods in accuracy and ECE.}
\label{fig:fig_5_factscore_frontier}
\end{figure}

\begin{figure}[t]
\captionsetup[subfigure]{justification=centering,aboveskip=-0.25pt,belowskip=-1pt}
\centering
\vspace{-0.5em}
\begin{subfigure}{\textwidth}
    \centering
    \includegraphics[width=0.4\linewidth]{fig/Fig4/reliability_diagram_plot_v1_legend.pdf}    
\end{subfigure}
\newline
\begin{subfigure}{0.3\textwidth}
    \includegraphics[width=\textwidth]{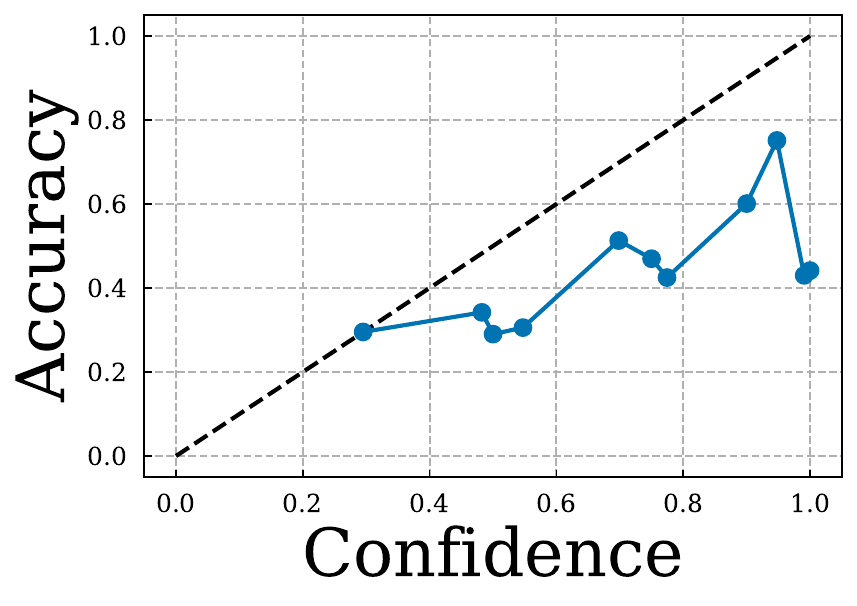}
    \caption{\LC SFT. ECE: $0.301$}
    \label{fig:factscore_reliability_sft}
\end{subfigure}
\begin{subfigure}{0.3\textwidth}
    \includegraphics[width=\textwidth]{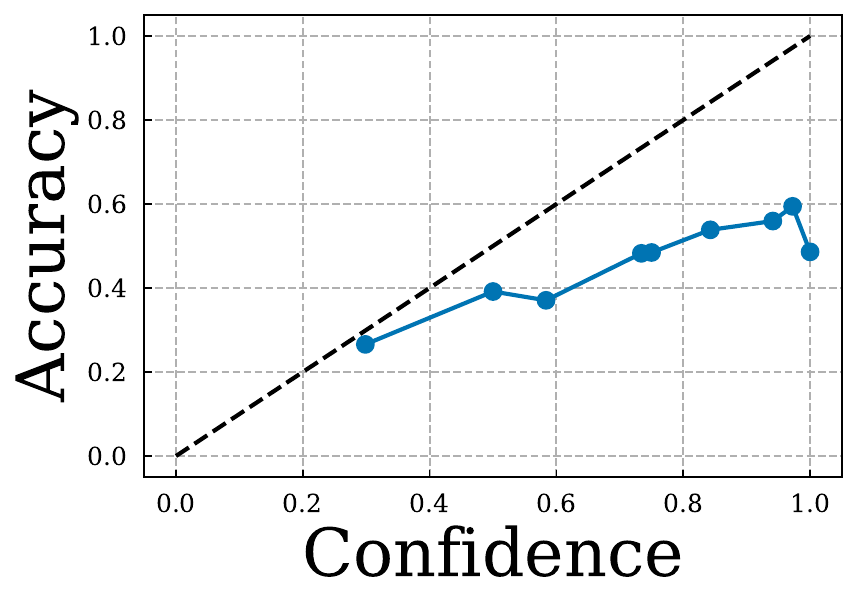}
    \caption{\LC RL. ECE: $0.266$}
    \label{fig:factscore_reliability_rl}
\end{subfigure}
\caption{\textbf{Person Biography Generation Reliability Diagrams} for \LC SFT (Left) and \LC RL (Right); simulated reader. \LC methods demonstrate non-trivial calibration in an entirely out-of-distribution task which requires calibrated statements of confidence throughout the paragraph.}
\label{fig:fig_6_reliability_factscore}
\end{figure}

\subsection{Zero-Shot Generalization to a Biography Generation Task}
\label{subsec:exp_biography_generation}

The QA evaluation validated two of our three goals:
\textbf{(1)} \LC RL pareto-dominates baselines on the accuracy-ECE frontier. Its significant improvement over \LC SFT validates the effectiveness of decision-based RL.
\textbf{(2)} We demonstrated the computational efficiency of our training and evaluation framework, because \LC RL is trained with cheap surrogates but performs well in evaluations with human readers, and our simulated readers have high agreement with human readers (see Appendix \ref{app:app_human_agreement_stats} for full forecast agreement statistics).
Lastly, the QA evaluation partially validates \textbf{(3)} OOD generalization: \LC RL performs well on the Jeopardy, SciQ, and BioASQ datasets with simulated readers.

To conclusively validate this final goal, we evaluate \LC on a significant distribution shift in the task.
Our models were trained to perform long-form generation about trivia-style questions, and we now test their ability to write factual biographies on people sampled from Wikipedia.
Specifically, we source 500 people from the \texttt{unlabeled} split of FactScore \cite{min-etal-2023-factscore} and use prompt ``Write a paragraph bio about \{person\}'' (cf. Fig.~\ref{fig:qualitative_main_body} for a qualitative example, and additional ones in Appendix \ref{app:qualitative_examples}).
We emphasize that \textit{our models were not trained on biography generation}.

\paragraph{FactScore-based metric.} 
We also use a more fine-grained simulated evaluation than the QA tasks, testing the accuracy and calibration of generated biographies $z$ at the per-claim level.
More specifically, we split generations into a list of claims, filter out subjective claims, and then compute accuracy and ECE over all claims pooled across biographies, following FactScore \cite{min-etal-2023-factscore}.
We use Claude 2 for splitting and filtering, and an identical fact checking pipeline to \citeauthor{min-etal-2023-factscore} other than using Claude 2 instead of ChatGPT for fact-checking conditioned on retrieved Wikipedia context paragraphs.
In order to compute ECE, we need to assign confidence values to each atomic claim.
For numerical uncertainties such as percentages, this is a simple extractive task which API-based LLMs perform well.
For linguistic uncertainties, we provide Claude 2 with a short list of mappings between linguistic phrases and consensus probabilities collected in a human study \cite{wallsten}, and allow the LLM to generalize from this to assign probabilities for other linguistic phrases not present in the mapping (cf. Appendix \ref{app:factscore_metric} for details).

\paragraph{\LC generalizes to biography generation and claim-level evaluation.}
Both \LC methods demonstrate significant improvements in ECE and accuracy compared to non-confidence baselines, generalizing well to an entirely held-out task (Fig.~\ref{fig:fig_5_factscore_frontier}).
Because we here compute ECE at the per-claim level and the \LC methods obtain reasonable ECE values and reliability diagrams (Fig.~\ref{fig:fig_6_reliability_factscore}), we confirm that they incorporate calibrated statements of confidence \textit{throughout their long-form generations}.
Additionally, decision-based RL significantly improves both accuracy and ECE over \LC SFT even under a significant task distribution shift, further validating our linguistic calibration objective.

Lastly, we note the surprising finding that \LC SFT improves in accuracy compared to Factuality RL.
We attribute this to the tendency of \LC models to generate a higher proportion of less ``precise'' claims which are still objective and correct and therefore count towards the accuracy metric.

Altogether, our strong results in a challenging distribution shift setting validate our final goal.

\section{Related Work}
\label{sec:related_work}

\paragraph{Calibration.}
The calibration of probabilistic forecasts is an extensively studied topic~\cite{brier1950verification,savage1971elicitation,murphy1973new,dawid1984present,zadrozny2001obtaining,gneiting2007strictly, pmlr-v80-hebert-johnson18a, kull2015novel}.
In particular, isotonic regression~\cite{niculescu2005predicting}, Platt scaling~\cite{platt1999probabilistic}, temperature scaling~\cite{pmlr-v70-guo17a}, and histogram binning~\cite{kumar2019verified} are effective approaches for improving the calibration of probability estimates.
Other established methods improve calibration through ensembling \citep{lakshminarayanan2017simple} and Bayesian model averaging using an approximate posterior over model parameters \cite{blundell2015mfvi, gal2016dropout, band2021benchmarking, malinin2021shifts, nado2022uncertainty, tran2022plex}.

Recent works also studied the probabilistic calibration of LMs, showing that LMs can be well-calibrated~\cite{kadavath2022language}.
LMs that went through RL from human feedback (RLHF) tend to be ill-calibrated on multiple-choice questions~\cite{openai2023gpt4}, but simple temperature scaling fixes the issue.
Chain-of-thought prompting leads to ill-calibration~\cite{bai2022constitutional}.
The present focus is the linguistic calibration of LMs that produce \textit{long-form natural language}, a setting to which established recalibration methods do not apply.
This is because long-form generations may contain multiple claims, each of which has its confidence expressed in language.

\paragraph{Calibration and decision-making.}
The particular decision-making framework we adopted was originally used to convey the confidence of individual predictions to decision-makers~\cite{zhao2021right}, and later used to draw an equivalence between different notions of calibration and optimal decision-making~\cite{zhao2021calibrating} (cf. \S\ref{subsec:user_decision_problem} and Appendix~\ref{app:decision_and_calibration}).
In seminal work, \citet{foster1998asymptotic, foster1999regret} showed that the existence of certain no regret schemes in an online decision-making setting imply the existence of calibrated probabilistic forecasts.
More recently, \citet{cresswell2024conformal} explore using conformal prediction sets in classification tasks to improve human decision-making, but do not consider calibrating long-form LM generations.

\paragraph{LMs producing uncertainty.}
The literature has studied methods to let LMs produce confidence scores. 
Works have benchmarked the uncertainty quantification capabilities of LMs on multiple choice tasks~\citep{ye2024benchmarking} and studied how sampling variance could be used to estimate uncertainty~\citep{malinin2021uncertainty,kuhn2023semantic,wang2022selfconsistency, xiong2024can}.
Our sampling-based Summary ICL and Direct Summary Eval baselines (cf. Appendix \ref{app:more_qa_results}) can be seen as extensions of the black-box confidence elicitation framework in \citet{xiong2024can} to long-form text, because they do not require finetuning or access to logits.

Other works analyze how LMs express linguistic uncertainty~\citep{mielke-etal-2022-reducing,zhou-etal-2023-navigating}.
Considering single-claim utterances, \citet{mielke-etal-2022-reducing} define an LM to be linguistically calibrated if it emits a verbalized confidence statement matching the likelihood of its response's correctness.
However, the long-form, multi-claim generations that users encounter in practice have neither a single closed-form confidence nor a correctness; each generation contains information that answers many possible downstream questions.
Defn.~\ref{defn:ling_cal} recovers the notion of linguistic calibration in \citeauthor{mielke-etal-2022-reducing}~when specialized to single-claim outputs $z$.\footnote{
Let $q := x$, and suppose the LM generates a single-claim utterance $z \sim \pi(z \mid q)$ with a confidence statement and answer from $\mathcal{Y}$, \textit{e.g.,} $z =$ ``I'm not sure, but my guess is Los Angeles.'' Then \citet{mielke-etal-2022-reducing} considers \textit{linguistic confidence calibration}.}

A related line of work enables LMs to \textit{directly} express uncertainty, focusing on short utterances~\citep{mielke-etal-2022-reducing, xiong2024can}, arithmetic problems~\citep{lin2022teaching}, and question-answering~\citep{jiang2021how, tian-etal-2023-just, shrivastava2023llamas}. 
All these prior works consider settings where the set of responses is a small closed set and the notion of calibration is well-defined.
Other works finetune LMs to abstain \cite{cheng2024ai} or to output templated uncertainty phrases \cite{yang2023alignment} on question-answering tasks.
Lastly, concurrent work \cite{huang2024calibrating} evaluates methods such as self-consistency \cite{wang2022selfconsistency} and supervised finetuning in calibrating long-form generations.
To the best of our knowledge, our method is the first to simultaneously provide calibrated \textit{text-based} statements of confidence, which are important for interpretability to users \cite{mielke-etal-2022-reducing}, while working in the setting of \textit{long-form, multi-claim} generations.
We overcome the challenge of defining calibration in this setting by drawing connections between decision-making and uncertainty quantification, enabling us to build a \textit{single end-to-end objective} that can calibrate long-form generations.

\paragraph{Other LM finetuning works.}
Improving the factuality of LMs is a complementary approach to calibration in mitigating LM hallucinations.
Previous works have improved the factuality of LMs by finetuning on self-supervised or automated factuality scores \cite{tian2024finetuning, akyürek2024deductive}.
A related line of work uses supervised finetuning and RL to improve the honesty of LLMs \cite{askell2021general, ouyang2022training, park2023ai, evans2021truthful, cui2023ultrafeedback}, hypothesizing that the pretraining objective alone is insufficient to encourage honest responses.
Because improving factuality alone can improve calibration metrics such as ECE, we include a strong baseline finetuned with RL on ground-truth factuality labels and find that our approach to linguistic calibration significantly improves ECE beyond this baseline while matching or exceeding its accuracy.

\section{Limitations, Future Work, and Conclusions}
\label{sec:discussion}

\paragraph{Limitations and future work.}
Our linguistically calibrated LM generalizes well from surrogate to crowdworker forecasts.
However, many of the confidence statements it emits are fairly unambiguous, \textit{e.g.,} percentages.
Therefore, future work could investigate how closely LM and human interpretations of ambiguous linguistic confidence statements match, which could enable training LMs with linguistic confidence statements that are tailored to user populations.
Additionally, we use off-the-shelf question-answering datasets as a proxy for questions encountered during real-world decision-making.
To improve \LC's generalization to decision-making scenarios in-the-wild, future work could curate a more representative QA dataset.
Lastly, we work in a white-box setting where finetuning LMs is possible; our training framework could not be used to calibrate API-based LLMs that only provide access to completions.

\paragraph{Conclusions.}
We defined linguistic calibration of long-form generations: calibrating the long-form generations of an LM in a way that leads to calibrated probabilistic forecasts by its downstream users.
By constructing an objective in the space of these forecasts, we were able to apply the standard calibration machinery of proper scoring rules for end-to-end linguistic calibration.
Instantiating this objective in a training framework and linguistically calibrating Llama 2 7B enables it to emit calibrated confidence statements, significantly improving the calibration of downstream human and simulated forecasts while matching or exceeding strong RL-tuned baselines in accuracy.

\section{Acknowledgements}

We thank Michael Y. Li, Yann Dubois, Yu Sun, Zitong Yang, and members of the Tatsu Lab, Ma Lab, Stanford Machine Learning Group, and Stanford NLP Group for their helpful feedback.
This work was supported by Open Philanthropy, IBM, NSF IIS 2211780, the Stanford HAI--Google Cloud Credits Program, and the Anthropic Researcher Access Program.
NB acknowledges funding from an NSF Graduate Research Fellowship and a Quad Fellowship. 
XL acknowledges funding from a Stanford Graduate Fellowship and a Meta PhD Fellowship.

\newpage

\bibliography{bibliography}

\begin{thebibliography}{79}
\providecommand{\natexlab}[1]{#1}
\providecommand{\url}[1]{\texttt{#1}}
\expandafter\ifx\csname urlstyle\endcsname\relax
  \providecommand{\doi}[1]{doi: #1}\else
  \providecommand{\doi}{doi: \begingroup \urlstyle{rm}\Url}\fi

\bibitem[Akyürek et~al.(2024)Akyürek, Akyürek, Choshen, Wijaya, and Andreas]{akyürek2024deductive}
Afra~Feyza Akyürek, Ekin Akyürek, Leshem Choshen, Derry Wijaya, and Jacob Andreas.
\newblock Deductive closure training of language models for coherence, accuracy, and updatability, 2024.

\bibitem[Anthropic(2023)]{anthropic2023claude}
Anthropic.
\newblock Model card and evaluations for claude models, 2023.

\bibitem[Askell et~al.(2021)Askell, Bai, Chen, Drain, Ganguli, Henighan, Jones, Joseph, Mann, DasSarma, Elhage, Hatfield-Dodds, Hernandez, Kernion, Ndousse, Olsson, Amodei, Brown, Clark, McCandlish, Olah, and Kaplan]{askell2021general}
Amanda Askell, Yuntao Bai, Anna Chen, Dawn Drain, Deep Ganguli, Tom Henighan, Andy Jones, Nicholas Joseph, Ben Mann, Nova DasSarma, Nelson Elhage, Zac Hatfield-Dodds, Danny Hernandez, Jackson Kernion, Kamal Ndousse, Catherine Olsson, Dario Amodei, Tom Brown, Jack Clark, Sam McCandlish, Chris Olah, and Jared Kaplan.
\newblock A general language assistant as a laboratory for alignment, 2021.

\bibitem[Bai et~al.(2022)Bai, Kadavath, Kundu, Askell, Kernion, Jones, Chen, Goldie, Mirhoseini, McKinnon, et~al.]{bai2022constitutional}
Yuntao Bai, Saurav Kadavath, Sandipan Kundu, Amanda Askell, Jackson Kernion, Andy Jones, Anna Chen, Anna Goldie, Azalia Mirhoseini, Cameron McKinnon, et~al.
\newblock Constitutional ai: Harmlessness from ai feedback.
\newblock \emph{arXiv preprint arXiv:2212.08073}, 2022.

\bibitem[Band et~al.(2021)Band, Rudner, Feng, Filos, Nado, Dusenberry, Jerfel, Tran, and Gal]{band2021benchmarking}
Neil Band, Tim G.~J. Rudner, Qixuan Feng, Angelos Filos, Zachary Nado, Michael~W Dusenberry, Ghassen Jerfel, Dustin Tran, and Yarin Gal.
\newblock Benchmarking bayesian deep learning on diabetic retinopathy detection tasks.
\newblock In \emph{NeurIPS Datasets and Benchmarks Track}, 2021.

\bibitem[Blundell et~al.(2015)Blundell, Cornebise, Kavukcuoglu, and Wierstra]{blundell2015mfvi}
Charles Blundell, Julien Cornebise, Koray Kavukcuoglu, and Daan Wierstra.
\newblock {Weight Uncertainty in Neural Networks}.
\newblock In Francis Bach and David Blei, editors, \emph{PMLR}, volume~37 of \emph{Proceedings of Machine Learning Research}, pages 1613--1622, Lille, France, 07--09 Jul 2015. PMLR.

\bibitem[Boyd and Vandenberghe(2004)]{boyd2004convex}
Stephen Boyd and Lieven Vandenberghe.
\newblock Convex optimization.
\newblock \emph{Cambridge University Press}, 2004.

\bibitem[Brier(1950)]{brier1950verification}
Glenn~W. Brier.
\newblock Verification of forecasts expressed in terms of probability.
\newblock \emph{Monthly Weather Review}, 78\penalty0 (1):\penalty0 1--3, 1950.
\newblock \doi{10.1175/1520-0493(1950)078<0001:VOFEIT>2.0.CO;2}.
\newblock URL \url{https://journals.ametsoc.org/view/journals/mwre/78/1/1520-0493_1950_078_0001_vofeit_2_0_co_2.xml}.

\bibitem[Bröcker(2009)]{brocker2009reliability}
Jochen Bröcker.
\newblock Reliability, sufficiency, and the decomposition of proper scores.
\newblock \emph{Quarterly Journal of the Royal Meteorological Society}, 135\penalty0 (643):\penalty0 1512--1519, 2009.
\newblock \doi{https://doi.org/10.1002/qj.456}.
\newblock URL \url{https://rmets.onlinelibrary.wiley.com/doi/abs/10.1002/qj.456}.

\bibitem[Cheng et~al.(2024)Cheng, Sun, Liu, Zhang, Yin, Li, Li, He, Chen, and Qiu]{cheng2024ai}
Qinyuan Cheng, Tianxiang Sun, Xiangyang Liu, Wenwei Zhang, Zhangyue Yin, Shimin Li, Linyang Li, Zhengfu He, Kai Chen, and Xipeng Qiu.
\newblock Can ai assistants know what they don't know?, 2024.

\bibitem[Cover and Thomas(1991)]{cover1991elements}
Thomas~M. Cover and Joy~A. Thomas.
\newblock \emph{Elements of Information Theory}.
\newblock Wiley, New York, 1991.

\bibitem[Cresswell et~al.(2024)Cresswell, Sui, Kumar, and Vouitsis]{cresswell2024conformal}
Jesse~C. Cresswell, Yi~Sui, Bhargava Kumar, and Noël Vouitsis.
\newblock Conformal prediction sets improve human decision making, 2024.

\bibitem[Cui et~al.(2023)Cui, Yuan, Ding, Yao, Zhu, Ni, Xie, Liu, and Sun]{cui2023ultrafeedback}
Ganqu Cui, Lifan Yuan, Ning Ding, Guanming Yao, Wei Zhu, Yuan Ni, Guotong Xie, Zhiyuan Liu, and Maosong Sun.
\newblock Ultrafeedback: Boosting language models with high-quality feedback, 2023.

\bibitem[Dahl et~al.(2024)Dahl, Magesh, Suzgun, and Ho]{dahl2024large}
Matthew Dahl, Varun Magesh, Mirac Suzgun, and Daniel~E. Ho.
\newblock Large legal fictions: Profiling legal hallucinations in large language models, 2024.

\bibitem[Dao(2023)]{dao2023flashattention2}
Tri Dao.
\newblock Flashattention-2: Faster attention with better parallelism and work partitioning, 2023.

\bibitem[Dao et~al.(2022)Dao, Fu, Ermon, Rudra, and Ré]{dao2022flashattention}
Tri Dao, Daniel~Y. Fu, Stefano Ermon, Atri Rudra, and Christopher Ré.
\newblock Flashattention: Fast and memory-efficient exact attention with io-awareness, 2022.

\bibitem[Dawid(1984)]{dawid1984present}
A~Philip Dawid.
\newblock Present position and potential developments: Some personal views statistical theory the prequential approach.
\newblock \emph{Journal of the Royal Statistical Society: Series A (General)}, 147\penalty0 (2):\penalty0 278--290, 1984.

\bibitem[DeGroot and Fienberg(1983)]{degroot1983comparison}
Morris~H. DeGroot and Stephen~E. Fienberg.
\newblock The comparison and evaluation of forecasters, 1983.

\bibitem[Dettmers et~al.(2022)Dettmers, Lewis, Shleifer, and Zettlemoyer]{dettmers20228bit}
Tim Dettmers, Mike Lewis, Sam Shleifer, and Luke Zettlemoyer.
\newblock 8-bit optimizers via block-wise quantization, 2022.

\bibitem[Dubois et~al.(2023)Dubois, Li, Taori, Zhang, Gulrajani, Ba, Guestrin, Liang, and Hashimoto]{dubois2023alpacafarm}
Yann Dubois, Xuechen Li, Rohan Taori, Tianyi Zhang, Ishaan Gulrajani, Jimmy Ba, Carlos Guestrin, Percy Liang, and Tatsunori Hashimoto.
\newblock Alpacafarm: A simulation framework for methods that learn from human feedback.
\newblock In \emph{Thirty-seventh Conference on Neural Information Processing Systems}, 2023.
\newblock URL \url{https://openreview.net/forum?id=4hturzLcKX}.

\bibitem[Evans et~al.(2021)Evans, Cotton-Barratt, Finnveden, Bales, Balwit, Wills, Righetti, and Saunders]{evans2021truthful}
Owain Evans, Owen Cotton-Barratt, Lukas Finnveden, Adam Bales, Avital Balwit, Peter Wills, Luca Righetti, and William Saunders.
\newblock Truthful ai: Developing and governing ai that does not lie, 2021.

\bibitem[Foster and Vohra(1999)]{foster1999regret}
Dean~P. Foster and Rakesh Vohra.
\newblock Regret in the on-line decision problem.
\newblock \emph{Games and Economic Behavior}, 29\penalty0 (1):\penalty0 7--35, 1999.
\newblock ISSN 0899-8256.
\newblock \doi{https://doi.org/10.1006/game.1999.0740}.
\newblock URL \url{https://www.sciencedirect.com/science/article/pii/S0899825699907406}.

\bibitem[Foster and Vohra(1998)]{foster1998asymptotic}
Dean~P. Foster and Rakesh~V. Vohra.
\newblock Asymptotic calibration.
\newblock \emph{Biometrika}, 85\penalty0 (2):\penalty0 379--390, 1998.
\newblock ISSN 00063444.
\newblock URL \url{http://www.jstor.org/stable/2337364}.

\bibitem[Gal and Ghahramani(2016)]{gal2016dropout}
Yarin Gal and Zoubin Ghahramani.
\newblock {Dropout As a Bayesian Approximation: Representing Model Uncertainty in Deep Learning}.
\newblock In \emph{Proceedings of the 33rd International Conference on International Conference on Machine Learning - Volume 48}, Icml 2016, pages 1050--1059, 2016.

\bibitem[Gneiting and Raftery(2007)]{gneiting2007strictly}
Tilmann Gneiting and Adrian~E Raftery.
\newblock Strictly proper scoring rules, prediction, and estimation.
\newblock \emph{Journal of the American Statistical Association}, 102\penalty0 (477):\penalty0 359--378, 2007.
\newblock \doi{10.1198/016214506000001437}.
\newblock URL \url{https://doi.org/10.1198/016214506000001437}.

\bibitem[Guo et~al.(2017)Guo, Pleiss, Sun, and Weinberger]{pmlr-v70-guo17a}
Chuan Guo, Geoff Pleiss, Yu~Sun, and Kilian~Q. Weinberger.
\newblock On calibration of modern neural networks.
\newblock In Doina Precup and Yee~Whye Teh, editors, \emph{Proceedings of the 34th International Conference on Machine Learning}, volume~70 of \emph{Proceedings of Machine Learning Research}, pages 1321--1330. PMLR, 06--11 Aug 2017.
\newblock URL \url{https://proceedings.mlr.press/v70/guo17a.html}.

\bibitem[Hebert-Johnson et~al.(2018)Hebert-Johnson, Kim, Reingold, and Rothblum]{pmlr-v80-hebert-johnson18a}
Ursula Hebert-Johnson, Michael Kim, Omer Reingold, and Guy Rothblum.
\newblock Multicalibration: Calibration for the ({C}omputationally-identifiable) masses.
\newblock In Jennifer Dy and Andreas Krause, editors, \emph{Proceedings of the 35th International Conference on Machine Learning}, volume~80 of \emph{Proceedings of Machine Learning Research}, pages 1939--1948. PMLR, 10--15 Jul 2018.
\newblock URL \url{https://proceedings.mlr.press/v80/hebert-johnson18a.html}.

\bibitem[Huang et~al.(2023)Huang, Yu, Ma, Zhong, Feng, Wang, Chen, Peng, Feng, Qin, and Liu]{huang2023survey}
Lei Huang, Weijiang Yu, Weitao Ma, Weihong Zhong, Zhangyin Feng, Haotian Wang, Qianglong Chen, Weihua Peng, Xiaocheng Feng, Bing Qin, and Ting Liu.
\newblock A survey on hallucination in large language models: Principles, taxonomy, challenges, and open questions, 2023.

\bibitem[Huang et~al.(2024)Huang, Liu, Thirukovalluru, Cohan, and Dhingra]{huang2024calibrating}
Yukun Huang, Yixin Liu, Raghuveer Thirukovalluru, Arman Cohan, and Bhuwan Dhingra.
\newblock Calibrating long-form generations from large language models, 2024.

\bibitem[Ji et~al.(2023)Ji, Lee, Frieske, Yu, Su, Xu, Ishii, Bang, Madotto, and Fung]{ziwei2023survey}
Ziwei Ji, Nayeon Lee, Rita Frieske, Tiezheng Yu, Dan Su, Yan Xu, Etsuko Ishii, Ye~Jin Bang, Andrea Madotto, and Pascale Fung.
\newblock Survey of hallucination in natural language generation.
\newblock \emph{ACM Comput. Surv.}, 55\penalty0 (12), mar 2023.
\newblock ISSN 0360-0300.
\newblock \doi{10.1145/3571730}.
\newblock URL \url{https://doi.org/10.1145/3571730}.

\bibitem[Jiang et~al.(2021)Jiang, Araki, Ding, and Neubig]{jiang2021how}
Zhengbao Jiang, Jun Araki, Haibo Ding, and Graham Neubig.
\newblock {How Can We Know When Language Models Know? On the Calibration of Language Models for Question Answering}.
\newblock \emph{Transactions of the Association for Computational Linguistics}, 9:\penalty0 962--977, 09 2021.
\newblock ISSN 2307-387X.
\newblock \doi{10.1162/tacl_a_00407}.
\newblock URL \url{https://doi.org/10.1162/tacl\_a\_00407}.

\bibitem[Joshi et~al.(2017)Joshi, Choi, Weld, and Zettlemoyer]{JoshiTriviaQA2017}
Mandar Joshi, Eunsol Choi, Daniel~S. Weld, and Luke Zettlemoyer.
\newblock Triviaqa: A large scale distantly supervised challenge dataset for reading comprehension.
\newblock In \emph{Proceedings of the 55th Annual Meeting of the Association for Computational Linguistics}, Vancouver, Canada, July 2017. Association for Computational Linguistics.

\bibitem[Kadavath et~al.(2022)Kadavath, Conerly, Askell, Henighan, Drain, Perez, Schiefer, Hatfield-Dodds, DasSarma, Tran-Johnson, Johnston, El-Showk, Jones, Elhage, Hume, Chen, Bai, Bowman, Fort, Ganguli, Hernandez, Jacobson, Kernion, Kravec, Lovitt, Ndousse, Olsson, Ringer, Amodei, Brown, Clark, Joseph, Mann, McCandlish, Olah, and Kaplan]{kadavath2022language}
Saurav Kadavath, Tom Conerly, Amanda Askell, Tom Henighan, Dawn Drain, Ethan Perez, Nicholas Schiefer, Zac Hatfield-Dodds, Nova DasSarma, Eli Tran-Johnson, Scott Johnston, Sheer El-Showk, Andy Jones, Nelson Elhage, Tristan Hume, Anna Chen, Yuntao Bai, Sam Bowman, Stanislav Fort, Deep Ganguli, Danny Hernandez, Josh Jacobson, Jackson Kernion, Shauna Kravec, Liane Lovitt, Kamal Ndousse, Catherine Olsson, Sam Ringer, Dario Amodei, Tom Brown, Jack Clark, Nicholas Joseph, Ben Mann, Sam McCandlish, Chris Olah, and Jared Kaplan.
\newblock Language models (mostly) know what they know, 2022.

\bibitem[Kaggle(2020)]{kaggle2020jeopardy}
Kaggle.
\newblock 200,000+ jeopardy! questions, 2020.
\newblock URL \url{https://www.kaggle.com/datasets/tunguz/200000-jeopardy-questions/data}.

\bibitem[Krithara et~al.(2023)Krithara, Nentidis, Bougiatiotis, and Paliouras]{bioasq}
Anastasia Krithara, Anastasios Nentidis, Konstantinos Bougiatiotis, and Georgios Paliouras.
\newblock Bioasq-qa: A manually curated corpus for biomedical question answering.
\newblock \emph{Scientific Data}, 10\penalty0 (1):\penalty0 170, Mar 2023.
\newblock ISSN 2052-4463.
\newblock \doi{10.1038/s41597-023-02068-4}.
\newblock URL \url{https://doi.org/10.1038/s41597-023-02068-4}.

\bibitem[Kuhn et~al.(2023)Kuhn, Gal, and Farquhar]{kuhn2023semantic}
Lorenz Kuhn, Yarin Gal, and Sebastian Farquhar.
\newblock Semantic uncertainty: Linguistic invariances for uncertainty estimation in natural language generation, 2023.

\bibitem[Kull and Flach(2015)]{kull2015novel}
Meelis Kull and Peter Flach.
\newblock Novel decompositions of proper scoring rules for classification: Score adjustment as precursor to calibration.
\newblock In Annalisa Appice, Pedro~Pereira Rodrigues, V{\'i}tor Santos~Costa, Carlos Soares, Jo{\~a}o Gama, and Al{\'i}pio Jorge, editors, \emph{Machine Learning and Knowledge Discovery in Databases}, pages 68--85, Cham, 2015. Springer International Publishing.
\newblock ISBN 978-3-319-23528-8.

\bibitem[Kull et~al.(2019)Kull, Perello~Nieto, K\"{a}ngsepp, Silva~Filho, Song, and Flach]{kull2019beyond}
Meelis Kull, Miquel Perello~Nieto, Markus K\"{a}ngsepp, Telmo Silva~Filho, Hao Song, and Peter Flach.
\newblock Beyond temperature scaling: Obtaining well-calibrated multi-class probabilities with dirichlet calibration.
\newblock In H.~Wallach, H.~Larochelle, A.~Beygelzimer, F.~d\textquotesingle Alch\'{e}-Buc, E.~Fox, and R.~Garnett, editors, \emph{Advances in Neural Information Processing Systems}, volume~32. Curran Associates, Inc., 2019.
\newblock URL \url{https://proceedings.neurips.cc/paper_files/paper/2019/file/8ca01ea920679a0fe3728441494041b9-Paper.pdf}.

\bibitem[Kumar et~al.(2019)Kumar, Liang, and Ma]{kumar2019verified}
Ananya Kumar, Percy~S Liang, and Tengyu Ma.
\newblock Verified uncertainty calibration.
\newblock \emph{Advances in Neural Information Processing Systems}, 32, 2019.

\bibitem[Lakshminarayanan et~al.(2017)Lakshminarayanan, Pritzel, and Blundell]{lakshminarayanan2017simple}
Balaji Lakshminarayanan, Alexander Pritzel, and Charles Blundell.
\newblock {Simple and Scalable Predictive Uncertainty Estimation using Deep Ensembles}.
\newblock In Isabelle Guyon, Ulrike von Luxburg, Samy Bengio, Hanna~M. Wallach, Rob Fergus, S.~V.~N. Vishwanathan, and Roman Garnett, editors, \emph{Advances in Neural Information Processing Systems 30: Annual Conference on Neural Information Processing Systems 2017, December 4-9, 2017, Long Beach, CA, {USA}}, pages 6402--6413, 2017.

\bibitem[Lhoest et~al.(2021)Lhoest, Villanova~del Moral, Jernite, Thakur, von Platen, Patil, Chaumond, Drame, Plu, Tunstall, Davison, {\v{S}}a{\v{s}}ko, Chhablani, Malik, Brandeis, Le~Scao, Sanh, Xu, Patry, McMillan-Major, Schmid, Gugger, Delangue, Matussi{\`e}re, Debut, Bekman, Cistac, Goehringer, Mustar, Lagunas, Rush, and Wolf]{lhoest-etal-2021-datasets}
Quentin Lhoest, Albert Villanova~del Moral, Yacine Jernite, Abhishek Thakur, Patrick von Platen, Suraj Patil, Julien Chaumond, Mariama Drame, Julien Plu, Lewis Tunstall, Joe Davison, Mario {\v{S}}a{\v{s}}ko, Gunjan Chhablani, Bhavitvya Malik, Simon Brandeis, Teven Le~Scao, Victor Sanh, Canwen Xu, Nicolas Patry, Angelina McMillan-Major, Philipp Schmid, Sylvain Gugger, Cl{\'e}ment Delangue, Th{\'e}o Matussi{\`e}re, Lysandre Debut, Stas Bekman, Pierric Cistac, Thibault Goehringer, Victor Mustar, Fran{\c{c}}ois Lagunas, Alexander Rush, and Thomas Wolf.
\newblock Datasets: A community library for natural language processing.
\newblock In \emph{Proceedings of the 2021 Conference on Empirical Methods in Natural Language Processing: System Demonstrations}, pages 175--184, Online and Punta Cana, Dominican Republic, November 2021. Association for Computational Linguistics.
\newblock URL \url{https://aclanthology.org/2021.emnlp-demo.21}.

\bibitem[Liang et~al.(2023)Liang, Bommasani, Lee, Tsipras, Soylu, Yasunaga, Zhang, Narayanan, Wu, Kumar, Newman, Yuan, Yan, Zhang, Cosgrove, Manning, Ré, Acosta-Navas, Hudson, Zelikman, Durmus, Ladhak, Rong, Ren, Yao, Wang, Santhanam, Orr, Zheng, Yuksekgonul, Suzgun, Kim, Guha, Chatterji, Khattab, Henderson, Huang, Chi, Xie, Santurkar, Ganguli, Hashimoto, Icard, Zhang, Chaudhary, Wang, Li, Mai, Zhang, and Koreeda]{liang2023holistic}
Percy Liang, Rishi Bommasani, Tony Lee, Dimitris Tsipras, Dilara Soylu, Michihiro Yasunaga, Yian Zhang, Deepak Narayanan, Yuhuai Wu, Ananya Kumar, Benjamin Newman, Binhang Yuan, Bobby Yan, Ce~Zhang, Christian Cosgrove, Christopher~D. Manning, Christopher Ré, Diana Acosta-Navas, Drew~A. Hudson, Eric Zelikman, Esin Durmus, Faisal Ladhak, Frieda Rong, Hongyu Ren, Huaxiu Yao, Jue Wang, Keshav Santhanam, Laurel Orr, Lucia Zheng, Mert Yuksekgonul, Mirac Suzgun, Nathan Kim, Neel Guha, Niladri Chatterji, Omar Khattab, Peter Henderson, Qian Huang, Ryan Chi, Sang~Michael Xie, Shibani Santurkar, Surya Ganguli, Tatsunori Hashimoto, Thomas Icard, Tianyi Zhang, Vishrav Chaudhary, William Wang, Xuechen Li, Yifan Mai, Yuhui Zhang, and Yuta Koreeda.
\newblock Holistic evaluation of language models, 2023.

\bibitem[Lin et~al.(2022)Lin, Hilton, and Evans]{lin2022teaching}
Stephanie Lin, Jacob Hilton, and Owain Evans.
\newblock Teaching models to express their uncertainty in words, 2022.

\bibitem[Loshchilov and Hutter(2019)]{loshchilov2018decoupled}
Ilya Loshchilov and Frank Hutter.
\newblock Decoupled weight decay regularization.
\newblock In \emph{International Conference on Learning Representations}, 2019.

\bibitem[Malinin and Gales(2021)]{malinin2021uncertainty}
Andrey Malinin and Mark Gales.
\newblock Uncertainty estimation in autoregressive structured prediction.
\newblock In \emph{International Conference on Learning Representations}, 2021.
\newblock URL \url{https://openreview.net/forum?id=jN5y-zb5Q7m}.

\bibitem[Malinin et~al.(2021)Malinin, Band, Gal, Gales, Ganshin, Chesnokov, Noskov, Ploskonosov, Prokhorenkova, Provilkov, Raina, Raina, Roginskiy, Shmatova, Tigas, and Yangel]{malinin2021shifts}
Andrey Malinin, Neil Band, Yarin Gal, Mark Gales, Alexander Ganshin, German Chesnokov, Alexey Noskov, Andrey Ploskonosov, Liudmila Prokhorenkova, Ivan Provilkov, Vatsal Raina, Vyas Raina, Denis Roginskiy, Mariya Shmatova, Panagiotis Tigas, and Boris Yangel.
\newblock {Shifts: A Dataset of Real Distributional Shift Across Multiple Large-Scale Tasks}.
\newblock In \emph{Thirty-fifth Conference on Neural Information Processing Systems Datasets and Benchmarks Track}, 2021.

\bibitem[Mielke et~al.(2022)Mielke, Szlam, Dinan, and Boureau]{mielke-etal-2022-reducing}
Sabrina~J. Mielke, Arthur Szlam, Emily Dinan, and Y-Lan Boureau.
\newblock Reducing conversational agents{'} overconfidence through linguistic calibration.
\newblock \emph{Transactions of the Association for Computational Linguistics}, 10:\penalty0 857--872, 2022.
\newblock \doi{10.1162/tacl_a_00494}.
\newblock URL \url{https://aclanthology.org/2022.tacl-1.50}.

\bibitem[Min et~al.(2023)Min, Krishna, Lyu, Lewis, Yih, Koh, Iyyer, Zettlemoyer, and Hajishirzi]{min-etal-2023-factscore}
Sewon Min, Kalpesh Krishna, Xinxi Lyu, Mike Lewis, Wen-tau Yih, Pang Koh, Mohit Iyyer, Luke Zettlemoyer, and Hannaneh Hajishirzi.
\newblock {FA}ct{S}core: Fine-grained atomic evaluation of factual precision in long form text generation.
\newblock In Houda Bouamor, Juan Pino, and Kalika Bali, editors, \emph{Proceedings of the 2023 Conference on Empirical Methods in Natural Language Processing}, pages 12076--12100, Singapore, December 2023. Association for Computational Linguistics.
\newblock \doi{10.18653/v1/2023.emnlp-main.741}.
\newblock URL \url{https://aclanthology.org/2023.emnlp-main.741}.

\bibitem[Minderer et~al.(2021)Minderer, Djolonga, Romijnders, Hubis, Zhai, Houlsby, Tran, and Lucic]{minderer2021revisiting}
Matthias Minderer, Josip Djolonga, Rob Romijnders, Frances Hubis, Xiaohua Zhai, Neil Houlsby, Dustin Tran, and Mario Lucic.
\newblock Revisiting the calibration of modern neural networks.
\newblock In M.~Ranzato, A.~Beygelzimer, Y.~Dauphin, P.S. Liang, and J.~Wortman Vaughan, editors, \emph{Advances in Neural Information Processing Systems}, volume~34, pages 15682--15694. Curran Associates, Inc., 2021.
\newblock URL \url{https://proceedings.neurips.cc/paper_files/paper/2021/file/8420d359404024567b5aefda1231af24-Paper.pdf}.

\bibitem[Murphy(1973)]{murphy1973new}
Allan~H Murphy.
\newblock A new vector partition of the probability score.
\newblock \emph{Journal of Applied Meteorology and Climatology}, 12\penalty0 (4):\penalty0 595--600, 1973.

\bibitem[Nado et~al.(2022)Nado, Band, Collier, Djolonga, Dusenberry, Farquhar, Feng, Filos, Havasi, Jenatton, Jerfel, Liu, Mariet, Nixon, Padhy, Ren, Rudner, Sbahi, Wen, Wenzel, Murphy, Sculley, Lakshminarayanan, Snoek, Gal, and Tran]{nado2022uncertainty}
Zachary Nado, Neil Band, Mark Collier, Josip Djolonga, Michael~W. Dusenberry, Sebastian Farquhar, Qixuan Feng, Angelos Filos, Marton Havasi, Rodolphe Jenatton, Ghassen Jerfel, Jeremiah Liu, Zelda Mariet, Jeremy Nixon, Shreyas Padhy, Jie Ren, Tim G.~J. Rudner, Faris Sbahi, Yeming Wen, Florian Wenzel, Kevin Murphy, D.~Sculley, Balaji Lakshminarayanan, Jasper Snoek, Yarin Gal, and Dustin Tran.
\newblock Uncertainty baselines: Benchmarks for uncertainty \& robustness in deep learning, 2022.

\bibitem[Niculescu-Mizil and Caruana(2005)]{niculescu2005predicting}
Alexandru Niculescu-Mizil and Rich Caruana.
\newblock Predicting good probabilities with supervised learning.
\newblock In \emph{Proceedings of the 22nd International Conference on Machine Learning}, ICML '05, page 625–632, New York, NY, USA, 2005. Association for Computing Machinery.
\newblock ISBN 1595931805.
\newblock \doi{10.1145/1102351.1102430}.
\newblock URL \url{https://doi.org/10.1145/1102351.1102430}.

\bibitem[OpenAI et~al.(2023)OpenAI, Achiam, Adler, Agarwal, Ahmad, Akkaya, Aleman, Almeida, Altenschmidt, Altman, Anadkat, Avila, Babuschkin, Balaji, Balcom, Baltescu, Bao, Bavarian, Belgum, Bello, Berdine, Bernadett-Shapiro, Berner, Bogdonoff, Boiko, Boyd, Brakman, Brockman, Brooks, Brundage, Button, Cai, Campbell, Cann, Carey, Carlson, Carmichael, Chan, Chang, Chantzis, Chen, Chen, Chen, Chen, Chen, Chess, Cho, Chu, Chung, Cummings, Currier, Dai, Decareaux, Degry, Deutsch, Deville, Dhar, Dohan, Dowling, Dunning, Ecoffet, Eleti, Eloundou, Farhi, Fedus, Felix, Fishman, Forte, Fulford, Gao, Georges, Gibson, Goel, Gogineni, Goh, Gontijo-Lopes, Gordon, Grafstein, Gray, Greene, Gross, Gu, Guo, Hallacy, Han, Harris, He, Heaton, Heidecke, Hesse, Hickey, Hickey, Hoeschele, Houghton, Hsu, Hu, Hu, Huizinga, Jain, Jain, Jang, Jiang, Jiang, Jin, Jin, Jomoto, Jonn, Jun, Kaftan, Łukasz Kaiser, Kamali, Kanitscheider, Keskar, Khan, Kilpatrick, Kim, Kim, Kim, Kirchner, Kiros, Knight, Kokotajlo, Łukasz Kondraciuk, Kondrich,
  Konstantinidis, Kosic, Krueger, Kuo, Lampe, Lan, Lee, Leike, Leung, Levy, Li, Lim, Lin, Lin, Litwin, Lopez, Lowe, Lue, Makanju, Malfacini, Manning, Markov, Markovski, Martin, Mayer, Mayne, McGrew, McKinney, McLeavey, McMillan, McNeil, Medina, Mehta, Menick, Metz, Mishchenko, Mishkin, Monaco, Morikawa, Mossing, Mu, Murati, Murk, Mély, Nair, Nakano, Nayak, Neelakantan, Ngo, Noh, Ouyang, O'Keefe, Pachocki, Paino, Palermo, Pantuliano, Parascandolo, Parish, Parparita, Passos, Pavlov, Peng, Perelman, de~Avila Belbute~Peres, Petrov, de~Oliveira~Pinto, Michael, Pokorny, Pokrass, Pong, Powell, Power, Power, Proehl, Puri, Radford, Rae, Ramesh, Raymond, Real, Rimbach, Ross, Rotsted, Roussez, Ryder, Saltarelli, Sanders, Santurkar, Sastry, Schmidt, Schnurr, Schulman, Selsam, Sheppard, Sherbakov, Shieh, Shoker, Shyam, Sidor, Sigler, Simens, Sitkin, Slama, Sohl, Sokolowsky, Song, Staudacher, Such, Summers, Sutskever, Tang, Tezak, Thompson, Tillet, Tootoonchian, Tseng, Tuggle, Turley, Tworek, Uribe, Vallone, Vijayvergiya,
  Voss, Wainwright, Wang, Wang, Wang, Ward, Wei, Weinmann, Welihinda, Welinder, Weng, Weng, Wiethoff, Willner, Winter, Wolrich, Wong, Workman, Wu, Wu, Wu, Xiao, Xu, Yoo, Yu, Yuan, Zaremba, Zellers, Zhang, Zhang, Zhao, Zheng, Zhuang, Zhuk, and Zoph]{openai2023gpt4}
OpenAI, Josh Achiam, Steven Adler, Sandhini Agarwal, Lama Ahmad, Ilge Akkaya, Florencia~Leoni Aleman, Diogo Almeida, Janko Altenschmidt, Sam Altman, Shyamal Anadkat, Red Avila, Igor Babuschkin, Suchir Balaji, Valerie Balcom, Paul Baltescu, Haiming Bao, Mo~Bavarian, Jeff Belgum, Irwan Bello, Jake Berdine, Gabriel Bernadett-Shapiro, Christopher Berner, Lenny Bogdonoff, Oleg Boiko, Madelaine Boyd, Anna-Luisa Brakman, Greg Brockman, Tim Brooks, Miles Brundage, Kevin Button, Trevor Cai, Rosie Campbell, Andrew Cann, Brittany Carey, Chelsea Carlson, Rory Carmichael, Brooke Chan, Che Chang, Fotis Chantzis, Derek Chen, Sully Chen, Ruby Chen, Jason Chen, Mark Chen, Ben Chess, Chester Cho, Casey Chu, Hyung~Won Chung, Dave Cummings, Jeremiah Currier, Yunxing Dai, Cory Decareaux, Thomas Degry, Noah Deutsch, Damien Deville, Arka Dhar, David Dohan, Steve Dowling, Sheila Dunning, Adrien Ecoffet, Atty Eleti, Tyna Eloundou, David Farhi, Liam Fedus, Niko Felix, Simón~Posada Fishman, Juston Forte, Isabella Fulford, Leo Gao,
  Elie Georges, Christian Gibson, Vik Goel, Tarun Gogineni, Gabriel Goh, Rapha Gontijo-Lopes, Jonathan Gordon, Morgan Grafstein, Scott Gray, Ryan Greene, Joshua Gross, Shixiang~Shane Gu, Yufei Guo, Chris Hallacy, Jesse Han, Jeff Harris, Yuchen He, Mike Heaton, Johannes Heidecke, Chris Hesse, Alan Hickey, Wade Hickey, Peter Hoeschele, Brandon Houghton, Kenny Hsu, Shengli Hu, Xin Hu, Joost Huizinga, Shantanu Jain, Shawn Jain, Joanne Jang, Angela Jiang, Roger Jiang, Haozhun Jin, Denny Jin, Shino Jomoto, Billie Jonn, Heewoo Jun, Tomer Kaftan, Łukasz Kaiser, Ali Kamali, Ingmar Kanitscheider, Nitish~Shirish Keskar, Tabarak Khan, Logan Kilpatrick, Jong~Wook Kim, Christina Kim, Yongjik Kim, Hendrik Kirchner, Jamie Kiros, Matt Knight, Daniel Kokotajlo, Łukasz Kondraciuk, Andrew Kondrich, Aris Konstantinidis, Kyle Kosic, Gretchen Krueger, Vishal Kuo, Michael Lampe, Ikai Lan, Teddy Lee, Jan Leike, Jade Leung, Daniel Levy, Chak~Ming Li, Rachel Lim, Molly Lin, Stephanie Lin, Mateusz Litwin, Theresa Lopez, Ryan Lowe,
  Patricia Lue, Anna Makanju, Kim Malfacini, Sam Manning, Todor Markov, Yaniv Markovski, Bianca Martin, Katie Mayer, Andrew Mayne, Bob McGrew, Scott~Mayer McKinney, Christine McLeavey, Paul McMillan, Jake McNeil, David Medina, Aalok Mehta, Jacob Menick, Luke Metz, Andrey Mishchenko, Pamela Mishkin, Vinnie Monaco, Evan Morikawa, Daniel Mossing, Tong Mu, Mira Murati, Oleg Murk, David Mély, Ashvin Nair, Reiichiro Nakano, Rajeev Nayak, Arvind Neelakantan, Richard Ngo, Hyeonwoo Noh, Long Ouyang, Cullen O'Keefe, Jakub Pachocki, Alex Paino, Joe Palermo, Ashley Pantuliano, Giambattista Parascandolo, Joel Parish, Emy Parparita, Alex Passos, Mikhail Pavlov, Andrew Peng, Adam Perelman, Filipe de~Avila Belbute~Peres, Michael Petrov, Henrique~Ponde de~Oliveira~Pinto, Michael, Pokorny, Michelle Pokrass, Vitchyr Pong, Tolly Powell, Alethea Power, Boris Power, Elizabeth Proehl, Raul Puri, Alec Radford, Jack Rae, Aditya Ramesh, Cameron Raymond, Francis Real, Kendra Rimbach, Carl Ross, Bob Rotsted, Henri Roussez, Nick Ryder,
  Mario Saltarelli, Ted Sanders, Shibani Santurkar, Girish Sastry, Heather Schmidt, David Schnurr, John Schulman, Daniel Selsam, Kyla Sheppard, Toki Sherbakov, Jessica Shieh, Sarah Shoker, Pranav Shyam, Szymon Sidor, Eric Sigler, Maddie Simens, Jordan Sitkin, Katarina Slama, Ian Sohl, Benjamin Sokolowsky, Yang Song, Natalie Staudacher, Felipe~Petroski Such, Natalie Summers, Ilya Sutskever, Jie Tang, Nikolas Tezak, Madeleine Thompson, Phil Tillet, Amin Tootoonchian, Elizabeth Tseng, Preston Tuggle, Nick Turley, Jerry Tworek, Juan Felipe~Cerón Uribe, Andrea Vallone, Arun Vijayvergiya, Chelsea Voss, Carroll Wainwright, Justin~Jay Wang, Alvin Wang, Ben Wang, Jonathan Ward, Jason Wei, CJ~Weinmann, Akila Welihinda, Peter Welinder, Jiayi Weng, Lilian Weng, Matt Wiethoff, Dave Willner, Clemens Winter, Samuel Wolrich, Hannah Wong, Lauren Workman, Sherwin Wu, Jeff Wu, Michael Wu, Kai Xiao, Tao Xu, Sarah Yoo, Kevin Yu, Qiming Yuan, Wojciech Zaremba, Rowan Zellers, Chong Zhang, Marvin Zhang, Shengjia Zhao, Tianhao
  Zheng, Juntang Zhuang, William Zhuk, and Barret Zoph.
\newblock Gpt-4 technical report, 2023.

\bibitem[Ouyang et~al.(2022)Ouyang, Wu, Jiang, Almeida, Wainwright, Mishkin, Zhang, Agarwal, Slama, Ray, Schulman, Hilton, Kelton, Miller, Simens, Askell, Welinder, Christiano, Leike, and Lowe]{ouyang2022training}
Long Ouyang, Jeff Wu, Xu~Jiang, Diogo Almeida, Carroll~L. Wainwright, Pamela Mishkin, Chong Zhang, Sandhini Agarwal, Katarina Slama, Alex Ray, John Schulman, Jacob Hilton, Fraser Kelton, Luke Miller, Maddie Simens, Amanda Askell, Peter Welinder, Paul Christiano, Jan Leike, and Ryan Lowe.
\newblock Training language models to follow instructions with human feedback, 2022.
\newblock URL \url{https://arxiv.org/abs/2203.02155}.

\bibitem[Ovadia et~al.(2019)Ovadia, Fertig, Ren, Nado, Sculley, Nowozin, Dillon, Lakshminarayanan, and Snoek]{ovadia2019uncertainty}
Yaniv Ovadia, Emily Fertig, Jie Ren, Zachary Nado, D.~Sculley, Sebastian Nowozin, Joshua Dillon, Balaji Lakshminarayanan, and Jasper Snoek.
\newblock {{C}an You Trust Your Model's Uncertainty? {E}valuating Predictive Uncertainty Under Dataset Shift}.
\newblock In \emph{Advances in Neural Information Processing Systems 32}. 2019.

\bibitem[Park et~al.(2023)Park, Goldstein, O'Gara, Chen, and Hendrycks]{park2023ai}
Peter~S. Park, Simon Goldstein, Aidan O'Gara, Michael Chen, and Dan Hendrycks.
\newblock Ai deception: A survey of examples, risks, and potential solutions, 2023.

\bibitem[Platt et~al.(1999)]{platt1999probabilistic}
John Platt et~al.
\newblock Probabilistic outputs for support vector machines and comparisons to regularized likelihood methods.
\newblock \emph{Advances in large margin classifiers}, 10\penalty0 (3):\penalty0 61--74, 1999.

\bibitem[Savage(1971)]{savage1971elicitation}
Leonard~J. Savage.
\newblock Elicitation of personal probabilities and expectations.
\newblock \emph{Journal of the American Statistical Association}, 66\penalty0 (336):\penalty0 783--801, 1971.
\newblock ISSN 01621459.
\newblock URL \url{http://www.jstor.org/stable/2284229}.

\bibitem[Schulman et~al.(2017)Schulman, Wolski, Dhariwal, Radford, and Klimov]{schulman2017proximal}
John Schulman, Filip Wolski, Prafulla Dhariwal, Alec Radford, and Oleg Klimov.
\newblock Proximal policy optimization algorithms, 2017.

\bibitem[Shrivastava et~al.(2023)Shrivastava, Liang, and Kumar]{shrivastava2023llamas}
Vaishnavi Shrivastava, Percy Liang, and Ananya Kumar.
\newblock Llamas know what gpts don't show: Surrogate models for confidence estimation, 2023.

\bibitem[Thirunavukarasu et~al.(2023)Thirunavukarasu, Ting, Elangovan, Gutierrez, Tan, and Ting]{Thirunavukarasu2023}
Arun~James Thirunavukarasu, Darren Shu~Jeng Ting, Kabilan Elangovan, Laura Gutierrez, Ting~Fang Tan, and Daniel Shu~Wei Ting.
\newblock Large language models in medicine.
\newblock \emph{Nature Medicine}, 29\penalty0 (8):\penalty0 1930--1940, Aug 2023.
\newblock ISSN 1546-170X.
\newblock \doi{10.1038/s41591-023-02448-8}.
\newblock URL \url{https://doi.org/10.1038/s41591-023-02448-8}.

\bibitem[Tian et~al.(2023)Tian, Mitchell, Zhou, Sharma, Rafailov, Yao, Finn, and Manning]{tian-etal-2023-just}
Katherine Tian, Eric Mitchell, Allan Zhou, Archit Sharma, Rafael Rafailov, Huaxiu Yao, Chelsea Finn, and Christopher Manning.
\newblock Just ask for calibration: Strategies for eliciting calibrated confidence scores from language models fine-tuned with human feedback.
\newblock In Houda Bouamor, Juan Pino, and Kalika Bali, editors, \emph{Proceedings of the 2023 Conference on Empirical Methods in Natural Language Processing}, pages 5433--5442, Singapore, December 2023. Association for Computational Linguistics.
\newblock \doi{10.18653/v1/2023.emnlp-main.330}.
\newblock URL \url{https://aclanthology.org/2023.emnlp-main.330}.

\bibitem[Tian et~al.(2024)Tian, Mitchell, Yao, Manning, and Finn]{tian2024finetuning}
Katherine Tian, Eric Mitchell, Huaxiu Yao, Christopher~D Manning, and Chelsea Finn.
\newblock Fine-tuning language models for factuality.
\newblock In \emph{The Twelfth International Conference on Learning Representations}, 2024.
\newblock URL \url{https://openreview.net/forum?id=WPZ2yPag4K}.

\bibitem[together.ai(2023)]{redpajama3b}
together.ai.
\newblock Releasing 3b and 7b redpajama-incite family of models including base, instruction-tuned \& chat models, 2023.
\newblock URL \url{https://www.together.ai/blog/redpajama-models-v1}.

\bibitem[Tran et~al.(2022)Tran, Kirsch, Lakshminarayanan, Hu, Phan, Sculley, Snoek, Liu, Ren, van Amersfoort, Han, Buchanan, Murphy, Collier, Dusenberry, Band, Thain, Jenatton, Rudner, Gal, Nado, Mariet, Wang, and Ghahramani.]{tran2022plex}
Dustin Tran, Andreas Kirsch, Balaji Lakshminarayanan, Huiyi Hu, Du~Phan, D.~Sculley, Jasper Snoek, Jeremiah~Zhe Liu, Jie Ren, Joost van Amersfoort, Kehang Han, E.~Kelly Buchanan, Kevin~Patrick Murphy, Mark Collier, Michael~W Dusenberry, Neil Band, Nithum Thain, Rodolphe Jenatton, Tim G.~J. Rudner, Yarin Gal, Zachary Nado, Zelda~E Mariet, Zi~Wang, and Zoubin Ghahramani.
\newblock Plex: Towards reliability using pretrained large model extensions.
\newblock In \emph{ICML 2022 Workshop on Pre-training}, 2022.

\bibitem[Wallsten(1990)]{wallsten}
Thomas Wallsten.
\newblock \emph{Measuring Vague Uncertainties and Understanding Their Use in Decision Making}, pages 377--398.
\newblock Measuring Vague Uncertainties and Understanding Their Use in Decision Making, 01 1990.
\newblock ISBN 978-90-481-5785-3.
\newblock \doi{10.1007/978-94-015-7873-8_15}.

\bibitem[Wang et~al.(2022)Wang, Wei, Schuurmans, Le, Chi, Narang, Chowdhery, and Zhou]{wang2022selfconsistency}
Xuezhi Wang, Jason Wei, Dale Schuurmans, Quoc Le, Ed~Chi, Sharan Narang, Aakanksha Chowdhery, and Denny Zhou.
\newblock Self-consistency improves chain of thought reasoning in language models, 2022.
\newblock URL \url{https://arxiv.org/abs/2203.11171}.

\bibitem[Wang et~al.(2023)Wang, Wei, Schuurmans, Le, Chi, Narang, Chowdhery, and Zhou]{wang2023selfconsistency}
Xuezhi Wang, Jason Wei, Dale Schuurmans, Quoc~V Le, Ed~H. Chi, Sharan Narang, Aakanksha Chowdhery, and Denny Zhou.
\newblock Self-consistency improves chain of thought reasoning in language models.
\newblock In \emph{The Eleventh International Conference on Learning Representations}, 2023.
\newblock URL \url{https://openreview.net/forum?id=1PL1NIMMrw}.

\bibitem[Welbl et~al.(2017)Welbl, Liu, and Gardner]{welbl2017crowdsourcing}
Johannes Welbl, Nelson~F. Liu, and Matt Gardner.
\newblock Crowdsourcing multiple choice science questions, 2017.

\bibitem[Whiting et~al.(2019)Whiting, Hugh, and Bernstein]{Whiting_Hugh_Bernstein_2019}
Mark~E. Whiting, Grant Hugh, and Michael~S. Bernstein.
\newblock Fair work: Crowd work minimum wage with one line of code.
\newblock \emph{Proceedings of the AAAI Conference on Human Computation and Crowdsourcing}, 7\penalty0 (1):\penalty0 197--206, Oct. 2019.
\newblock \doi{10.1609/hcomp.v7i1.5283}.
\newblock URL \url{https://ojs.aaai.org/index.php/HCOMP/article/view/5283}.

\bibitem[Wolf et~al.(2020)Wolf, Debut, Sanh, Chaumond, Delangue, Moi, Cistac, Rault, Louf, Funtowicz, Davison, Shleifer, von Platen, Ma, Jernite, Plu, Xu, Scao, Gugger, Drame, Lhoest, and Rush]{wolf-etal-2020-transformers}
Thomas Wolf, Lysandre Debut, Victor Sanh, Julien Chaumond, Clement Delangue, Anthony Moi, Pierric Cistac, Tim Rault, Rémi Louf, Morgan Funtowicz, Joe Davison, Sam Shleifer, Patrick von Platen, Clara Ma, Yacine Jernite, Julien Plu, Canwen Xu, Teven~Le Scao, Sylvain Gugger, Mariama Drame, Quentin Lhoest, and Alexander~M. Rush.
\newblock Transformers: State-of-the-art natural language processing.
\newblock In \emph{Proceedings of the 2020 Conference on Empirical Methods in Natural Language Processing: System Demonstrations}, pages 38--45, Online, October 2020. Association for Computational Linguistics.
\newblock URL \url{https://www.aclweb.org/anthology/2020.emnlp-demos.6}.

\bibitem[Xiong et~al.(2024)Xiong, Hu, Lu, LI, Fu, He, and Hooi]{xiong2024can}
Miao Xiong, Zhiyuan Hu, Xinyang Lu, YIFEI LI, Jie Fu, Junxian He, and Bryan Hooi.
\newblock Can {LLM}s express their uncertainty? an empirical evaluation of confidence elicitation in {LLM}s.
\newblock In \emph{The Twelfth International Conference on Learning Representations}, 2024.
\newblock URL \url{https://openreview.net/forum?id=gjeQKFxFpZ}.

\bibitem[Yang et~al.(2023)Yang, Chern, Qiu, Neubig, and Liu]{yang2023alignment}
Yuqing Yang, Ethan Chern, Xipeng Qiu, Graham Neubig, and Pengfei Liu.
\newblock Alignment for honesty, 2023.

\bibitem[Ye et~al.(2024)Ye, Yang, Pang, Wang, Wong, Yilmaz, Shi, and Tu]{ye2024benchmarking}
Fanghua Ye, Mingming Yang, Jianhui Pang, Longyue Wang, Derek~F. Wong, Emine Yilmaz, Shuming Shi, and Zhaopeng Tu.
\newblock Benchmarking llms via uncertainty quantification, 2024.

\bibitem[Zadrozny and Elkan(2001)]{zadrozny2001obtaining}
Bianca Zadrozny and Charles Elkan.
\newblock Obtaining calibrated probability estimates from decision trees and naive bayesian classifiers.
\newblock In \emph{Icml}, volume~1, pages 609--616, 2001.

\bibitem[Zhao and Ermon(2021)]{zhao2021right}
Shengjia Zhao and Stefano Ermon.
\newblock Right decisions from wrong predictions: A mechanism design alternative to individual calibration.
\newblock In Arindam Banerjee and Kenji Fukumizu, editors, \emph{Proceedings of The 24th International Conference on Artificial Intelligence and Statistics}, volume 130 of \emph{Proceedings of Machine Learning Research}, pages 2683--2691. PMLR, 13--15 Apr 2021.
\newblock URL \url{https://proceedings.mlr.press/v130/zhao21a.html}.

\bibitem[Zhao et~al.(2021)Zhao, Kim, Sahoo, Ma, and Ermon]{zhao2021calibrating}
Shengjia Zhao, Michael Kim, Roshni Sahoo, Tengyu Ma, and Stefano Ermon.
\newblock Calibrating predictions to decisions: A novel approach to multi-class calibration.
\newblock In M.~Ranzato, A.~Beygelzimer, Y.~Dauphin, P.S. Liang, and J.~Wortman Vaughan, editors, \emph{Advances in Neural Information Processing Systems}, volume~34, pages 22313--22324. Curran Associates, Inc., 2021.
\newblock URL \url{https://proceedings.neurips.cc/paper_files/paper/2021/file/bbc92a647199b832ec90d7cf57074e9e-Paper.pdf}.

\bibitem[Zhao et~al.(2023)Zhao, Gu, Varma, Luo, Huang, Xu, Wright, Shojanazeri, Ott, Shleifer, Desmaison, Balioglu, Damania, Nguyen, Chauhan, Hao, Mathews, and Li]{zhao2023pytorch}
Yanli Zhao, Andrew Gu, Rohan Varma, Liang Luo, Chien-Chin Huang, Min Xu, Less Wright, Hamid Shojanazeri, Myle Ott, Sam Shleifer, Alban Desmaison, Can Balioglu, Pritam Damania, Bernard Nguyen, Geeta Chauhan, Yuchen Hao, Ajit Mathews, and Shen Li.
\newblock Pytorch fsdp: Experiences on scaling fully sharded data parallel, 2023.

\bibitem[Zhou et~al.(2023)Zhou, Jurafsky, and Hashimoto]{zhou-etal-2023-navigating}
Kaitlyn Zhou, Dan Jurafsky, and Tatsunori Hashimoto.
\newblock Navigating the grey area: How expressions of uncertainty and overconfidence affect language models.
\newblock In Houda Bouamor, Juan Pino, and Kalika Bali, editors, \emph{Proceedings of the 2023 Conference on Empirical Methods in Natural Language Processing}, pages 5506--5524, Singapore, December 2023. Association for Computational Linguistics.
\newblock \doi{10.18653/v1/2023.emnlp-main.335}.
\newblock URL \url{https://aclanthology.org/2023.emnlp-main.335}.

\end{thebibliography}
\bibliographystyle{plainnat}

\newpage
\appendix
\onecolumn

\crefalias{section}{appsec}
\crefalias{subsection}{appsec}
\crefalias{subsubsection}{appsec}

\setcounter{equation}{0}
\renewcommand{\theequation}{\thesection.\arabic{equation}}

\onecolumn

\section*{\LARGE Supplementary Material}
\label{sec:appendix}

\section*{Table of Contents}
\vspace*{-10pt}
\startcontents[sections]
\printcontents[sections]{l}{1}{\setcounter{tocdepth}{2}}

\newpage

\section{Additional Results}
\label{app:more_qa_results}

\subsection{Codebase}

For prompts, checkpoints, and implementations of methods and datasets, please refer to our codebase:
\begin{ourMethodBox}
\centering
\large
\href{https://github.com/tatsu-lab/linguistic_calibration}{\texttt{https://github.com/tatsu-lab/linguistic\_calibration}}
\end{ourMethodBox}

\subsection{Additional Baselines}

In addition to the baselines described in \S\ref{subsec:exp_setup}, we below provide full results including several other baselines.
All references to GPT-4 in this paper use version \texttt{gpt-4-1106-preview}.
Unless otherwise specified, all references to Claude or Claude 2 use version \texttt{claude-2.0}.
Due to compute constraints, all GPT-4 baselines are evaluated on a smaller number of question-answering test samples: 1000 samples for TriviaQA, Jeopardy, and SciQ, and the entire 1515 sample test set for BioASQ Task B (cf. Appendix~\ref{app:sec_eval_details} for further dataset and evaluation details).

We include the following additional non-confidence baselines:
\begin{itemize}
    \item \textbf{Llama 2 Chat.} We zero-shot prompt Llama 2 Chat to generate long-form responses to evaluation queries.
    \item \textbf{GPT-4 0-Shot.} We zero-shot prompt GPT-4 to generate long-form responses to evaluation queries.
    \item \textbf{GPT-4 ICL 8-Shot.} Analogous to the Llama 2 7B \textbf{ICL} baseline. We randomly sample 8 queries, generate long-form responses with GPT-4, manually fact-check those responses using Wikipedia, and use these fact-checked (query, response) pairs as ICL examples for GPT-4.
\end{itemize}

We also include the following additional confidence baselines:
\begin{itemize}
    \item \textbf{GPT-4 Just Ask for Uncertainty (JAFU) 0-Shot.} We zero-shot prompt GPT-4 to generate long-form responses to evaluation queries, and include an instruction in the prompt directing GPT-4 to indicate any uncertainty in its claims using probabilities.
    \item \textbf{GPT-4 Summary ICL 8-Shot.} Analogous to the Llama 2 7B \textbf{Summary ICL} baseline. We sample 8 queries from the prompt development split $\{q^{(i)}\}_{i = 1}^8$ (\textit{i.e.,} questions $x$ which have been converted to open-ended queries). For each query $q^{(i)}$, we generate 8 long-form responses from GPT-4: $\{z^{(i)}_j\}_{j = 1}^8 \sim \pi_{\text{GPT-4}}(z \mid q^{(i)})$. We then summarize these responses into a single consensus response: $s^{(i)} \sim \text{Summarize}(s \mid z^{(i)}_1, \dots, z^{(i)}_8)$. Finally, we use the queries and summaries as ICL examples $\{(q^{(i)}, s^{(i)})\}_{i=1}^8$.
    \item \textbf{Direct Summary Eval.} For a given evaluation query $q$, we use the Llama 2 7B ICL baseline to generate 8 long-form responses $\{z_i\}_{i = 1}^8$, and use Claude 2 to summarize these responses into a single consensus response: $s \sim \text{Summarize}(s \mid z_1, \dots, z_8)$. Then, we directly evaluate these summaries $s$. \textbf{This is an oracle baseline} because it requires sampling several long-form generations from the base LM at evaluation time, followed by summarization with an API-based LLM.
\end{itemize}

\newpage
\subsection{TriviaQA: Full Accuracy-ECE Frontier}

\begin{figure*}[ht!]
    \centering

    \includegraphics[height=0.07\textwidth]{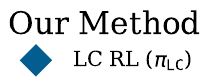}
    \includegraphics[width=0.9\textwidth]{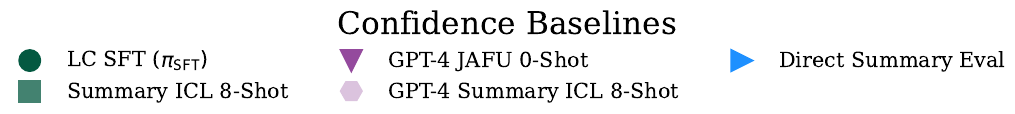}
    \includegraphics[width=0.7\textwidth]{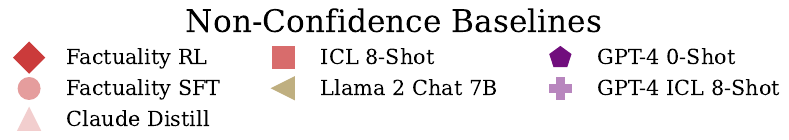}
    \includegraphics[width=\textwidth]{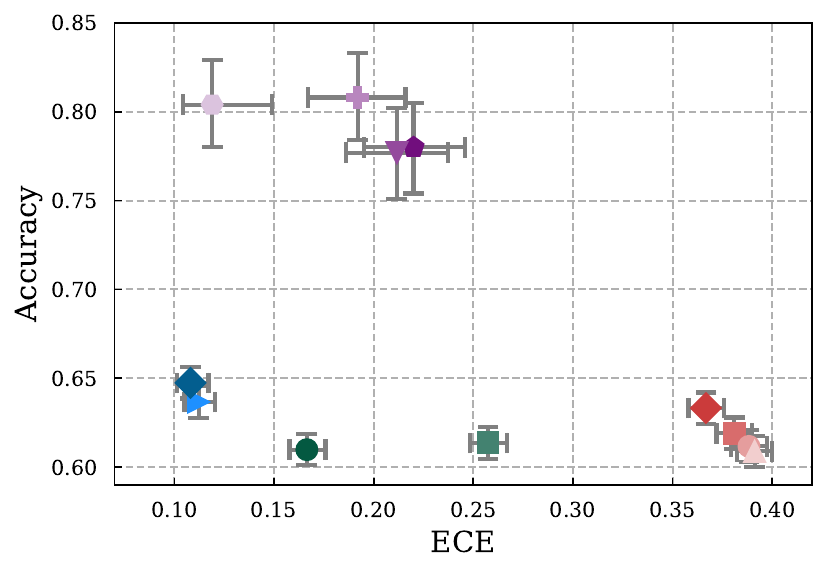}
    \caption{\textbf{Accuracy-ECE Frontier for All Methods on TriviaQA,} evaluated with simulated reader (upper left is better). 
    GPT-4 Summary ICL works well despite its simplicity. 
    \LC RL outperforms all methods in ECE, including GPT-4 Summary ICL and Direct Summary Eval.
    Llama 2 Chat 7B performs significantly worse than all other methods and is not displayed for clarity (cf. Table~\ref{tab:metrics_qa} for numerical results).
    }
    \label{fig:appendix_tqa_frontier}
\end{figure*}

\newpage

\subsection{TriviaQA: Additional Reliability Diagrams}
\label{app:more_reliability_diagram}
\FloatBarrier
\begin{figure}[h]
    \captionsetup[subfigure]{justification=centering,aboveskip=-0.5pt,font=normalsize}
    \centering
    \begin{subfigure}{0.45\textwidth}
        \includegraphics[width=\textwidth]{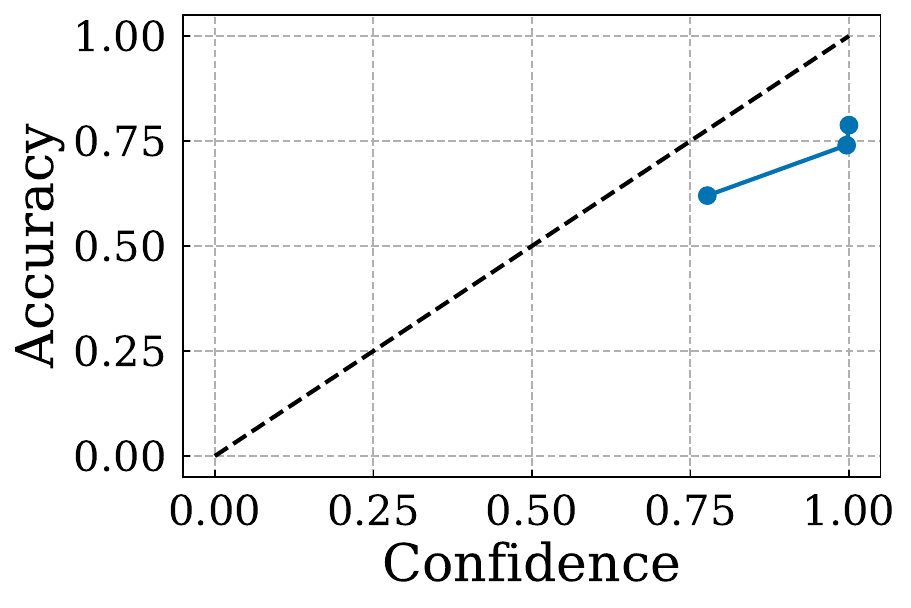}
        \caption{GPT-4 JAFU 0-Shot. ECE: $0.212$}
        \vspace{1em}
    \end{subfigure}
    \begin{subfigure}{0.45\textwidth}
        \includegraphics[width=\textwidth]{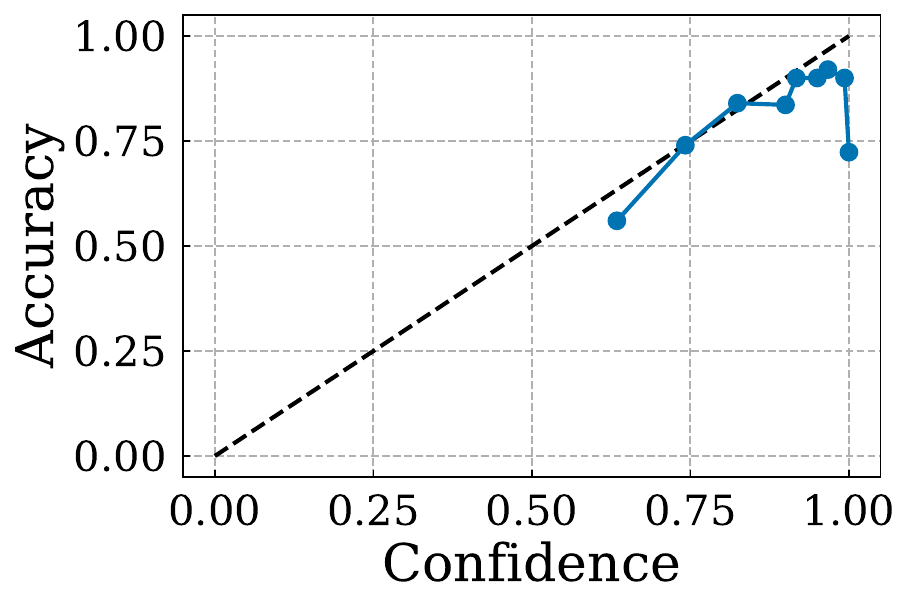}
        \caption{GPT-4 Summary ICL 8-Shot. ECE: $0.119$}
        \vspace{1em}
    \end{subfigure}
    \begin{subfigure}{0.45\textwidth}
        \includegraphics[width=\textwidth]{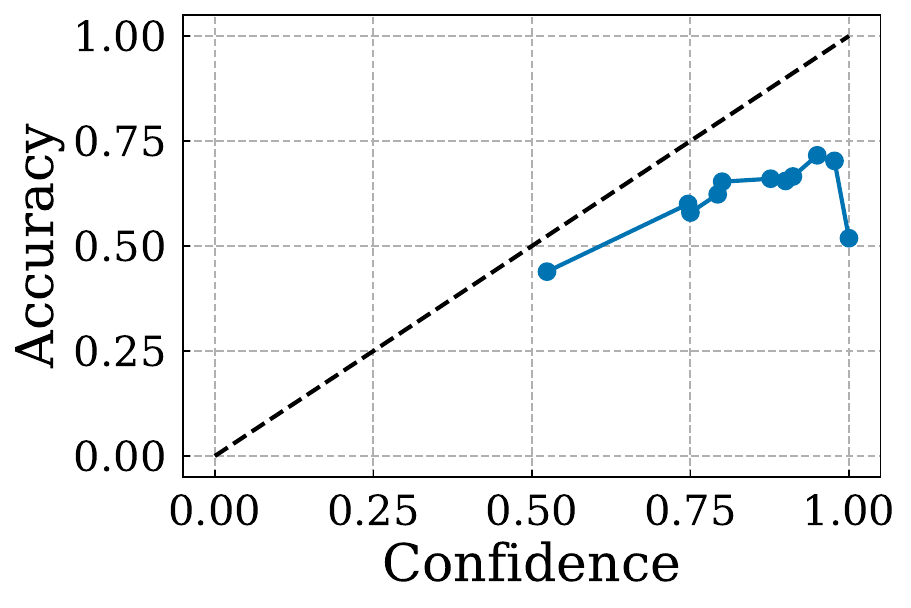}
        \caption{Summary ICL 8-Shot. ECE: $0.257$}
    \end{subfigure}
    \begin{subfigure}{0.45\textwidth}
        \includegraphics[width=\textwidth]{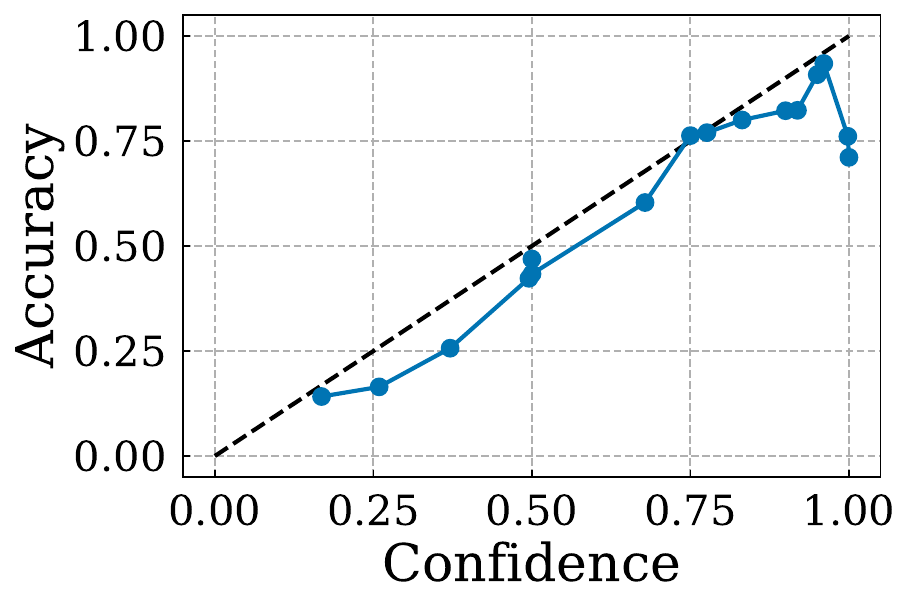}
        \caption{Direct Summary Eval. ECE: $0.112$}
    \end{subfigure}
    \caption{\textbf{TriviaQA Reliability Diagrams} for additional baselines and oracle methods, simulated reader. Notably, Summary ICL works well for both Llama 2 7B (Summary ICL 8-Shot) and GPT-4. The oracle baseline of Direct Summary Eval has strong ECE.}
\end{figure}

\newpage
\subsection{Jeopardy: Full Accuracy-ECE Frontier}

\begin{figure*}[ht]
    \centering
    \includegraphics[height=0.07\textwidth]{fig/fig_appendix_v2/frontier_legend/legend_Our_Method.pdf}
    \includegraphics[width=0.9\textwidth]{fig/fig_appendix_v2/frontier_legend/legend_Confidence_Baselines.pdf}
    \includegraphics[width=0.7\textwidth]{fig/fig_appendix_v2/frontier_legend/legend_Non-Confidence_Baselines.pdf}
    \includegraphics[width=\textwidth]{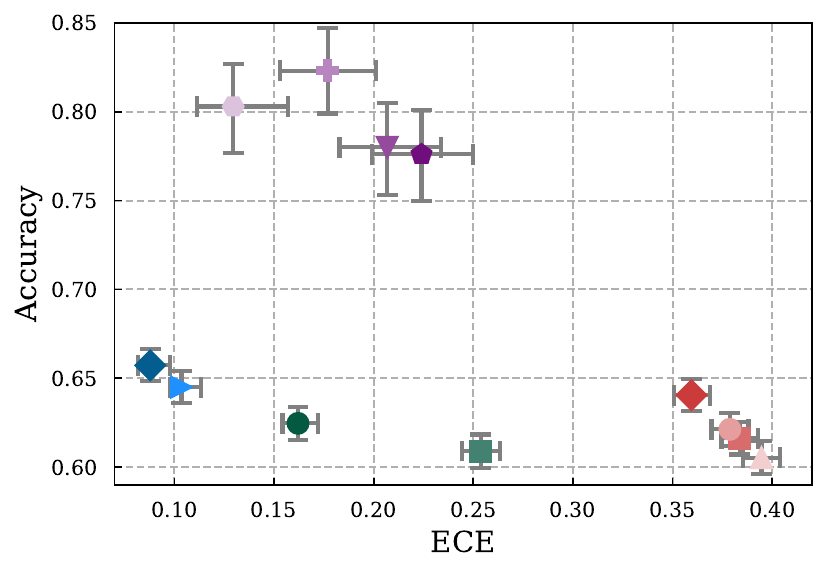}
    \caption{\textbf{Accuracy-ECE Frontier for All Methods on Jeopardy,} evaluated with simulated reader (upper left is better). Results are similar to TriviaQA. 
    GPT-4 Summary ICL works well despite its simplicity. 
    \LC RL outperforms all methods in ECE, including GPT-4 Summary ICL and Direct Summary Eval. 
    Llama 2 Chat 7B performs significantly worse than all other methods and is not displayed for clarity (cf. Table~\ref{tab:metrics_qa} for numerical results).}
    \label{fig:appendix_jeopardy_frontier}
\end{figure*}
\newpage
\subsection{Jeopardy: All Reliability Diagrams}

\begin{figure}[h]
    \captionsetup[subfigure]{justification=centering,aboveskip=-0.5pt,font=normalsize}
    \centering
    \begin{subfigure}{0.45\textwidth}
        \includegraphics[width=\textwidth]{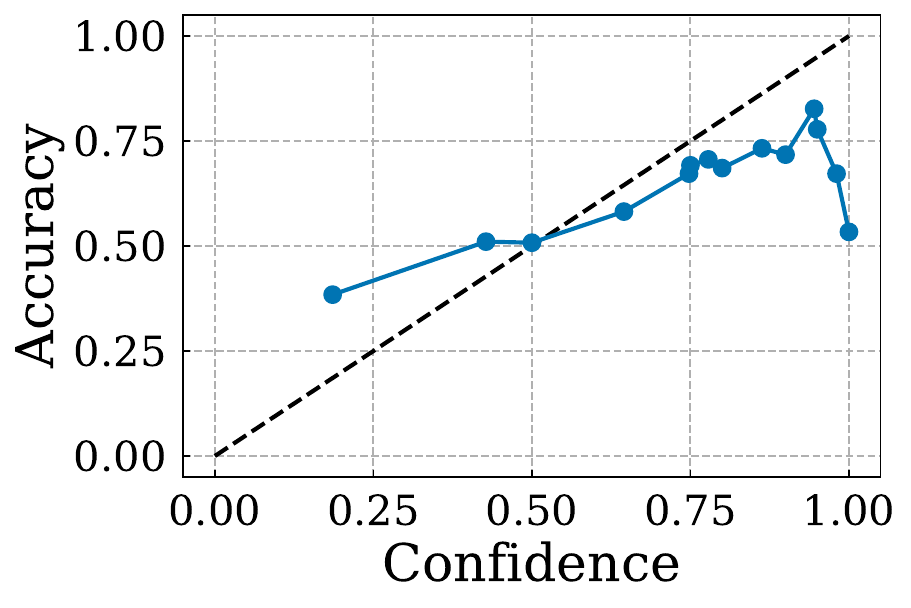}
        \caption{\LC SFT ($\pi_{\mathtt{SFT}})$. ECE: $0.162$}
        \vspace{0.75em}
    \end{subfigure}
    \begin{subfigure}{0.45\textwidth}
        \includegraphics[width=\textwidth]{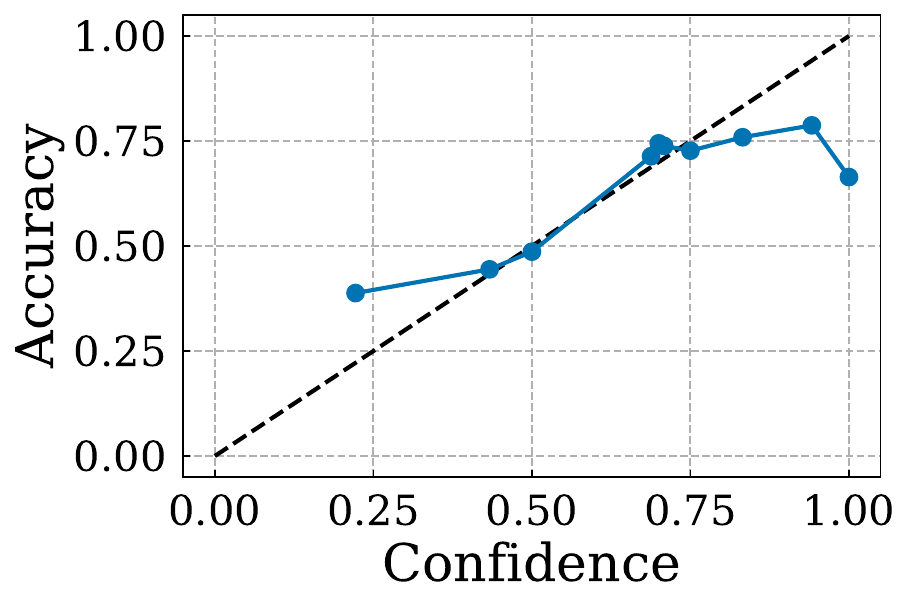}
        \caption{\LC RL ($\pi_{\mathtt{RL}})$. ECE: $0.088$}
        \vspace{0.75em}
    \end{subfigure}
    \begin{subfigure}{0.45\textwidth}
        \includegraphics[width=\textwidth]{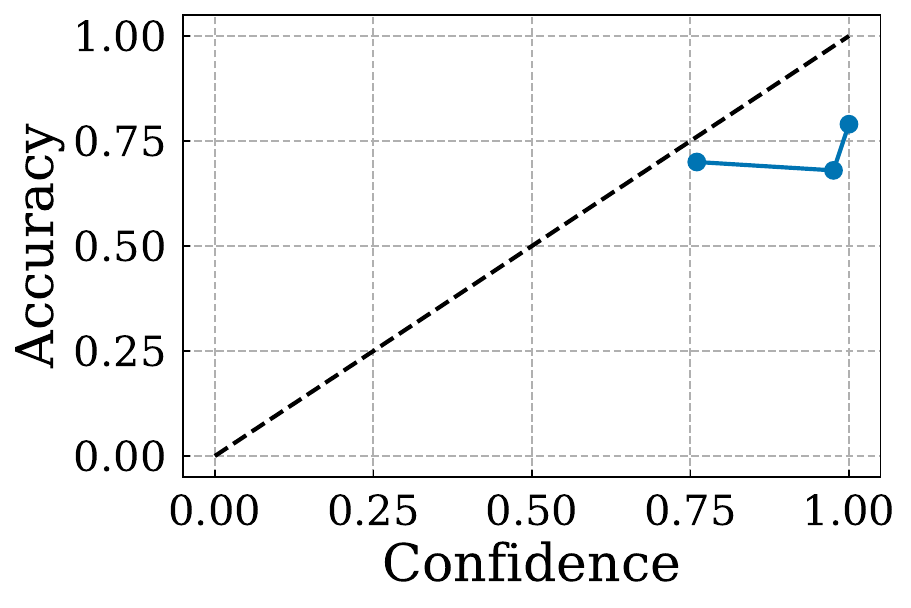}
        \caption{GPT-4 JAFU 0-Shot. ECE: $0.207$}
        \vspace{0.75em}
    \end{subfigure}
    \begin{subfigure}{0.45\textwidth}
        \includegraphics[width=\textwidth]{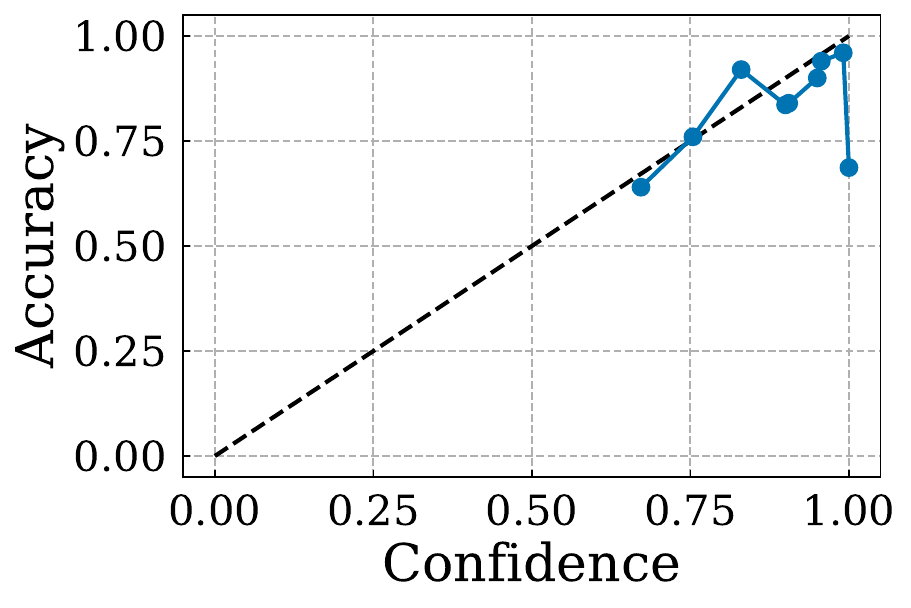}
        \caption{GPT-4 Summary ICL 8-Shot. ECE: $0.130$}
        \vspace{0.75em}
    \end{subfigure}
    \begin{subfigure}{0.45\textwidth}
        \includegraphics[width=\textwidth]{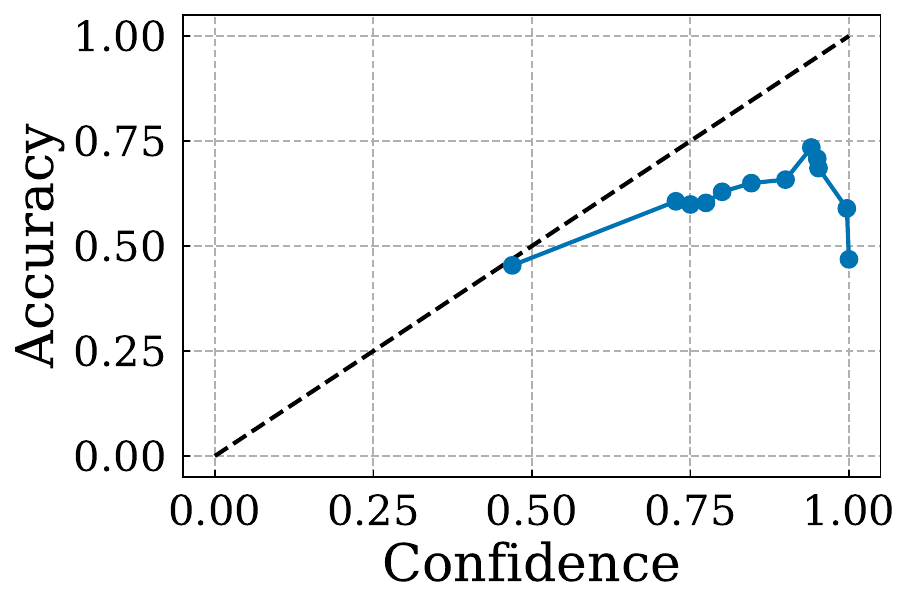}
        \caption{Summary ICL 8-Shot. ECE: $0.254$}
    \end{subfigure}
    \begin{subfigure}{0.45\textwidth}
        \includegraphics[width=\textwidth]{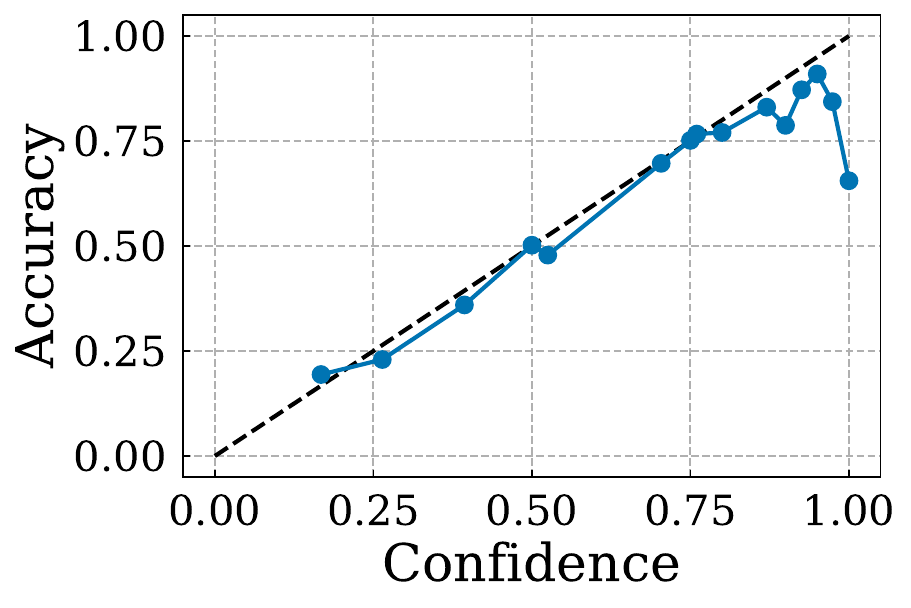}
        \caption{Direct Summary Eval. ECE: $0.104$}
    \end{subfigure}
    \caption{\textbf{Jeopardy Reliability Diagrams} for all methods, simulated reader. \LC RL has better ECE than all other methods while predicting a variety of confidence levels. Notably, Summary ICL works well for both Llama 2 7B (Summary ICL 8-Shot) and GPT-4. The oracle baseline of Direct Summary Eval has strong ECE.}
\end{figure}

\newpage
\subsection{SciQ: Full Accuracy-ECE Frontier}

\begin{figure*}[ht]
    \centering
    \includegraphics[height=0.07\textwidth]{fig/fig_appendix_v2/frontier_legend/legend_Our_Method.pdf}
    \includegraphics[width=0.9\textwidth]{fig/fig_appendix_v2/frontier_legend/legend_Confidence_Baselines.pdf}
    \includegraphics[width=0.7\textwidth]{fig/fig_appendix_v2/frontier_legend/legend_Non-Confidence_Baselines.pdf}
    \includegraphics[width=0.9\textwidth]{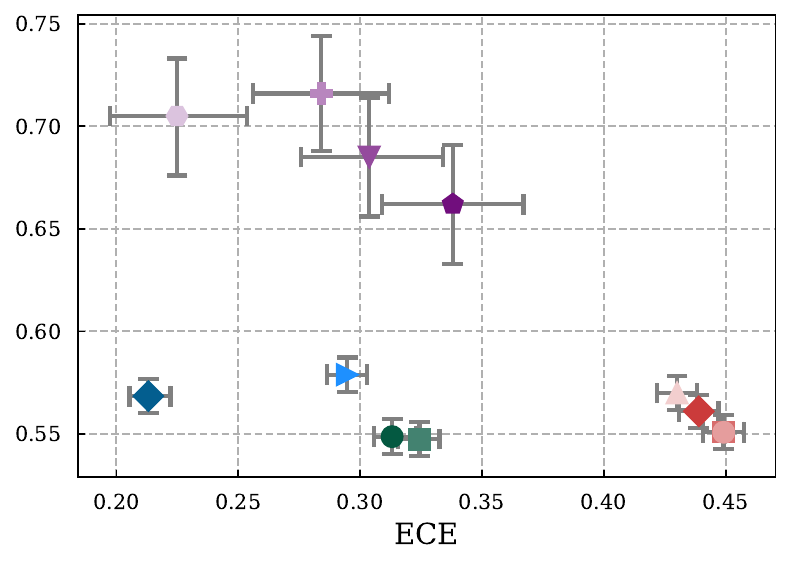}
    \caption{\textbf{Accuracy-ECE Frontier for All Methods on SciQ,} evaluated with simulated reader (upper left is better). 
    GPT-4 Summary ICL works well despite its simplicity. 
    \LC RL outperforms all methods in ECE, including GPT-4 Summary ICL and Direct Summary Eval. 
    Llama 2 Chat 7B performs significantly worse than all other methods and is not displayed for clarity (cf. Table~\ref{tab:metrics_qa} for numerical results).}
    \label{fig:appendix_sciq_frontier}
\end{figure*}

\newpage
\subsection{SciQ: All Reliability Diagrams}

\begin{figure}[h]
    \captionsetup[subfigure]{justification=centering,aboveskip=-0.5pt,font=normalsize}
    \centering
    \begin{subfigure}{0.45\textwidth}
        \includegraphics[width=\textwidth]{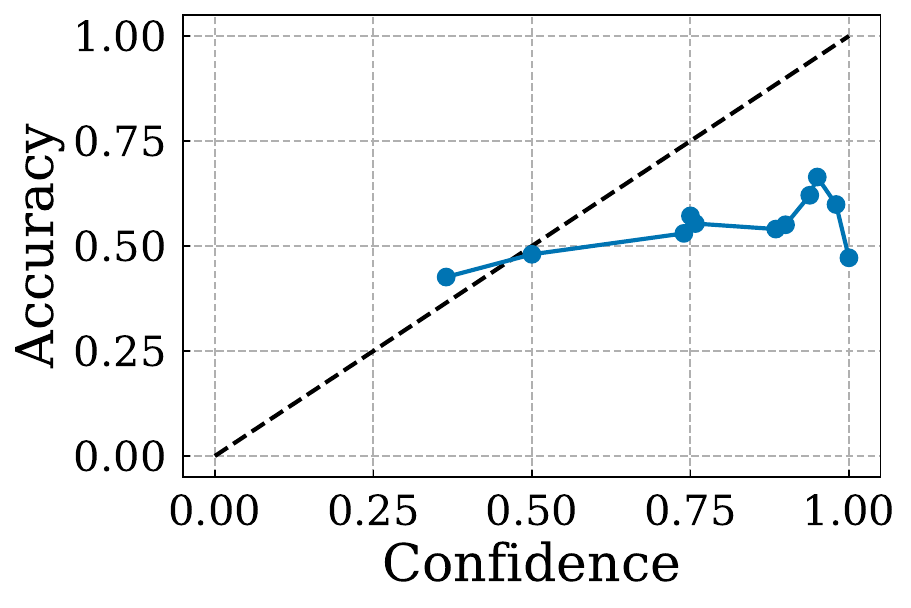}
        \caption{\LC SFT ($\pi_{\mathtt{SFT}})$. ECE: $0.313$}
        \vspace{0.75em}
    \end{subfigure}
    \begin{subfigure}{0.45\textwidth}
        \includegraphics[width=\textwidth]{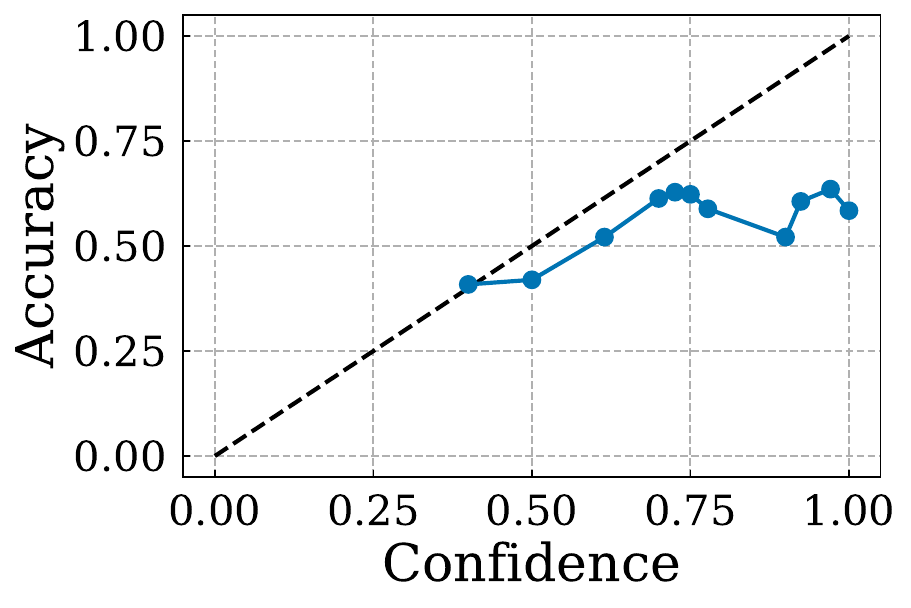}
        \caption{\LC RL ($\pi_{\mathtt{RL}})$. ECE: $0.213$}
        \vspace{0.75em}
    \end{subfigure}
    \begin{subfigure}{0.45\textwidth}
        \includegraphics[width=\textwidth]{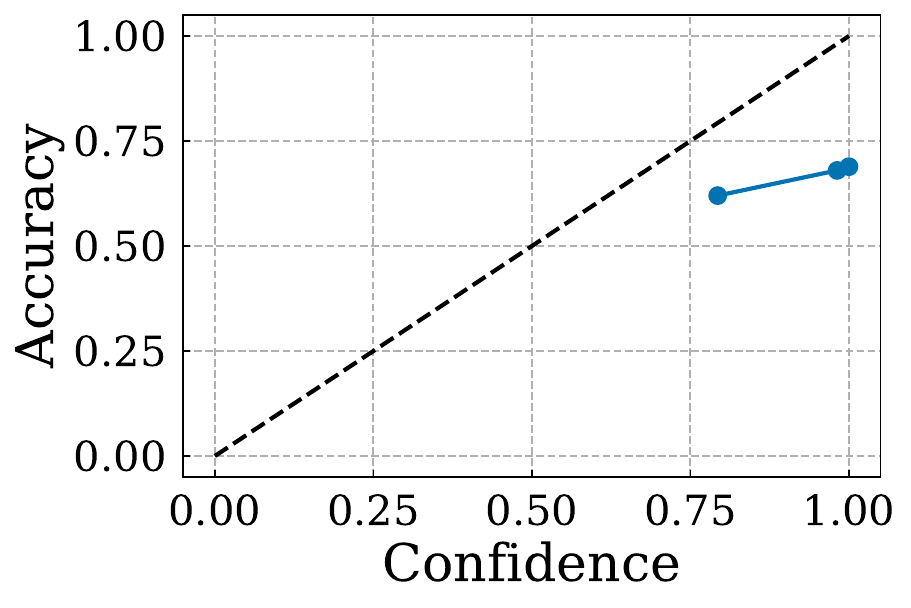}
        \caption{GPT-4 JAFU 0-Shot. ECE: $0.304$}
        \vspace{0.75em}
    \end{subfigure}
    \begin{subfigure}{0.45\textwidth}
        \includegraphics[width=\textwidth]{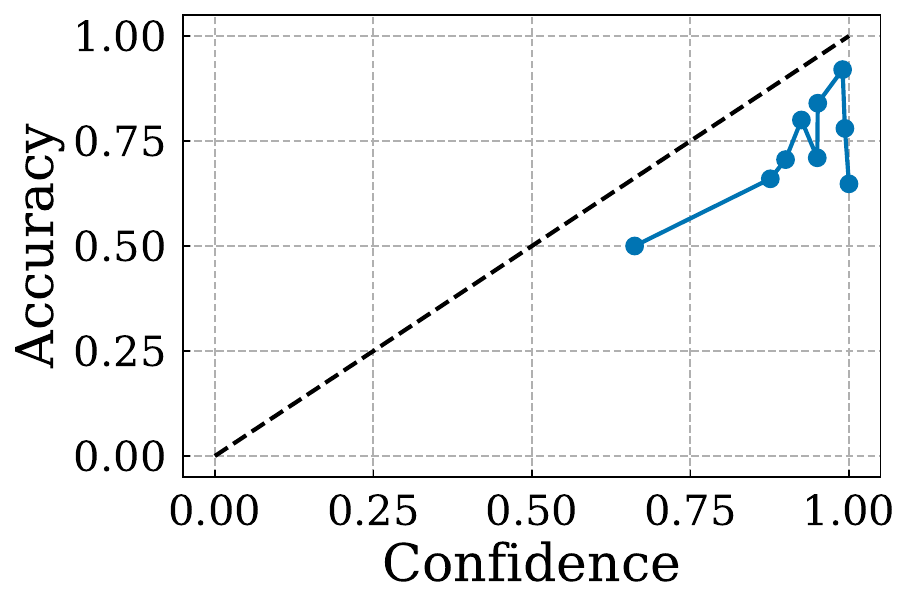}
        \caption{GPT-4 Summary ICL 8-Shot. ECE: $0.225$}
        \vspace{0.75em}
    \end{subfigure}
    \begin{subfigure}{0.45\textwidth}
        \includegraphics[width=\textwidth]{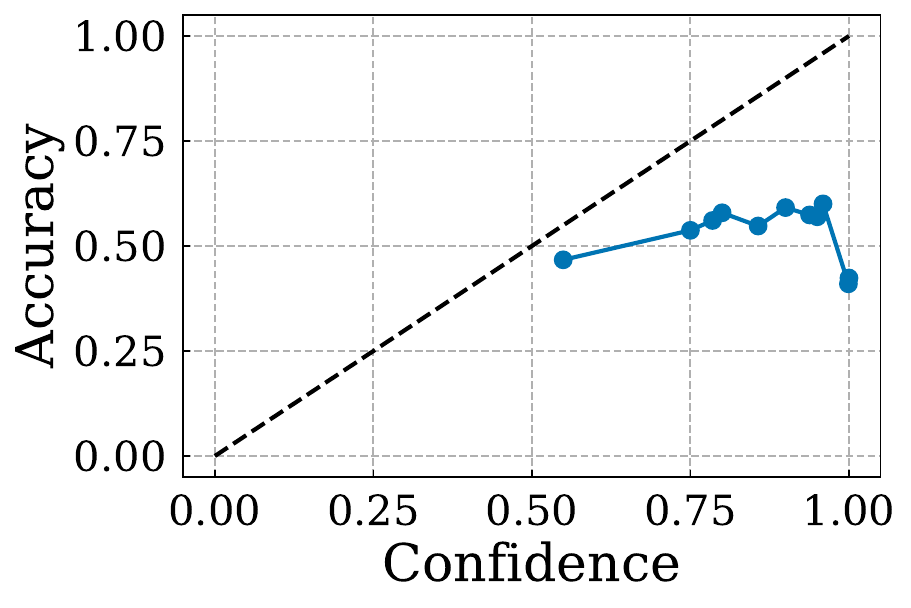}
        \caption{Summary ICL 8-Shot. ECE: $0.324$}
    \end{subfigure}
    \begin{subfigure}{0.45\textwidth}
        \includegraphics[width=\textwidth]{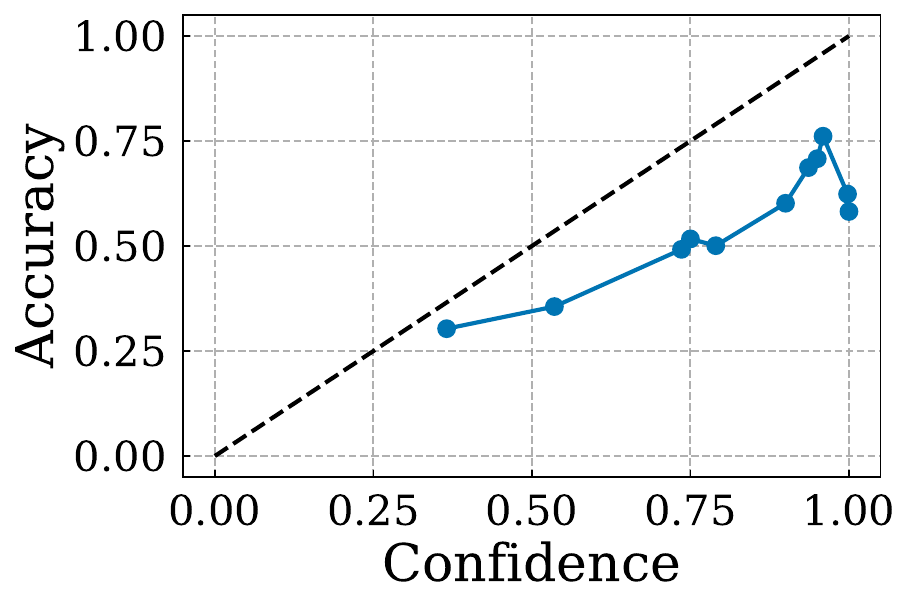}
        \caption{Direct Summary Eval. ECE: $0.295$}
    \end{subfigure}
    \caption{\textbf{SciQ Reliability Diagrams} for all methods, simulated reader. 
    \LC RL has better ECE than all other methods while predicting a variety of confidence levels. 
    }
\end{figure}

\newpage
\subsection{BioASQ Task B: Full Accuracy-ECE Frontier}

\begin{figure*}[ht]
    \centering
    \includegraphics[height=0.07\textwidth]{fig/fig_appendix_v2/frontier_legend/legend_Our_Method.pdf}
    \includegraphics[width=0.9\textwidth]{fig/fig_appendix_v2/frontier_legend/legend_Confidence_Baselines.pdf}
    \includegraphics[width=0.7\textwidth]{fig/fig_appendix_v2/frontier_legend/legend_Non-Confidence_Baselines.pdf}
    \includegraphics[width=0.9\textwidth]{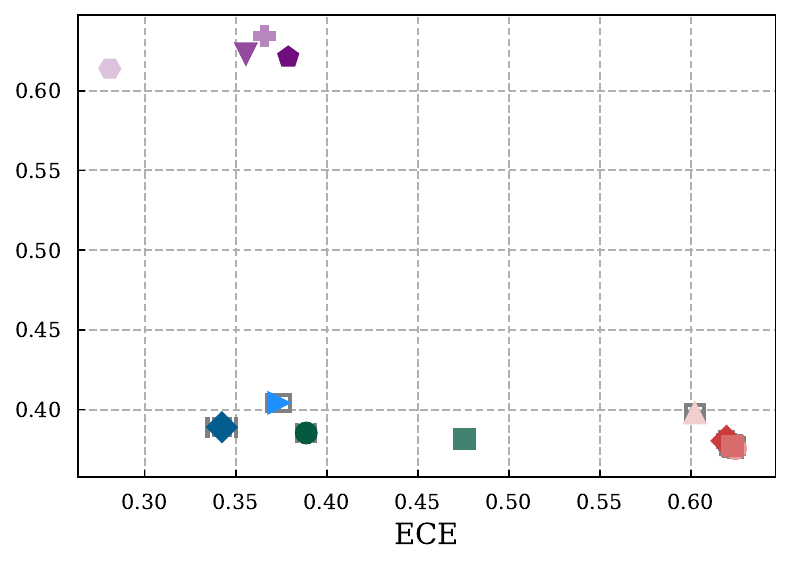}
    \caption{\textbf{Accuracy-ECE Frontier for All Methods on BioASQ Task B,} evaluated with simulated reader (upper left is better). 
    Under a significant distribution shift, \LC RL ECE is competitive with GPT-4--based methods and outperforms all Llama-based baselines.
    BioASQ is a small expert-annotated dataset, so we report Student's $t$ 0.95 CIs over 10 evaluation seeds for non-GPT-4 methods.
    Llama 2 Chat 7B performs significantly worse than all other methods and is not displayed for clarity (cf. Table~\ref{tab:metrics_qa} for all numerical results).
    }
    \label{fig:appendix_bioasq_frontier}
\end{figure*}

\newpage
\subsection{BioASQ Task B: All Reliability Diagrams}

\begin{figure}[h]
    \captionsetup[subfigure]{justification=centering,aboveskip=-0.5pt,font=normalsize}
    \centering
    \begin{subfigure}{0.45\textwidth}
        \includegraphics[width=\textwidth]{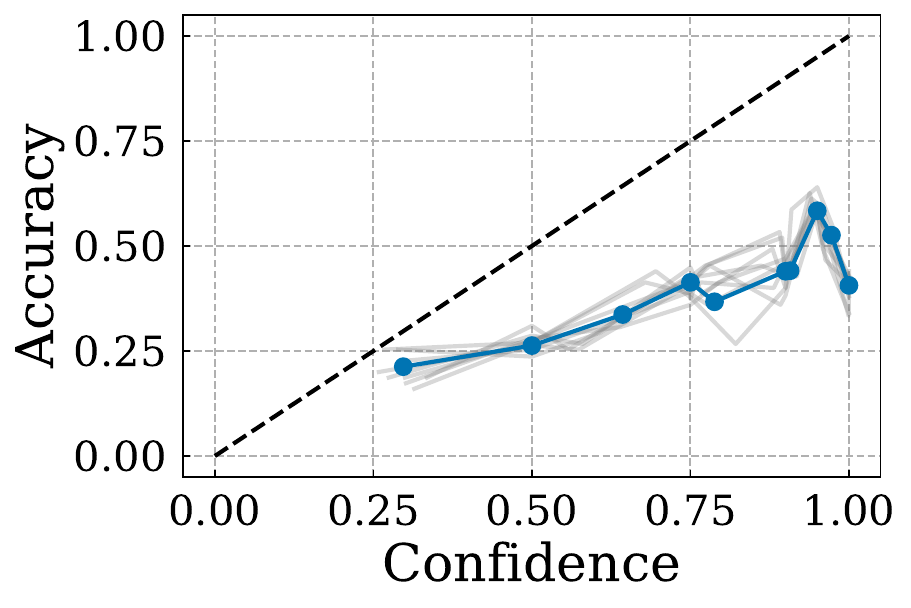}
        \caption{\LC SFT ($\pi_{\mathtt{SFT}})$. ECE: $0.390$}
        \vspace{0.5em}
    \end{subfigure}
    \begin{subfigure}{0.45\textwidth}
        \includegraphics[width=\textwidth]{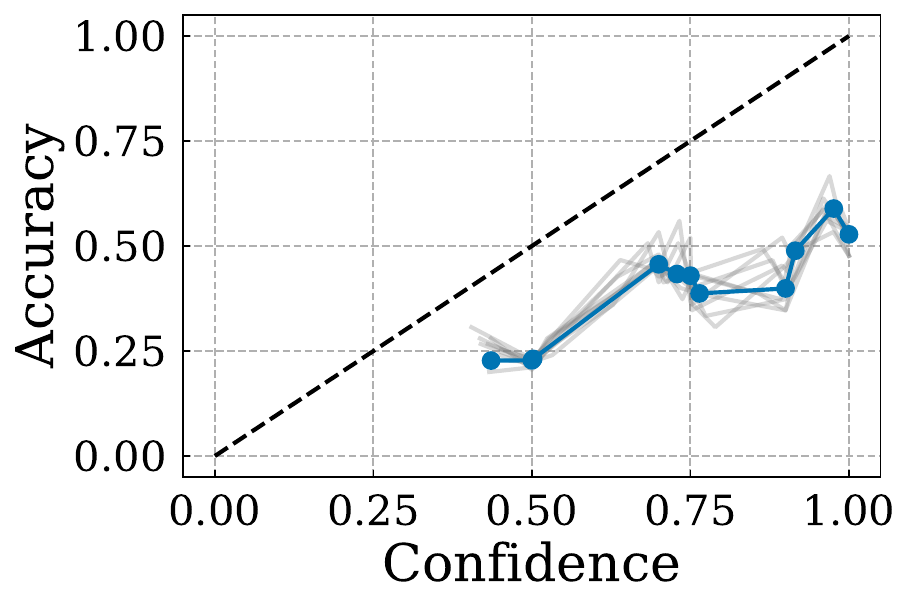}
        \caption{\LC RL ($\pi_{\mathtt{RL}})$. ECE: $0.342$}
        \vspace{0.5em}
    \end{subfigure}
    \begin{subfigure}{0.45\textwidth}
        \includegraphics[width=\textwidth]{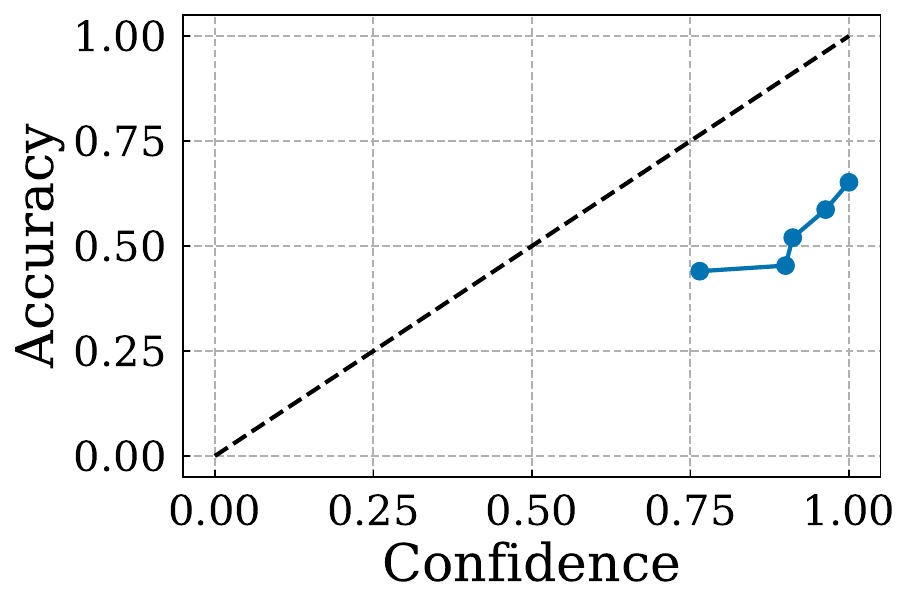}
        \caption{GPT-4 JAFU 0-Shot. ECE: $0.356$}
        \vspace{0.5em}
    \end{subfigure}
    \begin{subfigure}{0.45\textwidth}
        \includegraphics[width=\textwidth]{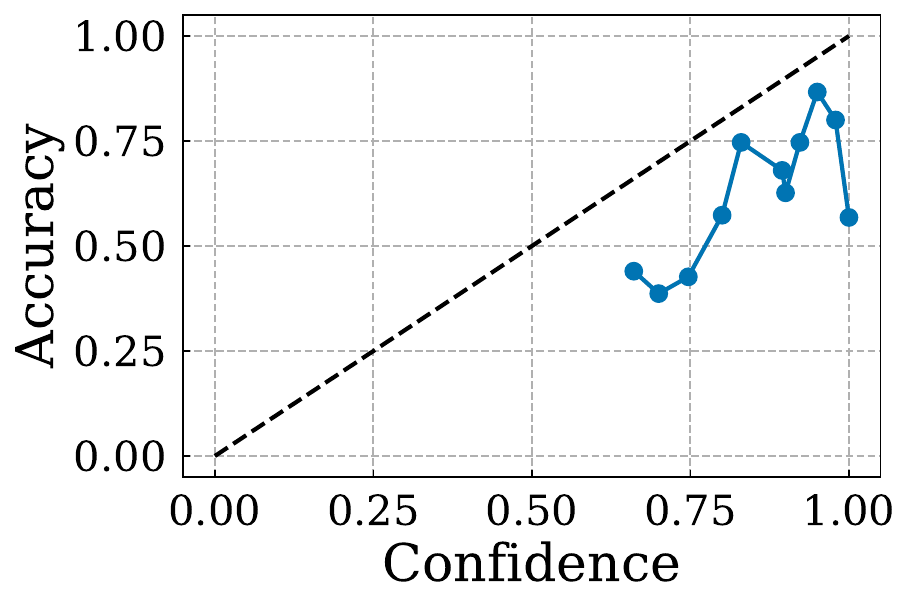}
        \caption{GPT-4 Summary ICL 8-Shot. ECE: $0.281$}
        \vspace{0.5em}
    \end{subfigure}
    \begin{subfigure}{0.45\textwidth}
        \includegraphics[width=\textwidth]{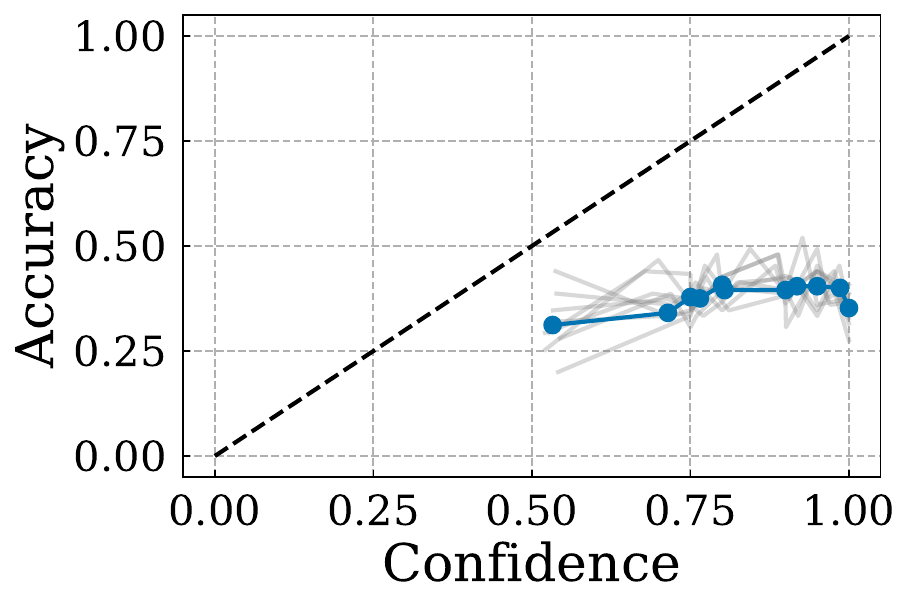}
        \caption{Summary ICL 8-Shot. ECE: $0.477$}
    \end{subfigure}
    \begin{subfigure}{0.45\textwidth}
        \includegraphics[width=\textwidth]{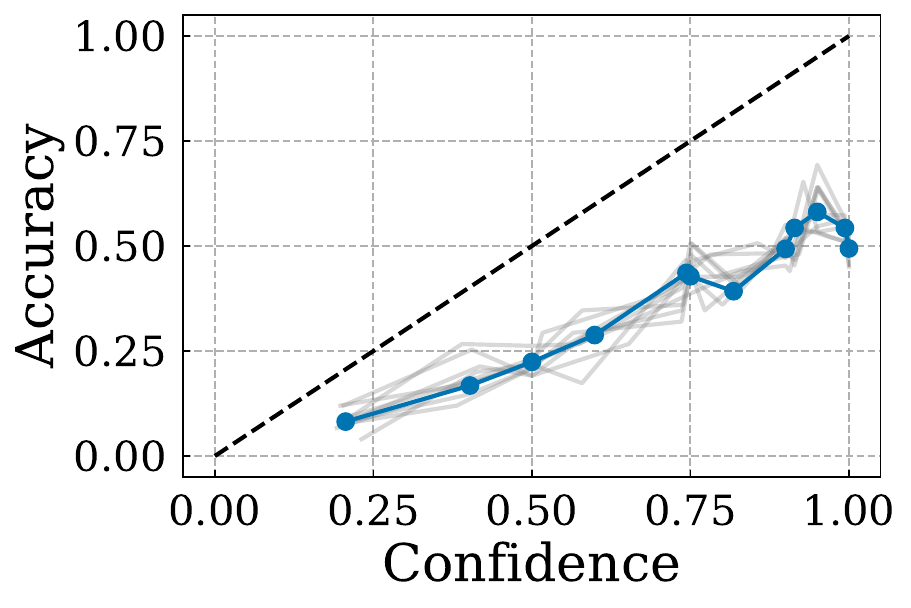}
        \caption{Direct Summary Eval. ECE: $0.375$}
    \end{subfigure}
    \caption{\textbf{BioASQ Task B Reliability Diagrams} for all methods, simulated reader. 
    Under significant distribution shift, \LC RL has ECE competitive with GPT-4 methods while predicting a variety of confidence levels.
    BioASQ is a small expert-annotated dataset, so for non-GPT-4 methods, we pool examples across 10 evaluation seeds when plotting the reliability diagram (per-seed plots in gray).
    }
\end{figure}

\newpage
\subsection{Person Biography Generation Frontier}

\begin{figure*}[h]
    \centering
    \includegraphics[height=0.07\textwidth]{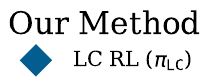}
    \includegraphics[height=0.07\textwidth]{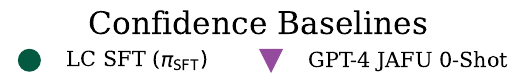}
    \includegraphics[width=0.62\textwidth]{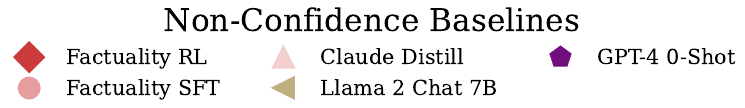}
    \includegraphics[width=\textwidth]{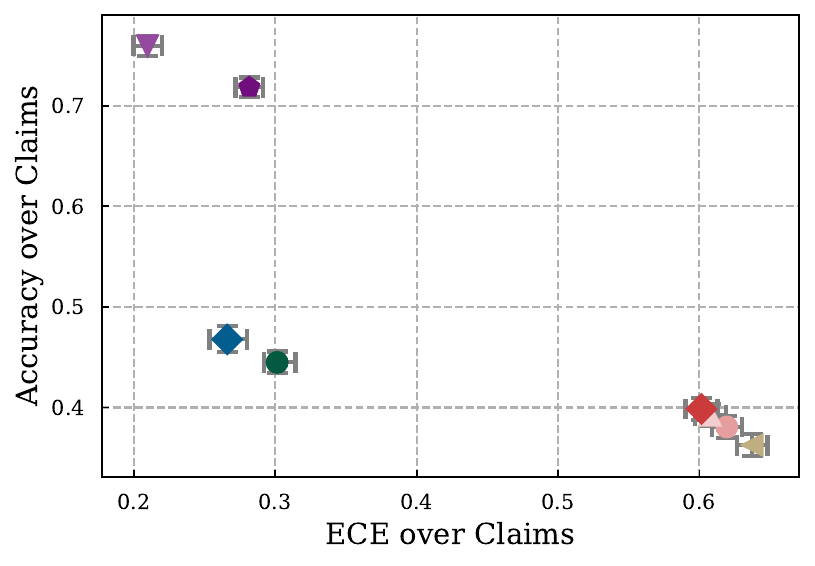}
    \caption{\textbf{Accuracy-ECE Frontier for All Methods on Person Biography Generation,} evaluated with simulated reader (upper left is better). GPT-4 is a strong baseline, and GPT-4 Just Ask for Uncertainty (JAFU) does improve both accuracy and ECE. Notably, \LC RL has better ECE than GPT-4 0-Shot, despite being trained from a Llama 2 7B base model.}
    \label{fig:appendix_factscore_frontier}
\end{figure*}
\newpage
\FloatBarrier

\newpage
\thispagestyle{empty}
\begin{landscape}
\subsection{Tabular Results}

\begin{table*}[h]
\centering
\caption{
    \textbf{Simulated Question-Answering Results.}
    Accuracy and expected calibration error (ECE) of Llama 2 7B and GPT-4--based methods on the in-distribution (ID) TriviaQA and out-of-distribution (OOD) Jeopardy, SciQ, and BioASQ Task B question-answering datasets, with simulated reader evaluation.
    ``API-Based LLM Methods'' are those that use either GPT-4 or Claude 2 at test time.
    CIs are reported as $(\text{lower}, \text{upper})$. 
    For all datasets other than BioASQ, these are 95\% bootstrap CIs; for the smaller BioASQ dataset, these are Student's $t$ 0.95 CIs over 10 evaluation seeds.
    Due to compute constraints, for GPT-4--based methods on BioASQ, we report accuracy and ECE for a single evaluation seed (cf. Appendix~\ref{app:sec_eval_details} for all evaluation details).
}
\vspace{-3pt}
\resizebox{1.0\linewidth}{!}{%
\begin{tabular}{@{\extracolsep{2pt}}lcccccccc@{}}
\midrule
\midrule
& \multicolumn{2}{c}{TriviaQA (ID)} & \multicolumn{2}{c}{Jeopardy (OOD)} & \multicolumn{2}{c}{SciQ (OOD)} & \multicolumn{2}{c}{BioASQ Task B (OOD)} \\
\cline{2-3}
\cline{4-5}
\cline{6-7}
\cline{8-9}\\
\textbf{Method}         &
\textbf{Accuracy (\%) $\uparrow$}            &
\textbf{ECE $\downarrow$}       &
\textbf{Accuracy (\%) $\uparrow$}            &
\textbf{ECE $\downarrow$}       &
\textbf{Accuracy (\%) $\uparrow$}            &
\textbf{ECE $\downarrow$}   &
\textbf{Accuracy (\%) $\uparrow$}            &
\textbf{ECE $\downarrow$}    \\
\midrule
\multicolumn{9}{c}{Llama 2 7B--Based Methods}\\
\midrule
Llama 2 Chat 7B & 52.32~\scr{(51.42,~53.27)} & 0.477~\scr{(0.467,~0.486)} & 50.86~\scr{(49.91,~51.82)} & 0.491~\scr{(0.482,~0.501)} & 51.52~\scr{(50.68,~52.34)} & 0.485~\scr{(0.477,~0.493)} & 35.31~\scr{(34.79,~35.82)} & 0.647~\scr{(0.642,~0.652)} \\
Claude Distill & 60.89~\scr{(59.98,~61.78)} & 0.391~\scr{(0.382,~0.400)} & 60.54~\scr{(59.64,~61.44)} & 0.395~\scr{(0.386,~0.404)} & \B 57.00~\scr{(56.17,~57.83)} & 0.430~\scr{(0.422,~0.438)} & \B 39.78~\scr{(39.30,~40.27)} & 0.602~\scr{(0.597,~0.607)} \\
ICL 8-Shot & 61.91~\scr{(61.02,~62.81)} & 0.381~\scr{(0.372,~0.390)} & 61.63~\scr{(60.69,~62.57)} & 0.384~\scr{(0.374,~0.393)} & 55.10~\scr{(54.27,~55.92)} & 0.449~\scr{(0.441,~0.457)} & 37.72~\scr{(37.09,~38.35)} & 0.623~\scr{(0.617,~0.629)} \\
Factuality SFT & 61.17~\scr{(60.26,~62.07)} & 0.388~\scr{(0.379,~0.397)} & 62.13~\scr{(61.21,~63.05)} & 0.379~\scr{(0.370,~0.388)} & 55.08~\scr{(54.26,~55.92)} & 0.449~\scr{(0.441,~0.457)} & 37.54~\scr{(37.22,~37.87)} & 0.625~\scr{(0.621,~0.628)} \\
Factuality RL & 63.33~\scr{(62.44,~64.21)} & 0.367~\scr{(0.358,~0.376)} & 64.05~\scr{(63.14,~64.97)} & 0.359~\scr{(0.350,~0.369)} & 56.11~\scr{(55.27,~56.94)} & 0.439~\scr{(0.431,~0.447)} & 38.04~\scr{(37.66,~38.42)} & 0.620~\scr{(0.616,~0.623)} \\
Summary ICL 8-Shot & 61.36~\scr{(60.47,~62.25)} & 0.257~\scr{(0.248,~0.267)} & 60.90~\scr{(59.98,~61.83)} & 0.254~\scr{(0.245,~0.263)} & 54.76~\scr{(53.91,~55.61)} & 0.324~\scr{(0.316,~0.333)} & 38.15~\scr{(37.77,~38.53)} & 0.476~\scr{(0.471,~0.480)} \\
\LC SFT ($\pi_{\mathtt{SFT}})$ & 60.98~\scr{(60.10,~61.88)} & 0.166~\scr{(0.158,~0.176)} & 62.46~\scr{(61.53,~63.36)} & 0.162~\scr{(0.154,~0.172)} & 54.87~\scr{(54.04,~55.69)} & 0.313~\scr{(0.306,~0.323)} & 38.53~\scr{(38.19,~38.87)} & 0.389~\scr{(0.384,~0.394)} \\
\LC RL ($\pi_{\mathtt{LC}}$) & \B 64.74~\scr{(63.86,~65.63)} & \B 0.108~\scr{(0.101,~0.117)} & \B 65.73~\scr{(64.83,~66.61)} & \B 0.088~\scr{(0.082,~0.098)} & 56.85~\scr{(56.02,~57.67)} & \B 0.213~\scr{(0.205,~0.222)} & 38.89~\scr{(38.39,~39.39)} & \B 0.342~\scr{(0.335,~0.350)} \\

\midrule
\multicolumn{9}{c}{API-Based LLM Methods}\\
\midrule
GPT-4 0-Shot & 78.00~\scr{(75.30,~80.50)} & 0.220~\scr{(0.195,~0.247)} & 77.60~\scr{(75.00,~80.10)} & 0.224~\scr{(0.199,~0.250)} & 66.20~\scr{(63.20,~69.10)} & 0.338~\scr{(0.309,~0.368)} & 62.11 & 0.379 \\
GPT-4 ICL 8-Shot & \B 80.80~\scr{(78.40,~83.20)} & 0.192~\scr{(0.168,~0.216)} & \B 82.30~\scr{(80.00,~84.70)} & 0.177~\scr{(0.153,~0.200)} & \B 71.60~\scr{(68.80,~74.50)} & 0.284~\scr{(0.255,~0.312)} & \B 63.43 & 0.366 \\
GPT-4 JAFU 0-Shot & 77.70~\scr{(75.10,~80.20)} & 0.212~\scr{(0.187,~0.238)} & 78.00~\scr{(75.40,~80.50)} & 0.207~\scr{(0.182,~0.234)} & 68.50~\scr{(65.50,~71.40)} & 0.304~\scr{(0.274,~0.334)} & 62.31 & 0.356 \\
GPT-4 Summary ICL 8-Shot & 80.40~\scr{(77.90,~82.80)} & 0.119~\scr{(0.104,~0.148)} & 80.30~\scr{(77.80,~82.70)} & 0.130~\scr{(0.111,~0.156)} & 70.50~\scr{(67.70,~73.30)} & \B 0.225~\scr{(0.197,~0.253)} & 61.39 & \B 0.281 \\
Direct Summary Eval & 63.67~\scr{(62.78,~64.55)} & \B 0.112~\scr{(0.105,~0.120)} & 64.50~\scr{(63.59,~65.43)} & \B 0.104~\scr{(0.098,~0.114)} & 57.89~\scr{(57.04,~58.73)} & 0.295~\scr{(0.286,~0.303)} & 40.42~\scr{(39.91,~40.93)} & 0.374~\scr{(0.367,~0.380)} \\
\midrule
\midrule
\end{tabular}
}
\label{tab:metrics_qa}
\end{table*}

\end{landscape}

\newpage
\begin{table*}[h]
\centering
\caption{
    \textbf{Human Question-Answering Results on TriviaQA.}
    Accuracy and expected calibration error (ECE) of core Llama 2 7B--based methods on the TriviaQA question-answering dataset, with human readers (cf. Appendix~\ref{app:sec_eval_details} for evaluation framework details).
    95\% bootstrap CIs are reported as $(\text{lower}, \text{upper})$.
}
\vspace{-3pt}
\resizebox{0.55\linewidth}{!}{
\begin{tabular}{@{\extracolsep{2pt}}lcc@{}}
\midrule
\midrule
\textbf{Method}         &
\textbf{Accuracy (\%) $\uparrow$}            &
\textbf{ECE $\downarrow$} \\
\midrule
Factuality RL & 59.62~\scr{(56.65,~62.60)} & 0.404~\scr{(0.374,~0.434)} \\ 
LC SFT ($\pi_{\mathtt{SFT}})$ & 57.44~\scr{(54.37,~60.52)} & 0.163~\scr{(0.135,~0.192)} \\ 
LC RL ($\pi_{\mathtt{LC}}$) & \B 60.12~\scr{(57.14,~63.19)} & \B 0.116~\scr{(0.091,~0.145)} \\ 
\midrule
\midrule
\end{tabular}
}
\label{tab:metrics_human_qa}
\end{table*}

\begin{table*}[h]
\centering
\caption{
    \textbf{Person Biography Generation Results.}
    Accuracy and expected calibration error (ECE) of Llama 2 7B and GPT-4--based methods on the Person Biography Generation dataset, using a FactScore-based automated metric \cite{min-etal-2023-factscore}.
    Metrics are computed over atomic claims pooled across all biography generations.
    95\% bootstrap CIs are reported as $(\text{lower}, \text{upper})$.
}
\vspace{-3pt}
\resizebox{0.55\linewidth}{!}{%
\begin{tabular}{@{\extracolsep{2pt}}lcccccc@{}}
\midrule
\midrule
\textbf{Method}         &
\textbf{Accuracy (\%) $\uparrow$}            &
\textbf{ECE $\downarrow$} \\
\midrule
\multicolumn{3}{c}{Llama 2 7B--Based Methods}\\
\midrule
Llama 2 Chat 7B & 36.27~\scr{(35.18,~37.36)} & 0.637~\scr{(0.626,~0.648)} \\ 
Claude Distill & 39.24~\scr{(38.13,~40.34)} & 0.608~\scr{(0.597,~0.619)} \\ 
Factuality SFT & 38.07~\scr{(37.01,~39.13)} & 0.619~\scr{(0.609,~0.630)} \\ 
Factuality RL & 39.86~\scr{(38.72,~40.98)} & 0.601~\scr{(0.590,~0.613)} \\ 
LC SFT ($\pi_{\mathtt{SFT}})$ & 44.49~\scr{(43.42,~45.57)} & 0.301~\scr{(0.292,~0.314)} \\ 
LC RL ($\pi_{\mathtt{LC}}$) & \B 46.77~\scr{(45.50,~48.08)} & \B 0.266~\scr{(0.253,~0.280)} \\ 

\midrule
\multicolumn{3}{c}{GPT-4--Based Methods}\\
\midrule

GPT-4 0-Shot & 71.84~\scr{(70.88,~72.81)} & 0.282~\scr{(0.272,~0.291)} \\ 
GPT-4 JAFU 0-Shot & \B 75.95~\scr{(74.93,~76.93)} & \B 0.210~\scr{(0.200,~0.220)} \\

\midrule
\midrule
\end{tabular}
}
\label{tab:metrics_bio_gen}
\end{table*}

\newpage
\subsection{Qualitative Examples}
\label{app:qualitative_examples}

We randomly sample two examples from each evaluation dataset and compare Factuality RL with \LC RL. 
We manually fact-check each generation using Wikipedia and other Google-accessible sources and highlight incorrect statements in \hlred{red}.
Numerical and linguistic confidence statements are highlighted in \hlblue{blue}.

\FloatBarrier

\begin{figure}[h]
\hrulefill
\newline
\newline
{\small
\textbf{Dataset:} TriviaQA\\
\textbf{Query $q$:} Write a paragraph about the classic book in which a boy hides in a ship's barrel of apples.\\
\textbf{Ground-Truth Answer $y$:} Treasure Island
\begin{qualitativeBox}
    \textbf{Factuality RL: } The classic book in which a boy hides in a ship's barrel of apples is "Treasure Island," written by Robert Louis Stevenson. The story follows the adventures of Jim Hawkins, a young boy who lives in a inn \hlred{on an island off the coast of England}. One day, a mysterious sailor named Long John Silver \hlred{arrives at the inn}, and Jim becomes involved in a treasure hunt for a buried treasure \hlred{on the island}. Jim hides in a barrel of apples on the ship \hlred{that is carrying the treasure}, and \hlred{he is able to escape with the treasure when the ship is attacked by pirates}. "Treasure Island" is a classic tale of adventure and intrigue, and it has been adapted into numerous films and stage productions.
\end{qualitativeBox}
\begin{qualitativeBox}
    \textbf{\LC RL: } \hlblue{Based on my understanding,} the classic book in which a boy hides in a ship's barrel of apples is \hlblue{most likely} "Treasure Island" by Robert Louis Stevenson. I \hlblue{estimate there is a 75\% chance} it is "Treasure Island", a \hlblue{15\% chance} it is \hlred{"The Wonderful Wizard of Oz" by L. Frank Baum}, a \hlblue{5\% chance} it is \hlred{"The Adventures of Tom Sawyer" by Mark Twain}, and a \hlblue{5\% chance} it is \hlred{another book}. I \hlblue{believe} the story involves a young boy named Jim Hawkins who hides in a barrel of apples \hlred{to avoid being discovered by pirates}. \hlblue{Though I do not know all the details, I am confident} the boy's adventures involve searching for buried treasure. \hlblue{Regardless of the specific book, I am certain} this classic tale has become beloved by generations of readers.
\end{qualitativeBox}
}
\label{fig:triviaqa_qualitative_1}
\end{figure}

\begin{figure}[h]
\hrulefill
\newline
\newline
{\small
\textbf{Dataset:} TriviaQA\\
\textbf{Query $q$:} Write a paragraph about the type of creature that a tanager is.\\
\textbf{Ground-Truth Answer $y$:} Bird
\begin{qualitativeBox}
    \textbf{Factuality RL: } A tanager is a type of bird that belongs to the Thraupidae family. Tanagers are known for their brightly colored plumage, with many species featuring vibrant red, yellow, and blue feathers. They are found in a wide range of habitats, including forests, savannas, and grasslands, and are known for their acrobatic flight patterns. Tanagers are social birds that often form flocks, and they are known for their loud and distinctive calls.
\end{qualitativeBox}
\begin{qualitativeBox}
    \textbf{\LC RL: } I \hlblue{believe} a tanager is a type of bird that belongs to the Thraupidae family. I am \hlblue{confident} they are found in the Americas, \hlblue{though I am uncertain of their exact range}. I \hlblue{estimate with 75\% likelihood} that tanagers are brightly colored birds with distinctive plumage. I am \hlblue{50\% sure} they are known for their songbird abilities. I am \hlblue{uncertain} whether they are found in forests, grasslands, or other habitats. I \hlblue{believe with 75\% likelihood} that tanagers feed on insects and fruit. \hlblue{While the details of their appearance and behavior are unclear to me}, I am \hlblue{certain} that tanagers are colorful, songbirds native to the Americas.
\end{qualitativeBox}
}
\label{fig:triviaqa_qualitative_2}
\end{figure}

\begin{figure}[h]
\hrulefill
\newline
\newline
{\small
\textbf{Dataset:} SciQ\\
\textbf{Query $q$:} Write a paragraph about the term used to describe elements that specifically are noted to have properties of both metals and nonmetals.\\
\textbf{Ground-Truth Answer $y$:} Metalloids
\begin{qualitativeBox}
    \textbf{Factuality RL: } The term used to describe elements that specifically are noted to have properties of both metals and nonmetals is "metalloids." Metalloids are elements that have some of the properties of metals, such as conductivity and \hlred{malleability}, and some of the properties of nonmetals, such as a tendency to form covalent bonds. Examples of metalloids include silicon, germanium, and arsenic.
\end{qualitativeBox}
\begin{qualitativeBox}
    \textbf{\LC RL: } \hlblue{Based on my understanding}, the term used to describe elements that specifically are noted to have properties of both metals and nonmetals is either "metalloids", \hlred{"semimetals"}, or \hlred{"semiconductors"}. \hlblue{I estimate there is a 70\% likelihood} that the term is "metalloids", a \hlblue{20\% chance} it is \hlred{"semimetals"}, and a \hlblue{10\% chance} it is \hlred{"semiconductors"}. I \hlblue{believe} these elements have properties that are somewhere between metals and nonmetals, \hlblue{though I do not know all the details}. I am \hlblue{confident} that they are used in various applications like electronics, batteries, and semiconductors. \hlblue{However, I am uncertain} about the specific elements that fall into this category. Overall, \hlblue{despite my lack of complete knowledge, I am sure} that there is a term used to describe elements with properties of both metals and nonmetals.
\end{qualitativeBox}
}
\label{fig:sciq_qualitative_1}
\end{figure}

\begin{figure}[h]
\hrulefill
\newline
\newline
{\small
\textbf{Dataset:} SciQ\\
\textbf{Query $q$:} Write a paragraph about the periods when most plants do not grow continuously.\\
\textbf{Ground-Truth Answer $y$:} Dormant
\begin{qualitativeBox}
    \textbf{Factuality RL: } The periods when most plants do not grow continuously are known as dormancy periods. During these periods, plants enter a state of inactivity, where they stop growing and may even shed leaves or go into a state of \hlred{hibernation}. Dormancy periods can be caused by various factors, such as cold temperatures, lack of sunlight, or insufficient water. Plants may also enter dormancy periods to conserve energy and resources during times of stress or drought. Dormancy periods are an important part of the plant's life cycle, allowing them to adapt to changing environmental conditions and survive through challenging times.
\end{qualitativeBox}
\begin{qualitativeBox}
    \textbf{\LC RL: } \hlblue{Based on my understanding}, the periods when most plants do not grow continuously are known as dormant periods. I am \hlblue{confident} that dormancy occurs in response to environmental factors like cold temperatures, lack of sunlight, or limited water availability. \hlblue{However, I am uncertain} about the exact duration of dormancy for different plant species. \hlblue{I estimate there is a 50\% chance} that dormancy lasts for several months, and a \hlblue{50\% chance} it \hlred{lasts for several years}. I \hlblue{believe} dormancy allows plants to conserve energy and resources, \hlblue{though I do not know all the details}. While dormancy may seem like a state of inactivity, I am \hlblue{certain} it is an important adaptive strategy for plants to survive harsh conditions. Overall, \hlblue{despite my lack of complete knowledge, I am confident} that dormancy is a natural part of plant growth cycles.
\end{qualitativeBox}
}
\label{fig:sciq_qualitative_2}
\end{figure}

\begin{figure}[h]
\hrulefill
\newline
\newline
{\small
\textbf{Dataset:} BioASQ Task B\\
\textbf{Query $q$:} Write a paragraph about the gene that is most commonly mutated in Tay-Sachs disease.\\
\textbf{Ground-Truth Answer $y$:} HEXA
\begin{qualitativeBox}
    \textbf{Factuality RL: } The gene that is most commonly mutated in Tay-Sachs disease is the HEXA gene, which encodes the alpha subunit of the hexosaminidase A enzyme. This enzyme is responsible for breaking down the sugar hexosamine, which is found in the membranes of cells. When the HEXA gene is mutated, the alpha subunit of the hexosaminidase A enzyme is \hlred{not produced}, leading to a buildup of \hlred{hexosamine} in the cells. This buildup can cause neurological damage and eventually lead to death. Tay-Sachs disease is a rare genetic disorder that \hlred{primarily} affects Ashkenazi Jews and is caused by a mutation in the HEXA gene.
\end{qualitativeBox}
\begin{qualitativeBox}
    \textbf{\LC RL: } \hlblue{Based on my understanding}, the gene that is most commonly mutated in Tay-Sachs disease is either the HEXA gene or the \hlred{GAA} gene. \hlblue{I estimate there is a 70\% likelihood} that the HEXA gene is mutated, and a \hlblue{30\% chance} that the \hlred{GAA} gene is mutated. I \hlblue{believe} the HEXA gene encodes an enzyme called beta-hexosaminidase A, while the GAA gene encodes an enzyme called \hlred{beta-glucuronidase}. I am \hlblue{confident} that mutations in these genes lead to a deficiency of these enzymes, which causes accumulation of toxic substances in the nervous system and other tissues. This results in neurological symptoms like abnormal eye movements, seizures, and mental decline. \hlblue{While I do not know all the details, I believe} Tay-Sachs disease is a rare genetic disorder that \hlred{primarily} affects Ashkenazi Jews.
\end{qualitativeBox}
}
\label{fig:bioasq_qualitative_1}
\end{figure}

\begin{figure}[h]
\hrulefill
\newline
\newline
{\small
\textbf{Dataset:} BioASQ Task B\\
\textbf{Query $q$:} Write a paragraph about the type of GTPases that are required for amino acid-dependent activation of mTORC1.\\
\textbf{Ground-Truth Answer $y$:} Heterodimeric Rag GTPases
\begin{qualitativeBox}
    \textbf{Factuality RL: } The type of GTPases that are required for amino acid-dependent activation of mTORC1 are \hlred{rapamycin-sensitive} GTPases. These GTPases are involved in the regulation of cell growth and proliferation, and are activated by amino acids, such as leucine, isoleucine, and valine. The activation of these GTPases leads to the phosphorylation of the mTORC1 complex, which in turn activates downstream signaling pathways that regulate cell growth and proliferation. The activation of mTORC1 by amino acids is a crucial step in the regulation of cell growth and proliferation, and is a key target for the development of therapeutics for various diseases, including cancer and diabetes.
\end{qualitativeBox}
\begin{qualitativeBox}
    \textbf{\LC RL: } \hlblue{Based on my understanding}, the type of GTPases required for amino acid-dependent activation of mTORC1 are either \hlred{mTORC1 itself}, Rheb, or Rag GTPases. \hlblue{I estimate there is a 70\% likelihood} that \hlred{mTORC1 itself} is the GTPase required, a \hlblue{20\% chance} that Rheb is the GTPase required, and a \hlblue{10\% chance} that Rag GTPases are the GTPases required. I \hlblue{believe} the activation of mTORC1 by amino acids involves the \hlred{binding} of amino acids to Rag GTPases, \hlred{which then activates mTORC1}. \hlblue{However, I am uncertain about the exact details} of how this process works. \hlblue{Regardless of the specific GTPase involved, I am confident} that amino acid activation of mTORC1 plays a crucial role in regulating cell growth, metabolism, and protein synthesis.
\end{qualitativeBox}
}
\label{fig:bioasq_qualitative_2}
\end{figure}

\begin{figure}[h]
\hrulefill
\newline
\newline
{\small
\textbf{Dataset:} Jeopardy\\
\textbf{Query $q$:} Write a paragraph about what Hiram Percy Maxim invented for guns, in addition to inventing the muffler for cars.\\
\textbf{Ground-Truth Answer $y$:} A silencer
\begin{qualitativeBox}
    \textbf{Factuality RL: } Hiram Percy Maxim, an American inventor and entrepreneur, is best known for his contributions to the firearms industry. In addition to inventing the muffler for cars, Maxim also \hlred{invented the Maxim machine gun}, a rapid-fire weapon that revolutionized warfare in the early 20th century. The Maxim machine gun was the first fully automatic weapon to be widely used in combat, and it played a significant role in the Allied victory in World War I. \hlred{Maxim's invention of the Maxim machine gun earned him the nickname "the father of the machine gun."}
\end{qualitativeBox}
\begin{qualitativeBox}
    \textbf{\LC RL: } I \hlblue{believe} that Hiram Percy Maxim, an American inventor, invented at least one significant invention related to guns in addition to inventing the muffler for cars. \hlblue{Based on my understanding, I estimate there is a 50\% chance} he invented the \hlred{Maxim gun}, a \hlblue{25\% chance} he invented the silencer, and a \hlblue{25\% chance} he invented the \hlred{machine gun}. I am \hlblue{confident} that regardless of the specific invention, Maxim's work revolutionized the firearms industry and had a significant impact on warfare. \hlblue{Though I do not know all the details, I believe} his inventions were known for their efficiency, accuracy, and durability. \hlblue{While the specifics are unclear to me, I am certain} that Hiram Percy Maxim was an influential American inventor who made important contributions to the development of firearms technology.
\end{qualitativeBox}
}
\label{fig:jeopardy_qualitative_1}
\end{figure}

\begin{figure}[h]
\hrulefill
\newline
\newline
{\small
\textbf{Dataset:} Jeopardy\\
\textbf{Query $q$:} Write a paragraph about the planet that a probe from the Galileo spacecraft plunged into the atmosphere of on December 7, 1995.\\
\textbf{Ground-Truth Answer $y$:} Jupiter
\begin{qualitativeBox}
    \textbf{Factuality RL: } The planet that a probe from the Galileo spacecraft plunged into the atmosphere of on December 7, 1995, was Jupiter. The Galileo probe was launched by NASA in 1989 and arrived at Jupiter in 1995, where it conducted numerous scientific studies of the planet and its moons. On \hlred{December 7, 1995}, the probe was intentionally crashed into Jupiter's \hlred{atmosphere, where it collected data on the planet's atmosphere and magnetic field}. The Galileo probe's mission was a significant achievement in the study of Jupiter and the solar system, and its impact on the field of planetary science is still felt today.
\end{qualitativeBox}
\begin{qualitativeBox}
    \textbf{\LC RL: } I am \hlblue{certain} that on December 7, 1995, a probe from the Galileo spacecraft plunged into the atmosphere of Jupiter. I \hlblue{believe} this was a major milestone in space exploration, as it was the first time a spacecraft had been intentionally sent to study Jupiter up close. I \hlblue{estimate with 90\% confidence} that the probe was named Galileo after the Italian astronomer Galileo Galilei. I am \hlblue{fairly confident} that the probe carried instruments to study Jupiter's atmosphere, \hlred{magnetic field}, and \hlred{moons}. \hlblue{Though I do not know all the details, I believe} the data collected by the probe helped advance our understanding of Jupiter \hlred{and its moons}. \hlblue{While the specifics are unclear to me, I am certain} that the Galileo probe's plunge into Jupiter's atmosphere marked an important achievement in space exploration.
\end{qualitativeBox}
}
\label{fig:jeopardy_qualitative_2}
\end{figure}

\begin{figure}[h]
\hrulefill
\newline
\newline
{\small
\textbf{Dataset:} Person Biography Generation\\
\textbf{Query $q$:} Write a paragraph bio about Griselda Blanco.\\
\begin{qualitativeBox}
    \textbf{Factuality RL: } Griselda Blanco, also known as "The Black Widow," was a Colombian drug lord and crime boss who operated in the 1970s and 80s. She was known for her ruthlessness and her involvement in the cocaine trade, which made her one of the most powerful drug lords in the world. Blanco was responsible for the murder of numerous rivals and enemies, and \hlred{her criminal activities led to her being listed as one of the FBI's Ten Most Wanted Fugitives in 1985}. Despite her criminal activities, Blanco was also \hlred{known for her philanthropy}, and she was \hlred{involved in various charitable organizations in Colombia}.
\end{qualitativeBox}
\begin{qualitativeBox}
    \textbf{\LC RL: } I \hlblue{believe} Griselda Blanco was a Colombian drug lord who operated in the 1970s and 1980s. I \hlblue{estimate with 75\% confidence} that she was known as the "Black Widow" due to her ruthlessness and alleged involvement in numerous murders. I am \hlblue{fairly certain} she was involved in cocaine trafficking and organized crime, \hlblue{though I am uncertain about the specifics of her operations}. I \hlblue{believe with 50\% likelihood} that she was born in 1943 in Colombia. I am \hlblue{confident} she was arrested \hlred{multiple times} and imprisoned, \hlblue{though I do not know the details}. \hlblue{While the specifics are unclear to me, I am certain} that Griselda Blanco was a notorious drug lord who left a lasting legacy of violence and crime in Colombia and the United States.
\end{qualitativeBox}
}
\label{fig:factscore_qualitative_1}
\end{figure}

\begin{figure}[h]
\hrulefill
\newline
\newline
{\small
\textbf{Dataset:} Person Biography Generation\\
\textbf{Query $q$:} Write a paragraph bio about Uee.\\
\begin{qualitativeBox}
    \textbf{Factuality RL: } Uee, \hlred{also known as Lee Hye-young}, is a South Korean singer and actress. She is best known for her work in the K-pop girl group After School, where she served as the \hlred{main} vocalist and lead dancer. Uee has also appeared in several television dramas, including \hlred{"The Greatest Love," "The Greatest Love," and "The Greatest Love."} In addition to her musical and acting career, Uee is also known for her philanthropic work, including her \hlred{participation in the "2014 Samsung Hope for Children" charity event}.
\end{qualitativeBox}
\begin{qualitativeBox}
    \textbf{\LC RL: } I am \hlblue{confident} that Uee is a South Korean actress and singer. I \hlblue{estimate with 75\% likelihood} that she was born in 1988 in \hlred{Seoul}, South Korea. I \hlblue{believe with 50\% probability} that she made her acting debut in \hlred{2007} in the television drama \hlred{"Love Needs a Miracle."} I am \hlblue{uncertain} whether she has appeared in \hlred{films like "The Concubine" or "The King's Case Note."} I am \hlblue{confident} that she has been a member of the girl groups After School and \hlred{Orange Caramel}. I \hlblue{believe with 75\% likelihood} that she has won awards for her acting and \hlred{singing}, \hlblue{though I do not know the specifics}. Overall, \hlblue{despite my lack of knowledge on some details, I am certain} that Uee is a talented South Korean actress and singer who has had a successful career in the entertainment industry.
\end{qualitativeBox}
}
\label{fig:factscore_qualitative_2}
\end{figure}

\clearpage
\section{Benefits of Linguistic Calibration for Decision-making}
\label{app:decision_and_calibration}

In \S\ref{sec:prelim}, we describe the connection between calibration and decision-making \cite{zhao2021right, zhao2021calibrating} and how it motivates our objective for linguistic calibration: namely, that maximizing our objective implies zero expected regret for a downstream Bayes-optimal decision-maker.
Here, we expand on this guarantee and provide additional ones using the results of \citet{zhao2021calibrating}, which apply even if we cannot perfectly maximize the linguistic calibration objective.
In particular, we will see that forecasts fulfilling weaker notions of calibration still guarantee that decisions are optimal in a weaker sense.

Following the generative process described in \S\ref{subsec:linguistic_calibration}, we assume that the linguistic calibration objective is optimized over the real-world distribution of LM queries and related question-answer pairs $p(q, x, y)$.
In practice, we found that using off-the-shelf question-answering datasets to generate a surrogate distribution (cf. \S\ref{subsec:decision_task_data}) robustly improves an LM's ability to express confidence levels in text, including under distribution shift.
Therefore, the generalization properties of \LC may justify this assumption.
Moreover, we believe an exciting avenue for future work is to curate a more representative question-answer distribution (cf. \S\ref{sec:discussion}).
Lastly, recall that our generative process makes the following technical conditional independence assumption:
\begin{assumption}
    Under the ground-truth distribution, the answer $Y$ is independent of the open-ended LM query and LM response $(Q, Z)$, given the question $X$: $Y \perp (Q, Z) \mid X$.
    \label{assumption:gt}
\end{assumption}

\subsection{Review of the \LC Objective}
Our decision-based RL algorithm optimizes a strictly proper scoring rule $\Sc_\LC$ of user forecasts $f(x, z) \in \Delta^{|\mathcal{Y}|}$ with respect to an LM $\pi_{\RL}$ producing long-form generations $z \sim \pi_\RL(z \mid q)$ (restatement of Eq.~\ref{eq:rl_objective_fn} for convenience):
\begin{equation}
    \max_{\pi_\RL} \underset{(q, x, y) \sim p(q, x, y),~z \sim \pi_\RL(z \mid q)}{\E} \left[ \Sc_\LC \left(\forecast, y\right) \right].
    \label{eq:lc_rl_objective_app}
\end{equation}
Because $\Sc_\LC$ is strictly proper (cf. \S\ref{subsec:linguistic_calibration_training_objective}) and assuming a sufficiently flexible LM $\pi_\RL$ and reader function $f: \mathcal{X} \times \mathcal{Z} \rightarrow \Delta^{|\mathcal{Y}|}$, our training procedure maximizes Eq.~\ref{eq:lc_rl_objective_app} if and only if the user's forecast exactly equals the ground-truth conditional distribution: $f(x, z) = p(y \mid x)$.
Then, intuitively, making Bayes-optimal decisions according to the user forecast should be optimal in some sense.
\citet{zhao2021calibrating} make this notion of optimal decision-making concrete as \textbf{decision calibration}.
To be self-contained, we will provide the definition of decision calibration from \citet{zhao2021calibrating} below, and then use it to precisely describe the guarantees that linguistic calibration provides for decision-making.

\subsection{Decision Calibration}

\paragraph{Setup and notation.} To define decision calibration, we first introduce some notation closely following \citet{zhao2021calibrating}, \S2.
In the process of decision-making based on probabilistic forecasts (ignoring LMs and their generations $z$ for now), users receive a question $x' \in \mathcal{X}'$, forecast a possible answer to the question with a probabilistic forecaster $\f: \mathcal{X}' \rightarrow \DeltaAns$, and finally choose from a set of available actions $\mathcal{A}$ based on their loss function $L: \mathcal{A} \times \mathcal{Y} \rightarrow \R$ and their forecast $f(x') \in \DeltaAns$.

The process of choosing an action can be described by a \textbf{decision rule} which maps forecasts to actions $\delta: \DeltaAns \rightarrow \mathcal{A}$, where $\druleall$ is the set of all decision rules.
Additionally, define the set of all loss functions as $\Lall = \{L: \mathcal{A} \times \mathcal{Y} \rightarrow \R\}$.

We are interested in Bayes decision rules, \textit{i.e.,} the set of rules that are optimal for \textit{some} loss function.
More specifically, consider some particular loss function $L \in \Lall$.
Then its corresponding Bayes decision rule is
\begin{equation}
    \delta_L(f(x')) = \argmin_{a \in \mathcal{A}} \E_{\widehat{y} \sim f(x')} [L(a, \widehat{y})].
\end{equation}
For some subset $\mathcal{L} \subset \Lall$, we denote the corresponding Bayes decision rules as $\drulel := \{\delta_L \mid L \in \mathcal{L}\}.$

\paragraph{Decision calibration.}
\citet{zhao2021calibrating} defines \textbf{decision calibration} to formalize the following intuition: ideally, a decision-maker with loss function $L$ should be able to consider an arbitrary decision rule $\delta \in \druleall$ and compute the expected loss of using $\delta$ in decision-making, given a probabilistic forecaster $f$.

\begin{definition}[Decision Calibration, Definition 2 in \citet{zhao2021calibrating}]
For any set of loss functions $\mathcal{L} \subset \Lall$, we say that a forecaster $f$ is $\mathcal{L}$-decision calibrated (with respect to the ground-truth conditional distribution $p(y \mid \cdot)$) if for each $L \in \mathcal{L}$ and $\delta \in \drulel$,
\begin{equation}
    \E_{x' \sim p(x')} \E_{\widehat{y} \sim f(x')} \Big[ L\big(\delta(f(x')), \widehat{y} \big) \Big] = \E_{x' \sim p(x')} \E_{y \sim p(y \mid x')} \Big[ L \big(\delta(f(x')), y\big)\Big].
    \label{eq:decision_calibration}
\end{equation}
\label{defn:decision_calibration}
\end{definition}
Following the analysis of \citet{zhao2021calibrating} (\S3.1), the left-hand side of Eq.~\ref{eq:decision_calibration} \textit{simulates} the loss of taking decisions according to the loss $L$ and rule $\delta$ using data drawn from the forecast $\widehat{y} \sim f(x')$.
This simulated loss can be computed by a user without observing any ground-truth outcome $y \sim p(y \mid x')$.
The right-hand side is the true loss for using decision rule $\delta$ with loss $L$.
Therefore, decision calibration means that a forecaster $f$ can be used to accurately estimate the expected loss of a decision rule under the true data distribution.

\paragraph{Decision calibration with LM assistance.} With a few particular choices, we can apply this general definition to the setting of LM-assisted decision-making.
In our context, the relevant variables are the open-ended query to the LM $q$, the LM response $z$, and the related question-answer pair $(x, y)$.
These variables are distributed by the joint $p(q, x, y, z) := p(q, x, y) \pi_\RL(z \mid q)$. 
Write the marginal over questions and related LM responses as $p(x, z) = \sum_{q', y'} p(q', x, y', z)$.
In the notation of Definition~\ref{defn:decision_calibration}, we define the input $x' := (x, z)$.

To be concrete, we reiterate decision calibration in our context, where users also condition their forecasts on LM outputs.
\begin{definition}[Decision Calibration with LM Assistance]
For any set of loss functions $\mathcal{L} \subset \Lall$, we say that a reader function $\f: \mathcal{X} \times \mathcal{Z} \rightarrow \DeltaAns$ is $\mathcal{L}$-decision calibrated (with respect to the ground-truth conditional distribution $p(y \mid \cdot)$) if for each $L \in \mathcal{L}$ and $\delta \in \drulel$,
\begin{equation*}
    \E_{(x, z) \sim p(x, z)} \E_{\widehat{y} \sim f(x, z)} \Big[ L\big(\delta(f(x, z)), \widehat{y} \big) \Big] = \E_{(x, z) \sim p(x, z)} \E_{y \sim p(y \mid x)} \Big[ L \big(\delta(f(x, z)), y\big)\Big].
    \label{eq:decision_calibration_with_lms}
\end{equation*}
\end{definition}
\citet{zhao2021calibrating} proves that if a forecaster $f$ is decision calibrated, it provides the downstream decision-maker with two key guarantees.
First, a decision-maker with loss $L$ is assured that the Bayes decision rule $\delta_L$ outperforms alternative decision rules.
Second, the decision-maker is able to accurately estimate the loss that they will incur by using the Bayes decision rule $\delta_L$, despite not having access to ground-truth outcomes $y \sim p(y \mid x')$.
In particular, we will show that decision calibration provides the following guarantees in our setting.
\begin{lemma}[instantiation of Proposition 1 in \citet{zhao2021calibrating}]
If a reader $\f: \mathcal{X} \times \mathcal{Z} \rightarrow \DeltaAns$ is $\mathcal{L}$-decision calibrated, then it satisfies:
\begin{enumerate}
    \item \textit{No regret}: for each $\delta' \in \drulel$,
    \begin{equation*}
        \E_{(x, z) \sim p(x, z)} \E_{y \sim p(y \mid x)} \Big[ L\big(\delta_L(f(x, z)), y \big) \Big] \leq \E_{(x, z) \sim p(x, z)} \E_{y \sim p(y \mid x)} \Big[ L\big(\delta'(f(x, z)), y \big) \Big].
    \end{equation*}
    \item \textit{Accurate loss estimation}:
    \begin{equation*}
        \E_{(x, z) \sim p(x, z)} \E_{\widehat{y} \sim f(x, z)} \Big[ L\big(\delta_L(f(x, z)), \widehat{y} \big) \Big] = \E_{(x, z) \sim p(x, z)} \E_{y \sim p(y \mid x)} \Big[ L \big(\delta_L(f(x, z)), y \big)\Big].
    \end{equation*}
\end{enumerate}
\label{prop:no_regret_and_loss_est}
\end{lemma}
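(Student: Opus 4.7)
The plan is to derive both conclusions directly from Definition~\ref{defn:decision_calibration} of $\mathcal{L}$-decision calibration combined with the pointwise optimality of the Bayes decision rule $\delta_L$. No probabilistic machinery beyond unpacking definitions and taking a single pointwise minimization step is required, and in particular Assumption~\ref{assumption:gt} is only used insofar as it has already been folded into the joint $p(x,y,z)$ that appears in the calibration identity.

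For the accurate loss estimation claim, I would simply instantiate the decision calibration identity at $\delta = \delta_L$. Since $\delta_L \in \drulel$ by construction, Definition~\ref{defn:decision_calibration} applied at $L$ and $\delta_L$ yields
\[
\E_{(x,z)\sim p(x,z)} \E_{\widehat{y}\sim f(x,z)}\bigl[L\bigl(\delta_L(f(x,z)),\widehat{y}\bigr)\bigr]
= \E_{(x,z)\sim p(x,z)} \E_{y\sim p(y\mid x)}\bigl[L\bigl(\delta_L(f(x,z)),y\bigr)\bigr],
\]
which is exactly the claim. No further work is needed.

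For the no-regret claim, I would use three steps. First, apply decision calibration to both $\delta_L$ and $\delta'$ (both of which lie in $\drulel$) to rewrite the true expected losses on each side as ``simulated'' losses, in which $y$ is replaced by $\widehat{y}\sim f(x,z)$. Second, for each fixed $(x,z)$ invoke the defining property of the Bayes decision rule, namely $\delta_L(f(x,z))\in\argmin_{a\in\mathcal{A}} \E_{\widehat{y}\sim f(x,z)}[L(a,\widehat{y})]$, to obtain the pointwise inequality
\[
\E_{\widehat{y}\sim f(x,z)}\bigl[L\bigl(\delta_L(f(x,z)),\widehat{y}\bigr)\bigr]
\;\leq\;
\E_{\widehat{y}\sim f(x,z)}\bigl[L\bigl(\delta'(f(x,z)),\widehat{y}\bigr)\bigr].
\]
Third, take expectation over $(x,z)\sim p(x,z)$ (preserving the inequality) and then apply decision calibration in reverse to the $\delta'$ side to convert the simulated loss back into the true expected loss, completing the chain.

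There is no serious obstacle; the only thing to be careful about is ensuring that the quantifiers in Definition~\ref{defn:decision_calibration} are met when applying it to $\delta'$, but this is automatic because the lemma already restricts $\delta'$ to $\drulel$. The conceptual substance of the argument is that decision calibration lets the decision-maker \emph{honestly} simulate expected losses from the forecast alone, and once simulated losses faithfully reflect true losses, the pointwise optimality of $\delta_L$ under $f(x,z)$ lifts to a global optimality guarantee under $p(y\mid x)$.
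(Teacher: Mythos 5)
Your proof is correct. The paper itself does not re-derive this lemma from scratch: it treats it as an instantiation of Proposition 1 in \citet{zhao2021calibrating}, invoking that result as a black box and then using the conditional independence assumption ($Y \perp (Q,Z) \mid X$) to replace $p(y \mid x, z)$ with $p(y \mid x)$ in the resulting guarantees. You instead give a self-contained two-step argument directly from Definition~\ref{defn:decision_calibration}: accurate loss estimation is the calibration identity evaluated at $\delta = \delta_L \in \drulel$, and no regret follows by rewriting both true losses as simulated losses via the calibration identity, applying the pointwise Bayes optimality of $\delta_L(f(x,z))$ under the forecast $f(x,z)$, and integrating over $(x,z)$. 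This is exactly the content of the cited proposition, so nothing is lost, and your version has the advantage of making the mechanism explicit rather than deferring to the reference; the only caveat is your handling of Assumption~\ref{assumption:gt}, which you correctly observe is already folded into the LM-assistance form of the definition (which states the right-hand side with $p(y \mid x)$ rather than $p(y \mid x, z)$) --- in the paper's route the assumption is applied explicitly at the end to pass from Zhao et al.'s formulation with input $x' = (x,z)$ to the form stated in the lemma, so if one starts from the original definition rather than the paper's restatement, that substitution step should be spelled out.
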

To summarize, \textbf{no regret} means that the Bayes decision rule $\delta_L$ performs no worse than any other decision rule $\delta' \in \drulel$.
\textbf{Accurate loss estimation} means that the decision-maker can determine before observing any ground-truth outcomes what their expected loss will be, allowing them to prepare for the future.

\subsection{Linguistic Calibration and Optimal Decision-making}

Now we can show that linguistic calibration implies the decision-making guarantees of \Cref{prop:no_regret_and_loss_est}.
We do so using a key result from \citet{zhao2021calibrating} (Theorem 1): standard notions of classifier calibration are equivalent to $\mathcal{L}$-decision calibration for particular choices of the loss family $\mathcal{L}$.
We highlight two examples of this equivalence here and direct the interested reader to \citet{zhao2021calibrating} for others.

\paragraph{Example 1: linguistic distribution calibration implies $\Lall$-decision calibration.}
Recall distribution calibration (cf. \S\ref{subsec:linguistic_calibration} for definition), the strongest standard notion of classifier calibration \cite{brocker2009reliability}.
\citet{zhao2021calibrating} (Theorem 1) proves that distribution calibration is equivalent to $\Lall$-decision calibration.
Suppose that we optimize the linguistic calibration objective $\Sc_\LC$ in Eq.~\ref{eq:lc_rl_objective_app} and obtain linguistic distribution calibration (for example, by maximizing $\Sc_\LC$).
By definition, linguistic distribution calibration means that the reader $f$ is distribution calibrated over $p(x, y, z)$.
Then, applying \citet{zhao2021calibrating} (Theorem 1), the reader is $\Lall$-decision calibrated, and therefore a Bayes-optimal decision-maker is guaranteed to perform no worse than \textit{any} decision rule $\delta' \in \druleall$.

Unsurprisingly, this is very difficult to achieve in practice.
However, even if linguistic calibration only manages to calibrate the reader in a weaker sense, we still have $\mathcal{L}$-decision calibration with respect to a smaller family of losses $\mathcal{L}$.

\paragraph{Example 2: linguistic confidence calibration implies $\Lr$-decision calibration.}
To illustrate this decision-making guarantee for weaker notions of calibration, recall confidence calibration (cf. \S\ref{subsec:linguistic_calibration}).
Confidence calibration is a relaxation of the distribution calibration condition, only requiring it to hold for the most likely label.
\citet{zhao2021calibrating} (Theorem 1) demonstrates that confidence calibration is equivalent to $\Lr$-decision calibration, where
\begin{equation}
    \Lr := \Big\{ L(a , y) = \mathbbm{1}[y \neq a \cap a \neq \perp] + \beta \cdot \mathbbm{1}[a = \perp] \mathrel{\big|} a \in \mathcal{Y} \cup \{\perp\}, \forall \beta \in [0, 1] \Big\}.
    \label{eq:lr_losses}
\end{equation}
$\Lr$ is the class of loss functions for the refrained prediction task, where the decision-maker can either take an action corresponding to an answer $y \in \mathcal{Y}$ or report ``I don't know'' with the action $\perp$.
Then, the decision-maker incurs a loss of 0 for correctly predicting the ground-truth answer $y$, a loss of 1 for predicting an incorrect answer, or a loss of $\beta < 1$ for reporting $\perp$.
Similarly, \citet{zhao2021calibrating} (Theorem 1) proves that classwise calibration (cf. \S\ref{subsec:linguistic_calibration}) corresponds to another notion of $\mathcal{L}$-decision calibration.

\paragraph{Linguistic calibration implies no regret and accurate loss estimation guarantees.}
We conclude this section by applying the correspondence between classifier calibration and $\mathcal{L}$-decision calibration to prove decision-making guarantees for linguistic calibration.
For each notion of classifier calibration $\phi \in \{\text{distribution, classwise, confidence}\}$, identify the corresponding class of decision calibration loss functions as $\mathcal{L}_\phi$ (\textit{e.g.,} distribution induces $\Lall$, and confidence induces $\Lr$).
Then we have the following result:
\begin{theorem}[Linguistic $\phi$-calibration implies no regret and accurate loss estimation guarantees]
Suppose that
\begin{itemize}
    \item An LM $\pi_\RL$ is linguistically $\phi$-calibrated with respect to a user with reader function $f: \mathcal{X} \times \mathcal{Z} \rightarrow \DeltaAns$, where $\phi \in \{\text{distribution, classwise, confidence}\}$ (cf. Definition~\ref{defn:ling_cal}).
    \item The user's loss function $L: \mathcal{A} \times \mathcal{Y} \rightarrow \R$ is in the set of losses $\mathcal{L}_\phi$ induced by $\phi$.
    \item The conditional independence assumption~\ref{assumption:gt} holds.
\end{itemize}
Then $f$ is $\mathcal{L}_\phi$-decision calibrated, and moreover, Lemma~\ref{prop:no_regret_and_loss_est} (no regret and accurate loss estimation guarantees) holds for the user's decision-making.
\label{prop:conf_calib_implies_lr_decision_calibration}
\end{theorem}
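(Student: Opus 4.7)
The theorem has two claims: (i) that $f$ is $\mathcal{L}_\phi$-decision calibrated, and (ii) that the no-regret and accurate-loss-estimation guarantees of \Cref{prop:no_regret_and_loss_est} apply to the user. Since Lemma~\ref{prop:no_regret_and_loss_est} takes $\mathcal{L}_\phi$-decision calibration as its hypothesis and yields exactly (ii), the entire proof reduces to establishing (i). So the plan is to reduce linguistic $\phi$-calibration (Definition~\ref{defn:ling_cal}) to a classical calibration statement for an ordinary classifier, then invoke Theorem~1 of \citet{zhao2021calibrating} to pass from classifier $\phi$-calibration to $\mathcal{L}_\phi$-decision calibration for each of the three choices of $\phi$.

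The first step is to reinterpret the reader $f: \mathcal{X} \times \mathcal{Z} \to \DeltaAns$ as a classifier on the product input space $\mathcal{X}' := \mathcal{X} \times \mathcal{Z}$. Under the joint $p(q,x,y,z) = p(q,x,y)\pi_\RL(z \mid q)$, the relevant marginal is $p(x,y,z)$, and $f$ receives input $x' := (x,z)$ drawn from $p(x,z)$. By Definition~\ref{defn:ling_cal}, linguistic $\phi$-calibration of $\pi_\RL$ with respect to $f$ is precisely the statement that $f$, viewed this way, is $\phi$-calibrated on $p(x,y,z)$ in the classical sense of whichever row of Table~\ref{tab:linguistic_calibration_examples} corresponds to $\phi$.

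The second step is to apply Theorem~1 of \citet{zhao2021calibrating}, which establishes the equivalence, for each $\phi \in \{\text{distribution, classwise, confidence}\}$, between classifier $\phi$-calibration and $\mathcal{L}_\phi$-decision calibration, where $\mathcal{L}_\phi$ is the induced loss class (with $\mathcal{L}_{\text{distribution}} = \Lall$, $\mathcal{L}_{\text{confidence}} = \Lr$ as in \eqref{eq:lr_losses}, etc.). Applied with input $x' = (x,z)$ and marginal $p(x,z)$, this theorem immediately upgrades the classical calibration obtained in step one into a decision-calibration statement of the form
\begin{equation*}
\E_{(x,z)\sim p(x,z)}\E_{\widehat y \sim f(x,z)}\bigl[L(\delta(f(x,z)),\widehat y)\bigr] = \E_{(x,z)\sim p(x,z)}\E_{y \sim p(y\mid x,z)}\bigl[L(\delta(f(x,z)), y)\bigr],
\end{equation*}
for every $L \in \mathcal{L}_\phi$ and every $\delta \in \mathcal{D}_{\mathcal{L}_\phi}$. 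Plugging this into Lemma~\ref{prop:no_regret_and_loss_est} (Proposition~1 of \citet{zhao2021calibrating}) then yields the no-regret and accurate-loss-estimation conclusions of (ii).

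\textbf{Main obstacle.} The subtlety, and the only nontrivial step, is that the decision calibration with LM assistance displayed after Definition~\ref{defn:decision_calibration} uses $\E_{y \sim p(y \mid x)}$, whereas the direct application of Zhao et al.'s theorem to input $(x,z)$ naturally produces $\E_{y \sim p(y \mid x,z)}$. Bridging the two is exactly the role of Assumption~\ref{assumption:gt}: the conditional independence $Y \perp (Q,Z) \mid X$ gives $p(y \mid x, z) = p(y \mid x)$ on the support of $p(x,z)$, so the two expectations coincide and the decision-calibration identity matches the form required by our LM-assisted definition. Without this step one would have a reader calibrated against the in-context posterior $p(y \mid x,z)$ rather than the ground-truth $p(y \mid x)$ that drives real-world outcomes, and Lemma~\ref{prop:no_regret_and_loss_est} would not apply as stated. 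Once this identification is made, the remainder of the argument is a direct citation chain, so I would structure the written proof as: (1) restate linguistic $\phi$-calibration as classifier $\phi$-calibration on $p(x,y,z)$, (2) invoke \citet{zhao2021calibrating}'s Theorem~1 to obtain $\mathcal{L}_\phi$-decision calibration under $p(y \mid x,z)$, (3) apply Assumption~\ref{assumption:gt} to rewrite $p(y \mid x,z) = p(y \mid x)$, and (4) conclude via Lemma~\ref{prop:no_regret_and_loss_est}.
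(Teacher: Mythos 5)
Your proposal is correct and follows essentially the same route as the paper: view $f$ as a classifier on $(x,z)$ so linguistic $\phi$-calibration becomes classifier $\phi$-calibration, invoke Theorem~1 of \citet{zhao2021calibrating} for $\mathcal{L}_\phi$-decision calibration, then Proposition~1 for the no-regret and accurate-loss-estimation guarantees, with Assumption~\ref{assumption:gt} used to identify $p(y \mid x, z)$ with $p(y \mid x)$. The only (immaterial) difference is ordering: you apply the conditional independence right after the decision-calibration step, while the paper invokes it at the end to simplify the guarantees of Lemma~\ref{prop:no_regret_and_loss_est}.
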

\begin{proof}
Linguistic $\phi$-calibration is equivalent to $\phi$-calibration of the reader $f$ (cf. \Cref{defn:ling_cal}).
Recall that $p(x, z) = \sum_{q', y'} p(q', x, y', z)$ is the marginal over questions and related LM responses, and let $x' := (x, z)$ and $p(x') := p(x, z)$.
Applying Theorem 1 from \citet{zhao2021calibrating} (equivalence of $\phi$-calibration and $\mathcal{L}_\phi$-decision calibration), we observe that the reader $f$ is $\mathcal{L}_\phi$-decision calibrated according to Definition~\ref{defn:decision_calibration}, Eq.~\ref{eq:decision_calibration}, and therefore the assumption of Lemma~\ref{prop:no_regret_and_loss_est} holds.

We conclude by proving Lemma~\ref{prop:no_regret_and_loss_est}.
Because the reader $f$ is $\mathcal{L}_\phi$-decision calibrated, we may apply Proposition 1 from \citet{zhao2021calibrating}, which provides the no regret and accurate loss estimation guarantees of decision calibration.
Lastly, we invoke the conditional independence statement $p(Y \mid X, Z) = p(Y \mid X)$ (Assumption~\ref{assumption:gt}) to simplify these guarantees, obtaining the guarantees in Lemma~\ref{prop:no_regret_and_loss_est}.

\end{proof}

As an example of this result, consider $\phi = $ confidence calibration. 
Suppose that our optimization of $\Sc_\LC$ leads to linguistic confidence calibration, \textit{i.e.,} a confidence calibrated reader $f$.
Because of the equivalence of confidence calibration and $\Lr$-decision calibration, $f$ is $\Lr$-decision calibrated and we obtain the corresponding guarantees for decision-making.
For example, the no regret guarantee states that the Bayes decision rule $\delta_L$ outperforms decision rules $\delta' \in \drule_{\Lr}$, where $\drule_{\Lr}$ is a set of Bayes decision rules induced by $\Lr$.
Intuitively, because confidence calibration is weaker than distribution calibration, $\drule_{\Lr}$ is a more restricted set than $\druleall$.

\paragraph{Summary: linguistic calibration provides optimal decision-making guarantees.}
If we maximize the linguistic calibration objective, our forecasts are distribution calibrated and therefore we obtain $\Lall$-decision calibration, \textit{i.e.,} the Bayes optimal decision rule outperforms all other decision rules.
Even if our optimization only achieves a weaker notion of calibration such as confidence calibration, we still obtain decision calibration over a more restricted class of loss functions.
This fact also motivates our use of reader ECE as an evaluation metric, because it estimates confidence calibration and can therefore be thought of as a proxy for the quality of downstream user decision-making.
For further discussion, we direct the reader to \citet{zhao2021calibrating} and \citet{zhao2021right}.

\newpage
\section{Training Framework}
\label{app:training_framework}

Linguistic calibration is a general-purpose definition (\Cref{defn:ling_cal}) that is agnostic to the particular design decisions we made in our proof-of-concept training framework.
Nevertheless, this section details the design choices we made, including the regularized objective function we used to train with our neural net--based surrogate reader, and our implementation of PPO \cite{schulman2017proximal}.

\subsection{Regularized Linguistic Calibration Objective}
\label{subsec:app_reward_is_strictly_proper}

\paragraph{Objective regularized to encourage normalized surrogate forecasts.} In our instantiation of linguistic calibration, we use a surrogate forecast $\surrforecast$ which is not guaranteed to be normalized: we may have $\surrforecast \in [0, 1]^{|\mathcal{Y}|} \setminus \DeltaAns$.
We remedy this by adding a regularization term in the reward function which encourages $\surrforecast$ to be normalized, restoring strict propriety (cf.~\S\ref{subsec:linguistic_calibration_training_objective}) even with forecasts in $[0, 1]^{|\mathcal{Y}|}$.
We indeed find in practice that training with this objective results in normalized forecasts.

Our regularized reward function $\Sc_\LC: [0, 1]^{|\mathcal{Y}|} \times \mathcal{Y} \rightarrow \R$ is written as:
\begin{equation}
    \Sc_{\LC}(\surrforecast, y) = \log \surrforecast_y - \lambda \left|1 - \sum_{y' \in \mathcal{Y}} \surrforecast_{y'} \right| + C,
    \label{eq:full_obj_app}
\end{equation}

where $\lambda > 0, C > 0$ are hyperparameters.
$C$ is a small positive constant which we find to reduce reward hacking during PPO.

Below, we prove that with $\lambda > 1$, we indeed have a strictly proper objective.

\subsection{Proof: Regularized Objective is Strictly Proper}

\paragraph{Setup.} In decision-based RL, we wish to optimize an objective that encourages the surrogate forecast $\surrforecast$ to exactly match the ground-truth distribution $p(y \mid x)$, which is supported over all possible answers $\mathcal{Y}$.
$\surrforecast$ is not necessarily in $\DeltaAns$ because it is obtained by several independent neural network predictions falling in the range $[0, 1]$, \textit{i.e.,} $\surrforecast \in [0, 1]^{|\mathcal{Y}|}$.
Therefore, in addition to a log loss term, we also add a regularizer which encourages $\surrforecast$ to fall in the simplex, \textit{i.e.,} be normalized.

We would like our reward function to be strictly proper---its unique maximizer should be the ground-truth answer distribution.
We will prove that our reward function is indeed strictly proper below.
First, we define (strictly) proper scoring rules in our setting, which is \textit{slightly more general than the standard definition} (\textit{e.g.,} in \citet{gneiting2007strictly}) in that the first argument need not be a probability distribution: it may fall in $[0, 1]^{|\mathcal{Y}|} \setminus \DeltaAns$.
As shorthand throughout this proof, we will refer to the ground-truth answer distribution as $p^* := p(y \mid x)$ and the forecast as $\widetilde{p} := \surrforecast$, because this proof is done pointwise for a given (question, LM generation) tuple $(x, z)$.
\begin{definition}[based on \citet{gneiting2007strictly}]
    Let $\Bar{\R}$ be the extended real line $[-\infty, \infty]$. A scoring rule $\Sc: [0, 1]^{|\mathcal{Y}|} \times \mathcal{Y} \rightarrow \Bar{\R}$ is \textbf{proper} if
    \begin{equation}
        \E_{Y \sim p^*}[\Sc(p^*, Y)] \geq \E_{Y \sim p^*}[\Sc(\widetilde{p}, Y)],\quad \forall p^* \in \DeltaAns, \widetilde{p} \in [0, 1]^{|\mathcal{Y}|}.
        \label{eq:proper}
    \end{equation}
    It is \textbf{strictly proper} if Eq.~\ref{eq:proper} holds with equality if and only if $p^* = \widetilde{p}$.
\end{definition}

Equivalently, we may express the strictly proper condition as
\begin{equation}
    \E_{Y \sim p^*} [\Sc(p^*, Y)] > \E_{Y \sim p^*}[\Sc(\widetilde{p}, Y)],\quad \forall p^* \in \DeltaAns, \widetilde{p} \in [0, 1]^{|\mathcal{Y}|}, \widetilde{p} \neq p^*.
    \label{eq:strictly_proper}
\end{equation}

In proving that $R_{\LC}$ is strictly proper, we will also make use of the following standard result:
\begin{lemma}[\textit{e.g.,} \citet{gneiting2007strictly}]
    The logarithmic scoring rule $\Sc_{\log}: \DeltaAns \times \mathcal{Y} \rightarrow \{-\infty\} \cup \R$ is strictly proper. 
    \label{eq:log_loss_strictly_proper}
\end{lemma}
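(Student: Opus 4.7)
The plan is to show strict propriety by recognizing the difference of expected log scores as a Kullback--Leibler divergence and then invoking Gibbs' inequality. Writing $\Sc_{\log}(p, y) := \log p_y$, I would first compute for arbitrary $p^*, \widetilde{p} \in \DeltaAns$ that
\begin{equation*}
\E_{Y \sim p^*}[\Sc_{\log}(p^*, Y)] - \E_{Y \sim p^*}[\Sc_{\log}(\widetilde{p}, Y)] = \sum_{y \in \mathcal{Y}} p^*_y \log \tfrac{p^*_y}{\widetilde{p}_y} = \KL(p^* \| \widetilde{p}),
\end{equation*}
adopting the standard conventions $0 \log 0 = 0$ and $0 \log (0/q) = 0$. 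Strict propriety then reduces to establishing $\KL(p^* \| \widetilde{p}) > 0$ whenever $\widetilde{p} \neq p^*$, with equality at $\widetilde{p} = p^*$.

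Because $\Sc_{\log}$ takes values in $\{-\infty\} \cup \R$, I would handle the degenerate case first: if some $y \in \mathcal{Y}$ has $p^*_y > 0$ but $\widetilde{p}_y = 0$, then $\E_{Y \sim p^*}[\log \widetilde{p}_Y] = -\infty$ while $\E_{Y \sim p^*}[\log p^*_Y] = -H(p^*)$ is finite, so the strict inequality is immediate. Otherwise $\widetilde{p}$ is strictly positive on the support of $p^*$ and both expectations are finite. In this nondegenerate case I would apply Jensen's inequality with the strictly convex function $-\log$ and the random variable $\widetilde{p}_Y / p^*_Y$ (restricted to $\{y : p^*_y > 0\}$) to obtain
\begin{equation*}
\KL(p^* \| \widetilde{p}) = -\sum_{y:\, p^*_y > 0} p^*_y \log \tfrac{\widetilde{p}_y}{p^*_y} \;\geq\; -\log \!\sum_{y:\, p^*_y > 0}\! \widetilde{p}_y \;\geq\; -\log 1 = 0.
\end{equation*}

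To conclude \emph{strict} inequality whenever $\widetilde{p} \neq p^*$, I would track the equality conditions in the two steps above. Strict convexity of $-\log$ forces equality in the Jensen step only if $\widetilde{p}_y / p^*_y$ is $p^*$-a.s.\ constant; combined with $\sum_y p^*_y = 1$, this constant must equal $\sum_{y:\, p^*_y > 0} \widetilde{p}_y$, which is itself at most $1$ with equality iff $\widetilde{p}$ is supported within the support of $p^*$. Chasing both equality conditions collapses the chain to $\widetilde{p}_y = p^*_y$ for every $y \in \mathcal{Y}$, yielding strict propriety.

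The only subtlety is careful bookkeeping: handling the $-\infty$ case separately, and verifying that the two equality conditions jointly pin down $\widetilde{p} = p^*$ as the unique maximizer (rather than just agreement on the support of $p^*$). The mathematical core is the classical Gibbs inequality on a finite alphabet, so I do not anticipate any real technical obstacle beyond this bookkeeping.
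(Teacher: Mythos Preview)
Your proof is correct: it is the standard Gibbs' inequality argument via Jensen applied to $-\log$, with careful handling of the $-\infty$ case and the equality conditions. The paper itself does not prove this lemma at all---it simply cites it as a known result from \citet{gneiting2007strictly} and uses it as a black box inside the proof of the subsequent theorem (strict propriety of the regularized reward $\Sc_\LC$). So there is no paper proof to compare against; you have supplied what the paper omits, and your argument is exactly the classical one the citation points to.
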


Lastly, we will also use the log-sum inequality:
\begin{lemma}[\textit{e.g.,} p. 29, \citet{cover1991elements}]
    \label{lemma:log_sum}
    Let $a_1, \dots, a_n$ and $b_1, \dots, b_n$ be nonnegative numbers. Denote $a := \sum_{i = 1}^n a_i$ and $b := \sum_{i = 1}^n b_i$. Then
    \begin{equation*}
        \sum_{i = 1}^n a_i \log \frac{a_i}{b_i} \geq a \log \frac{a}{b}.
    \end{equation*}
\end{lemma}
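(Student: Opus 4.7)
The plan is to derive Lemma \ref{lemma:log_sum} by the textbook route: apply Jensen's inequality to the convex function $f(x) = x \log x$ on $[0, \infty)$, with the standard convention $0 \log 0 := 0$. An equivalent proof uses the elementary bound $\log t \leq t - 1$ and then the identity $\sum_i b_i(a/b) = a$, but the Jensen route is cleaner and mirrors how the lemma is typically stated.

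First I would dispose of degenerate cases so the main computation can assume $b > 0$ and $b_i > 0$ whenever $a_i > 0$. If $b = 0$, then every $b_i = 0$; if additionally $a = 0$ both sides are $0$, and otherwise the right-hand side $a \log(a/b)$ is $+\infty$ and the claim is vacuous. For any single index with $b_i = 0 < a_i$, the term $a_i \log(a_i / b_i)$ on the left is $+\infty$, so the inequality is trivial. Indices with $a_i = 0$ contribute $0$ on the left by the $0 \log 0 := 0$ convention and may be dropped. With these reductions, I would then invoke Jensen: set $w_i := b_i / b$ (so $w_i \geq 0$ and $\sum_i w_i = 1$) and $x_i := a_i / b_i$; convexity of $f$ gives
\begin{equation*}
f\!\left(\sum_i w_i x_i\right) \;\leq\; \sum_i w_i f(x_i).
\end{equation*}
The argument on the left evaluates to $\sum_i (b_i / b)(a_i / b_i) = a / b$, so the inequality becomes $(a/b)\log(a/b) \leq (1/b)\sum_i a_i \log(a_i/b_i)$, and multiplying through by $b > 0$ yields exactly $a \log(a/b) \leq \sum_i a_i \log(a_i/b_i)$.

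The hard part is really only bookkeeping: the $0 \log 0 := 0$ and $+\infty$ conventions must be stated up front and handled consistently with how the lemma is invoked in the strict-propriety argument of \S\ref{subsec:app_reward_is_strictly_proper}. Strictness, should one need it, follows from strict convexity of $f$ on $(0, \infty)$ and gives equality iff $a_i / b_i$ is constant across indices with $b_i > 0$; since the lemma as stated is non-strict, I would omit this observation from the main proof.
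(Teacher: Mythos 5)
Your proof is correct and is the standard Jensen's-inequality argument for the log-sum inequality, which is exactly the proof in the reference the paper cites (Cover and Thomas); the paper itself states this lemma without proof, deferring to that textbook. The only quibble is the phrase ``the claim is vacuous'' when $b=0<a$ --- the right side being $+\infty$ does impose a requirement, but your next sentence (some $a_i>0$ with $b_i=0$ forces a $+\infty$ term on the left) already covers that case, so nothing is missing.
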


We will now prove that our regularized training framework objective $\Sc_\LC$ is a strictly proper scoring rule using the condition for strict propriety in Eq.~\ref{eq:strictly_proper}.
\begin{theorem}
For any $C \in \R$ and with $\lambda > 1$, the reward function $\Sc_{\LC}(\widetilde{p}, y)$ from Eq.~\ref{eq:full_obj_app} is strictly proper.
\end{theorem}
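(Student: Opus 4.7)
The plan is to verify the strict propriety condition in Eq.~\ref{eq:strictly_proper} directly. Fix any $p^\ast \in \Delta^{|\mathcal{Y}|}$ and any $\widetilde{p} \in [0,1]^{|\mathcal{Y}|}$ with $\widetilde{p} \neq p^\ast$, and write $s := \sum_{y' \in \mathcal{Y}} \widetilde{p}_{y'}$. Since $\sum_y p^\ast_y = 1$, the regularizer vanishes at $p^\ast$, so
\[
\mathbb{E}_{Y \sim p^\ast}[R_{\LC}(p^\ast, Y)] - \mathbb{E}_{Y \sim p^\ast}[R_{\LC}(\widetilde{p}, Y)] = \sum_{y} p^\ast_y \log \frac{p^\ast_y}{\widetilde{p}_y} + \lambda\,|1 - s|,
\]
and the goal is to show this quantity is strictly positive.

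I would split into two cases based on whether the forecast already lives in the simplex. \textbf{Case 1 ($s = 1$):} then $\widetilde{p} \in \Delta^{|\mathcal{Y}|}$ and $\widetilde{p} \neq p^\ast$, so strict propriety of the logarithmic scoring rule (Lemma~\ref{eq:log_loss_strictly_proper}) gives $\sum_y p^\ast_y \log(p^\ast_y/\widetilde{p}_y) > 0$, and the regularizer contributes $0$. (Degenerate sub-case: if $\widetilde{p}_y = 0$ while $p^\ast_y > 0$, the expectation under $\widetilde{p}$ is $-\infty$ and the inequality is immediate.) \textbf{Case 2 ($s \neq 1$):} apply the log-sum inequality (Lemma~\ref{lemma:log_sum}) with $a_y = p^\ast_y$ and $b_y = \widetilde{p}_y$ to obtain
\[
\sum_y p^\ast_y \log \frac{p^\ast_y}{\widetilde{p}_y} \;\geq\; \Bigl(\sum_y p^\ast_y\Bigr) \log \frac{\sum_y p^\ast_y}{\sum_y \widetilde{p}_y} \;=\; -\log s.
\]
It then suffices to show that $\phi(s) := -\log s + \lambda\,|1-s| > 0$ for all $s \in [0,\infty) \setminus \{1\}$.

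The remaining step is a one-dimensional calculus check. For $s > 1$, the function $s \mapsto -\log s + \lambda(s-1)$ vanishes at $s=1$ and has derivative $-1/s + \lambda \geq -1 + \lambda > 0$ (using $\lambda > 1$), so it is strictly increasing on $[1,\infty)$ and hence strictly positive for $s > 1$. For $0 < s < 1$, the function $s \mapsto -\log s + \lambda(1-s)$ vanishes at $s=1$ and has derivative $-1/s - \lambda < -1 - \lambda < 0$, so it is strictly decreasing on $(0,1]$ and hence strictly positive for $s < 1$; the boundary $s=0$ gives $\phi(s) = +\infty$. Combining the two cases yields strict propriety for every $C \in \mathbb{R}$ (since $C$ cancels in the difference).

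\textbf{Main obstacle.} The only non-routine part is recognizing that one must pass through the log-sum inequality to decouple the shape of $\widetilde{p}$ from its total mass $s$: once the question is reduced to the scalar inequality $\phi(s) > 0$ for $s \neq 1$, the hypothesis $\lambda > 1$ is exactly what makes $\phi'$ never change sign away from $s=1$, which is where the threshold on $\lambda$ comes from. Handling the boundary cases where some $\widetilde{p}_y = 0$ (so that the log term is $-\infty$) requires a brief separate remark but is otherwise trivial.
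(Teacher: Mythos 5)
Your proposal is correct and follows essentially the same route as the paper's proof: the same reduction of the difference to $\sum_y p^\ast_y \log(p^\ast_y/\widetilde{p}_y) + \lambda|1-\1^\top\widetilde{p}|$, the same case split on whether $\widetilde{p}$ is normalized (using strict propriety of the log score when it is), the same log-sum-inequality lower bound $-\log s$, and the same one-dimensional analysis where $\lambda>1$ enters. The only differences are cosmetic (your monotonicity argument on $(0,1]$ where the paper simply notes both terms are positive, and your explicit remark on the $\widetilde{p}_y=0$ degenerate sub-case).
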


\begin{proof}
Take an arbitrary $p^* \in \DeltaAns$ and $\widetilde{p} \in [0, 1]^{|\mathcal{Y}|}$ such that $\widetilde{p} \neq p^*$.
We need to show that
\begin{equation}
    \E_{Y \sim p^*} [\Sc_{\LC}(p^*, Y)] - \E_{Y \sim p^*}[\Sc_{\LC}(\widetilde{p}, Y)] > 0.
    \label{eq:pf1}
\end{equation}

All expectations for the rest of the proof are taken with respect to $p^*$, so we omit the subscript $Y \sim p^*$.
As in the main text, we identify the space of answers $\mathcal{Y}$ with the index set $[|\mathcal{Y}|]$, and therefore use notation $p_y$ to refer to the $y$-th index of the vector $p$.
Lastly, we write $\sum_{y' \in \mathcal{Y}} \widetilde{p}_{y'}$ as $\1^\top \widetilde{p}$ for brevity.

Plugging in with our reward function, Inequality~\ref{eq:pf1} is equivalent to
\begin{equation*}
    \E \left[\log p^*_Y - \lambda |1 - \1^\top p^*| + C \right] - \E \left[\log \widetilde{p}_Y - \lambda |1 - \1^\top \widetilde{p}| + C \right] > 0.
\end{equation*}
We can simplify the LHS by applying linearity of expectation and simplifying expectations of constants:
\begin{equation*}
    \text{LHS} = \E \left[\log p^*_Y \right] - \lambda |1 - \1^\top p^*| - \E \left[\log \widetilde{p}_Y\right] + \lambda |1 - \1^\top \widetilde{p}|.
\end{equation*}
Note that we assume $p^* \in \DeltaAns$ so $\1^\top p^* = 1$, and we can simplify further to
\begin{equation}
    \text{LHS} = \E \left[\log p^*_Y \right] - \E \left[\log \widetilde{p}_Y\right] + \lambda |1 - \1^\top \widetilde{p}|.
    \label{eq:pf2}
\end{equation}

Now we proceed by cases on whether or not $\widetilde{p} \in \DeltaAns$.
Suppose that indeed $\widetilde{p} \in \DeltaAns$.
Then its normalization term $\lambda |1 - \1^\top \widetilde{p}|$ is also 0, and therefore this inequality condition is precisely the strict propriety condition for the logarithmic scoring rule (including the assumption that both $p^*$ and $\widetilde{p}$ are in the simplex):
\begin{equation*}
    \E \left[\log p^*_Y \right] - \E \left[\log \widetilde{p}_Y\right] > 0,
\end{equation*}
which is true according to Lemma~\ref{eq:log_loss_strictly_proper}, and therefore this case is done.

So now assume that $\widetilde{p} \in [0, 1]^{|\mathcal{Y}|} \setminus \DeltaAns$.
This implies that its normalization term is greater than 0, \textit{i.e.,} $|1 - \1^\top \widetilde{p}| > 0$.
In this case, we can prove using the properties of $\log$ and the log-sum inequality that the LHS expression in Eq.~\ref{eq:pf2} is greater than 0, in order to complete the proof.

First, we use the properties of $\log$ to rewrite the expectations in Eq.~\ref{eq:pf2} into the entropy of $p^*$ plus a relative entropy between $p^*$ and $\widetilde{p}$:
\begin{align*}
    \text{LHS} &= \E \left[\log p^*_Y \right] - \E \left[\log \widetilde{p}_Y\right] + \lambda |1 - \1^\top \widetilde{p}| \\
    &= -\mathbb{H}[p^*] - \E \left[\log \widetilde{p}_Y\right] + \lambda |1 - \1^\top \widetilde{p}| && (\text{definition of entropy functional $\mathbb{H}[\cdot]$}) \\
    &= -\mathbb{H}[p^*] + \mathbb{H}[p^*] + \sum_{y \in \mathcal{Y}} p^*_y \log \frac{p^*_y}{\widetilde{p}_y} + \lambda |1 - \1^\top \widetilde{p}| && (\text{decomposition of cross-entropy)} \\
    &= \sum_{y \in \mathcal{Y}} p^*_y \log \frac{p^*_y}{\widetilde{p}_y} + \lambda |1 - \1^\top \widetilde{p}|,
\end{align*}
where the penultimate step used the decomposition of cross-entropy $=$ entropy $+$ relative entropy.

We cannot immediately apply the information inequality to show that the relative entropy is nonnegative, because $\widetilde{p}$ is not a probability distribution (and therefore the KL Divergence and relative entropy are not the same; see, \textit{e.g.,} p. 90 in \citet{boyd2004convex}).
However, we may use the log-sum inequality to finish the proof. %
Applying Lemma~\ref{lemma:log_sum} using the fact that all elements of $p^*$ and $\widetilde{p}$ are nonnegative, we can lower bound the relative entropy term as
\begin{equation*}
    \sum_{y \in \mathcal{Y}} p^*_y \log \frac{p^*_y}{\widetilde{p}_y} \geq \1^\top p^* \log \frac{\1^\top p^*}{\1^\top \widetilde{p}} = \log \frac{1}{\1^\top \widetilde{p}}.
\end{equation*}
Altogether, substituting normalizer $\mathcal{Z} := \1^\top \widetilde{p}$, we currently have a lower bound of the LHS in Eq.~\ref{eq:pf2} of
\begin{equation}
    \text{LHS} \geq \log \frac{1}{\mathcal{Z}} + \lambda |1 - \mathcal{Z}|.
    \label{eq:lower_bound}
\end{equation}
Inspecting this function for $\lambda > 1$, we can see that it is positive for all values of $\mathcal{Z}$ other than $\mathcal{Z} = 1$.
To be rigorous, we complete the proof with cases on the values of $\mathcal{Z}$. 
Because $\widetilde{p} \in [0, 1]^{|\mathcal{Y}|}$, $\mathcal{Z}$ is nonnegative, and recall that we already covered the case when $\mathcal{Z} = \1^\top \widetilde{p} = 1$ by reduction to strict propriety of the logarithmic scoring rule (Lemma~\ref{eq:log_loss_strictly_proper}).

When $\mathcal{Z} \in [0, 1)$, we immediately see that $\log \frac{1}{\mathcal{Z}} > 0$ and $\lambda |1 - \mathcal{Z}| > 0$ (since $\lambda > 1$ by assumption), implying that the lower bound in Eq.~\ref{eq:lower_bound} is positive as we needed to show.

Lastly, suppose that $\mathcal{Z} \in [1, \infty)$.
In that case, $\lambda |1 - \mathcal{Z}| = \lambda (\mathcal{Z} - 1)$.
Now, note that the derivative of the lower bound in Eq.~\ref{eq:lower_bound} is positive for all $\mathcal{Z} \geq 1$ (recalling that $\lambda > 1$):
\begin{equation*}
    \frac{d}{d\mathcal{Z}} \left[ \log \frac{1}{\mathcal{Z}} + \lambda (\mathcal{Z} - 1) \right] = -\frac{1}{\mathcal{Z}} + \lambda > -\frac{1}{\mathcal{Z}} + 1,
\end{equation*}
\textit{e.g.,} at $\mathcal{Z} = 1$, we have that the derivative is greater than $-\frac{1}{1} + 1 = 0$.
And further, at precisely $\mathcal{Z} = 1$, the lower bound equals $\log \frac{1}{1} + \lambda |1 - 1| = 0$.
This proves that the lower bound is positive for all $\mathcal{Z} \in (1, \infty)$, which implies that the LHS is also positive for $\mathcal{Z} \in (1, \infty)$.
We have covered all values of $\mathcal{Z}$, concluding that when $\lambda > 1$, the reward function in Eq.~\ref{eq:full_obj_app} is strictly proper.
\end{proof}

\subsection{Additional Details on Training Framework}
\label{app:subsec_training_framework_details}

We provide additional details on the training framework below.
For other details, we direct the reader to the codebase.\footnote{\href{https://github.com/tatsu-lab/linguistic_calibration}{https://github.com/tatsu-lab/linguistic\_calibration}}

\paragraph{Training and Validation Splits.}
Our training framework and all baselines use examples from the TriviaQA \cite{JoshiTriviaQA2017} \texttt{unfiltered.nocontext} subset on HuggingFace Datasets \cite{lhoest-etal-2021-datasets}.
These examples are randomly assigned to the following splits:
\begin{itemize}
    \item \textbf{SFT} ($10000$ examples): used for summary distillation and the other SFT baselines (Factuality SFT, Claude Distill).
    \item \textbf{Prompt Validation} ($1000$ examples): used for all ICL-based baselines and to construct ICL examples for the simulated reader, which uses an API-based LLM.
    \item \textbf{Reward Model} ($20000$ examples): used to train surrogate reader for \LC, and binary reward model for Factuality RL baseline.
    \item \textbf{PPO} ($40000$ examples): used for PPO with \LC RL and Factuality RL methods.
    \item \textbf{PPO Validation} ($1000$ examples): during PPO, we evaluate reward model rewards on this split and store checkpoints every 20 steps. 
    \item \textbf{Validation} ($1000$ examples): used for tuning evaluation temperature and model selection for RL methods (described below).
\end{itemize}

\paragraph{Summary distillation details.}
For each SFT dataset query $q$, we use the (non-confidence) ICL baseline to sample eight long-form generations $\{z_i\}_{i = 1}^8 \sim \pi_{\text{ICL}}(z \mid q)$ (temperature 0.7) that are then summarized using a Claude few-shot prompt $s \sim \text{Summarize}(s \mid z_1, \dots, z_8)$ (temperature 0.3).
We use the dataset of $(q, s)$ pairs to finetune the base Llama 2 7B model, obtaining $\pi_\text{SFT}$.
The same temperature settings are used for obtaining Summary ICL context examples, and for the Direct Summary Eval oracle baseline.

\paragraph{Surrogate and reward model details.}
Following our evaluation protocol for user forecasts (\S\ref{subsec:exp_setup}), we specify prompt instructions so that LLM simulated forecasts $\f_{\text{LLM}}(x, z)$ used in Algorithm~\ref{alg:LC} avoid the use of background knowledge.
Dataset construction for the surrogate reader models ForecastProbs and ExtractAnswers is described in Algorithm~\ref{alg:LC}.

For \LC RL, we use the following training details:
\begin{itemize}
    \item \textbf{ForecastProbs}: a Llama 2 7B--based model initialized with the \LC SFT parameters and trained using a binary cross-entropy loss on prompts from the Reward Model split.
    \item \textbf{ExtractAnswers}: a RedPajama 3B--based model initialized from the pretrained model weights and trained with causal language modeling on targets formatted as a delimited list of possible answers, \textit{e.g.,} ``\texttt{Sushi; Sashimi; Nigiri}''. We used a 3B model instead of Llama 2 7B due to computational constraints, \textit{i.e.,} to fit all standard PPO models such as the policy, reference policy, value model, and reward models into GPU memory on an 8x80GB A100 node.
\end{itemize}
For Factuality RL, we use the following training details:
\begin{itemize}
    \item \textbf{Reward Model}: a Llama 2 7B--based model initialized with the Factuality SFT parameters and trained using binary cross-entropy loss on prompts from the Reward Model split, with binary correctness labels from Claude 2.0. In producing binary correctness labels for a given (question, ground-truth answer, generation) tuple $(x, y, z)$, Claude is given access to the ground-truth answer. Therefore, determining binary correctness is a simple extractive task that we find Claude performs with very high accuracy.
\end{itemize}
We tune all reward models over a wide range of learning rates; see codebase for details on hyperparameters.

\paragraph{PPO implementation and training.}
We use a standard implementation of PPO from \citet{dubois2023alpacafarm} and train with 8 80GB A100 GPUs using Flash Attention 2 \cite{dao2022flashattention, dao2023flashattention2} and PyTorch FSDP \cite{zhao2023pytorch}.
For both PPO-based methods (\LC RL and Factuality RL), we use default hyperparameters other than the following:
\begin{itemize}
    \item We use a larger step batch size (512) with one optimization epoch per step for better training stability.
    \item We shorten \texttt{query\_len} to 128 tokens, because our PPO inputs were essentially all under this length.
    \item We use a slightly lower temperature during the rollout phase ($0.7$ instead of $1.0$). 
    \item We use the \texttt{paged\_adamw\_8bit} optimizer \cite{dettmers20228bit} due to computational constraints. We also use this optimizer for SFT and reward model training for consistency, and find it had no negative impact on validation performance compared to AdamW \cite{loshchilov2018decoupled}.
    \item We train for $1500$ PPO steps.
\end{itemize}
We tune the KL penalty and learning rate of both PPO methods across a wide range, ultimately finding that \texttt{kl\_coef} of $0.1$ and learning rate of 1e-5 works best for both methods.

For the \LC RL objective, we find that $\lambda = 5, C = 5$ works well to enforce normalization of downstream forecasts and prevent reward hacking.
In the log-loss term, we clip the probability of the ground-truth answer at $\epsilon = $1e-4 for numerical stability.

\paragraph{Model selection for RL methods.}
We perform model selection for PPO-based methods (\LC RL, Factuality RL) by taking the 5 PPO checkpoints with the highest reward model rewards on the PPO Validation split, evaluating them on the Validation split with the API-based LLM simulated reader, and selecting the checkpoint with the best ECE.

\clearpage

\section{Evaluation Framework}
\label{app:sec_eval_details}

\paragraph{Test Dataset Splits.}
We use the following test dataset splits.
\begin{itemize}
    \item \textbf{TriviaQA} \cite{JoshiTriviaQA2017} (11313 examples): take the validation split from the \texttt{unfiltered.nocontext} subset from HuggingFace Datasets \cite{wolf-etal-2020-transformers, lhoest-etal-2021-datasets}.
    \item \textbf{Jeopardy} \cite{kaggle2020jeopardy} (10638 examples): take a random sample of the \texttt{train} subset from HuggingFace Datasets. These questions are initially written in the indirect Jeopardy-style format (``For the last 8 years of his life, Galileo was under house arrest for espousing this man's theory''), so we use Claude 2.0 to convert them into direct questions $x$.
    \item \textbf{SciQ} \cite{welbl2017crowdsourcing} (13679 QA examples): combine train, validation, and test splits from \texttt{allenai/sciq} on HuggingFace Datasets. SciQ is a dataset of crowdsourced science exam questions in biology, chemistry, earth science, and physics, spanning elementary level to college introductory material.
    \item \textbf{BioASQ} \cite{bioasq} (1515 QA examples): we use the 2024 BioASQ12 Task B dataset, which is a biomedical QA dataset annotated by a team of experts. Specifically, we use their 1515 “factoid” examples, which have a short-answer gold label (e.g., “proprotein convertase subtilisin/kex in type 9”). The dataset is available at \href{http://participants-area.bioasq.org/datasets}{http://participants-area.bioasq.org/datasets}.
    \item \textbf{Person Biography Generation} \cite{min-etal-2023-factscore} (500 entities): use the \texttt{unlabeled} split of Wikipedia entities from the FactScore dataset.
\end{itemize}

TriviaQA, Jeopardy, SciQ, and BioASQ are question-answering datasets which we repurpose for long-form evaluation.
Specifically, for these question-answering datasets, we use an API-based LLM (Claude 2.0) to convert questions $x$ into open-ended queries $q$ that allow us to sample long-form generations $z \sim \pi(z \mid q)$ on the topic of the question $x$ (as we do at training time, cf. \S\ref{subsec:decision_task_data}).
For Person Biography Generation, we construct open-ended queries using the fixed template ``Write a paragraph bio about \{entity\}.''.

\paragraph{Confidence intervals.}
We report bootstrap 0.95 confidence intervals in all frontier plots and tabular results, except for the BioASQ dataset. 
BioASQ is a relatively small dataset (1515 examples; it is expert-written) and therefore we report Student's $t$ distribution 0.95 confidence intervals over 10 evaluation seeds.
In reliability diagrams for BioASQ, we report the reliability diagram computed over samples pooled across all seeds in blue, and per-seed reliability diagrams in gray.

\subsection{Simulated Evaluation}

\paragraph{Simulated reader.}
Simulated forecasts $f(x, z)$ are a distribution over possible answers to the question $x$, based on long-form LM generations $z$.
All simulated forecasts in this paper are formed using an API-based LLM simulated reader $f_\LLM$.
We use Claude due to compute constraints, but other LLMs such as GPT-4 are well-capable of producing high-quality forecasts.
These simulated forecasts are used in two parts of our training and evaluation frameworks:
\begin{itemize}
    \item \textbf{Training framework:} simulated forecasts are used as ground-truth targets for training the surrogate reader $\widetilde{f}$ (cf. Algorithm~\ref{alg:LC}). Then the surrogate reader is used during PPO, enabling efficient training by avoiding calls to API-based LLMs.
    \item \textbf{Evaluation framework:} in our question-answering evaluations, we sample a held-out question-answer pair $(x, y) \sim p(x, y)$, convert $x$ into open-ended query $q$, and evaluate the accuracy and ECE of the simulated forecast $f_\LLM(x, z)$ for $z \sim \pi(z \mid q)$ sampled from a variety of LMs such as \LC RL and Factuality RL. 
\end{itemize}
In both cases, we obtain simulated forecasts by decomposing reading into two prompts, one for ExtractAnswers and one for ForecastProbs, following the description in \S\ref{sec:method}.
This is just one way of implementing a reader, and there are likely other reasonable ways to obtain high-quality probabilistic predictions conditioned on long-form text.

In order to compute accuracy and ECE for the evaluation framework, we need to obtain the confidence and correctness of each forecast.
We direct the reader to our codebase\footnote{\href{https://github.com/tatsu-lab/linguistic_calibration}{https://github.com/tatsu-lab/linguistic\_calibration}} for all prompts.

\paragraph{Accuracy and ECE for confidence-based methods.}
The confidence of a forecast is obtained by simply taking its argmax probability $\max_{y \in \mathcal{Y}} f(x, z)_y$.
Recall that for question-answering evaluation datasets, we prompt an API-based LLM to perform the ForecastProbs operation.
We find a zero-shot prompt is effective because confidence statements in $z$ associated with the question $x$ are usually numerical (and even when they are occasionally linguistic, Claude 2.0 produces reasonable confidence estimations).
The correctness is obtained by determining whether the top answer in the forecast is correct, \textit{i.e.,} $\mathbbm{1}[y = \argmax_{y' \in \mathcal{Y}} f(x, z)_{y'}]$.
In practice, we perform this check by prompting Claude with the question $x$, ground-truth answer choice $y$, and the argmax answer choice, and asking the LLM to decide whether the two answers are semantically equivalent, using a few-shot prompt.
For TriviaQA and Jeopardy, which are general-domain knowledge tasks, \texttt{claude-2.0} can perform this semantic equivalence check reliably.
For the domain-specific tasks SciQ and BioASQ, we use the more powerful \texttt{claude-3-opus-20240229} to perform this semantic equivalence check, because the ground-truth and argmax answers are more complicated; \textit{e.g.,} $y = $ ``hormone receptor-positive, human epidermal growth factor receptor 2-negative'', argmax = ``hormone-receptor-positive, HER2-negative breast cancer patients''.
This semantic equivalence check avoids the false negative issues that string match--based metrics encounter due to paraphrases, which are very common in our open-ended, long-form generation setting.

\paragraph{Accuracy and ECE for non-confidence methods.}
Our evaluation for non-confidence methods closely follows the pipeline for confidence-based methods.
We extract answers using the ExtractAnswers prompt, which accounts for the possibility that models may provide multiple mutually exclusive answers to the question $x$.
We select the ``argmax'' answer by taking the answer which appears first in the paragraph $z$.
Because non-confidence methods are not prompted or trained to produce confidence statements, we fix their confidence values to 1.
Then, we compute ECE for non-confidence methods in closed form as $(1 - \text{Accuracy})$.
This approach is validated by our human evaluations, in which the correlation between a fixed confidence value of 1 and the human-interpreted argmax probability of Factuality RL is 0.993, over 1000+ samples (cf. Table \ref{tab:human_claude_agreement}).

\paragraph{Temperature tuning.} 
We tuned the decoding temperature for generating paragraphs across the range $\{0.1, 0.3, 0.5, 0.7\}$ for the core methods (\LC RL, \LC SFT, Factuality RL, Factuality SFT) on the TriviaQA Validation split with simulated reader evaluation.
We found that temperature $0.3$ uniformly performed the best across methods.
Therefore we defaulted to paragraph decoding temperature $0.3$ for evaluation throughout.
We found that temperature $0.2$ consistently worked well for simulated reader operations ExtractAnswers, ForecastProbs, and for checking semantic equivalence.

\paragraph{Other evaluation details.}
Because we prompt our LMs with open-ended queries ($q :=$ ``Write a paragraph about $\{x\}$'') it is rarely possible that a paragraph $z$ will not provide information that answers question $x$. 
We account for this during evaluation by allowing the ExtractAnswers prompt to return an empty list if $z$ does not provide any plausible answers to the question.
For any example with an empty extracted list, we consider the accuracy to be 0 and the confidence to be 1, which is the most conservative possible penalty for both accuracy and ECE.
Because we significantly outperform baselines finetuned with RL for factuality, we account for the possibility of better responsiveness to the prompt (and therefore fewer empty ExtractAnswers lists) explaining our improvements.

\subsection{FactScore-Based Evaluation Metric}
\label{app:factscore_metric}

We use an automated FactScore-based \cite{min-etal-2023-factscore} metric using an API-based LLM (Claude 2.0 in our case) to test the accuracy and calibration of generated biographies $z$ at the per-claim level, probing for the quality of confidences throughout long-form generations. 
\S\ref{subsec:exp_biography_generation} describes this metric in detail.
Prompts and code for this metric are available in our codebase.

\paragraph{Forecasting confidences.} As part of this metric, for confidence-based methods, we need to forecast probabilities which are used to compute ECE.
As compared to the question-answering evaluations, this evaluation considers all claims in the generated paragraph and tends to have a higher proportion of claims with linguistic confidence statements.
Therefore, we use the following procedure to assign probabilities to each claim.
When a claim has a clear numerical uncertainty (\textit{e.g.,} a percentage) this is a simple extractive task which API-based LLMs perform well by default.
For interpreting linguistic uncertainties, we provide the LLM with a mapping between linguistic phrases and probabilities collected in a linguistics human study \cite{wallsten}, and allow it to generalize from this mapping to phrases that are not present in it (Table~\ref{tab:linguistic_confidences_factscore}).
\begin{table}[h]
    \centering
    \begin{tabular}{ccc}
        \toprule
        Linguistic Confidence Phrase & Probability \\
        \midrule
        Almost Impossible & 0.05 \\
        Doubtful & 0.1 \\
        Improbable & 0.1 \\
        Unlikely & 0.15 \\
        Possible & 0.3 \\
        Tossup & 0.5 \\
        Good Chance & 0.65 \\
        Likely & 0.75 \\
        Probable & 0.75 \\
        Almost Certain & 0.95 \\
        \bottomrule
    \end{tabular}
    \vspace{1em}
    \caption{Ground-truth linguistic confidence mapping used in our FactScore probability interpretation prompt, from \citet{wallsten}.}
    \label{tab:linguistic_confidences_factscore}
\end{table}
We believe that developing better strategies to interpret linguistic confidence statements in a manner tailored to a given user population is exciting future work (discussed in \S\ref{sec:discussion}).

\subsection{Human Evaluation}

\paragraph{Task.}
In order to compute our accuracy and ECE metrics using forecasts from human crowdworkers, we asked crowdworkers to provide the following for a given question, ground-truth answer, and long-form generation tuple $(x, y, z)$: 
\begin{itemize}
    \item \textbf{Confidence}: the max probability of their forecast distribution $\max_{y \in \mathcal{Y}} \forecast_y$.
    \item \textbf{Correctness}: a binary indicator of whether or not their top answer was correct, $\mathbbm{1}[y = \argmax_{y' \in \mathcal{Y}} \forecast_{y'}]$.
\end{itemize}
We asked human crowdworkers to interpret the generated paragraphs literally, and provided a couple examples.
These examples were intentionally selected to be unambiguous, \textit{i.e.,} numerical confidences, to ensure we were not biasing towards any particular manner of interpreting linguistic confidences.

\paragraph{Qualification.} We conducted two qualification rounds with our human annotators for the question-answering TriviaQA experiment.
In the first round, we sent a single question-answer pair $(x, y)$ sampled from TriviaQA, along with 3 model outputs (from \LC RL,  \LC SFT, and Factuality RL) to 117 workers, out of which 32 matched the expected output. This example was selected to be unambiguous.
In the second round, we sent the 32 workers who passed the previous test a roughly hour-long test (examples also selected to be unambiguous; the student authors of this paper agreed on all confidence and correctness labels of the hour-long test).
Among these annotators, we took the top 15 performers for the main annotation process.

\paragraph{Main annotation round.} During the main annotation process, we simply selected examples uniformly at random from the TriviaQA test set, including examples which were potentially ambiguous due to linguistic confidence statements.
Overall, we collected annotations for 1008 examples (each with 3 model outputs) from human crowdworkers. 

\paragraph{Quality assurance.} 
We compared crowdworker agreement with Claude 2.0 as a quality assurance measure. 
Five of the annotators had agreement rates which dropped significantly after the qualification round, from perfect performance on the hour-long test to less than 0.40 Cohen's kappa agreement with Claude.
Therefore, we discontinued working with these annotators and removed their labels.

\paragraph{Payment.} To determine payment, the authors of this paper timed our own completion time of the task, which was an average of approximately 1.3 minutes. 
Following best practices for crowdworker payment \cite{Whiting_Hugh_Bernstein_2019, liang2023holistic}, we used a conservative estimate of annotation time of 3 minutes per task and set the per-task payment to \$0.85 amounting to a compensation of \$17 per hour, greater than the California minimum wage of \$16 per hour.

\paragraph{Agreement statistics.}
\label{app:app_human_agreement_stats}

In Table~\ref{tab:interannotator_agreement} we report interannotator agreement (\textit{i.e.,} agreement between crowdworkers) computed over 5\% of the collected samples, for each of the tested models.

\begin{table}[h]
    \centering
    \begin{tabular}{ccc}
        \toprule
        Model & Correlation on Confidence & Cohen's Kappa on Correctness \\
        \midrule
        \LC RL & 0.886 & 0.850 \\
        \LC SFT & 0.719 & 0.842 \\
        Factuality RL & 1.000 & 0.758 \\
        \bottomrule
    \end{tabular}
    \vspace{1em}
    \caption{Interannotator Agreement.}
    \label{tab:interannotator_agreement}
\end{table}

In Table~\ref{tab:human_claude_agreement}, we report human agreement with Claude 2.0 over all examples.

\begin{table}[h]
    \centering
    \begin{tabular}{ccc}
        \toprule
        Model & Correlation on Confidence & Cohen's Kappa on Correctness \\
        \midrule
        \LC RL & 0.626 & 0.739 \\
        \LC SFT & 0.618 & 0.748 \\
        Factuality RL & 0.993 & 0.741 \\
        \bottomrule
    \end{tabular}
    \vspace{1em}
    \caption{Human-Claude 2.0 Agreement.}
    \label{tab:human_claude_agreement}
\end{table}

\end{document}